\documentclass{article}

\usepackage{amsthm}
\PassOptionsToPackage{numbers,compress}{natbib}
\usepackage[preprint]{neurips_2026}
\usepackage{enumitem}
\usepackage{amsthm}
\newtheorem*{claim*}{Claim}
\usepackage{amssymb}

\usepackage[utf8]{inputenc}
\usepackage[T1]{fontenc}
\usepackage{amsmath}
\usepackage{mathtools}
\usepackage{hyperref}
\hypersetup{hypertexnames=false}
\usepackage[capitalize]{cleveref}
\usepackage{url}
\usepackage{booktabs}
\usepackage{multirow}
\usepackage{placeins}
\usepackage{amsfonts}
\usepackage{nicefrac}
\usepackage{microtype}
\usepackage{xcolor}
\usepackage{calc}
\usepackage{csquotes}
\usepackage{doi}

\usepackage{algorithm}
\usepackage{algpseudocode}

\DeclareMathOperator{\sign }{sign}
\DeclareMathOperator{\diag}{diag}

\newcommand{\GF}{\ZZ_2}

\newcommand{\RR}{\mathbb{R}}

\newcommand{\ZZ}{\mathbb{Z}}

\newcommand{\NN}{\mathbb{N}}

\newcommand{\prism}{\mathrm{PRiSM}}
\newcommand{\fastsign}{\textsc{FastSign}}

\usepackage{thmtools}
\usepackage{thm-restate}
\newtheorem{lemma}{Lemma}
\newtheorem{theorem}{Theorem}
\newtheorem{corollary}{Corollary}

\theoremstyle{definition}
\newtheorem{definition}{Definition}
\newtheorem{remark}{Remark}

\theoremstyle{remark}

\newtheorem{example}{Example}
\newtheorem{claim}{Claim}
\Crefname{claim}{Claim}{Claims}
\newenvironment{claimproof}[1][Proof of Claim]{\begin{proof}[#1] }{ \end{proof}}
\title{Weisfeiler--Leman Is Incomplete on Simple Spectrum Graphs, so Canonicalize Them}

\author{%
 Snir Hordan\\
  Faculty of Mathematics\\
  Technion - Israel Institute of Technology
    \And 
  Nadav Dym \\
  Faculty of Mathematics\\
  Technion - Israel Institute of Technology
  \And
  Tim Seppelt\\
  IT University of Copenhagen
}

\begin{document}

\maketitle

\begin{abstract}
Graphs with a simple spectrum admit cubic-time isomorphism testing, yet we prove that for every natural number $k$, the $k$-Weisfeiler--Leman ($k$-WL) test cannot distinguish all non-isomorphic graphs with a simple spectrum. As the WL hierarchy upper-bounds the distinguishing power of widely-used Graph Neural Networks (GNNs), this incompleteness applies to all such GNNs, ruling out completeness for every $k$-WL-aligned GNN family. To close this gap, we introduce PRiSM (Partition, Refine, Solve, Match), the first provably complete canonicalization of simple-spectrum eigendecompositions. PRiSM obtains the completeness guarantee that prior canonicalizations provably lack, and resolves the open problem of achieving complete expressivity on simple-spectrum graphs. When composed with DeepSets or a Transformer, PRiSM achieves universal approximation on simple-spectrum graphs, justifying the use of canonicalized Laplacian positional encodings. Empirically, PRiSM performs comparably to or outperforms existing spectral canonicalizations on graph regression, classification, and expressivity benchmarks.
\end{abstract}

\section{Introduction}
\label{sec:int}
Graph Neural Networks (GNNs) are a leading machine learning paradigm for learning on graph-structured data such as chemical and molecular graphs \cite{duvenaud2015convolutional,gilmer2017neural, corso2024graph}. These networks typically maintain node representations and refine them using their local graph neighborhoods \cite{scarselli2008graph,kipf2017semisupervised, velivckovic2018graph}. To understand GNNs' expressive power, or their ability to distinguish among non-isomorphic graphs, it is common practice to compare the distinguishing power of GNNs with that of the Weisfeiler--Leman (WL) graph isomorphism test hierarchy \cite{xu2018powerful,maron2019provably, Morris_Ritzert_Fey_Hamilton_Lenssen_Rattan_Grohe_2019}. The WL hierarchy \cite{weisfeiler1968reduction} consists of combinatorial graph isomorphism tests that, analogously to GNNs, operate by a color refinement procedure, a procedure naturally aligned with the inner workings of many widely-used GNNs \cite{xu2018powerful,maron2019provably,morris2020weisfeiler, morris2023weisfeiler, pmlr-v235-muller24c}. Consequently, any incompleteness result on the WL hierarchy directly translates into expressivity limitations of the GNN architectures aligned with it.

We focus on understanding the expressivity of WL tests on graphs with a simple spectrum. Graphs with a simple spectrum are graphs whose adjacency matrix has distinct eigenvalues. These graphs are known to be distinguishable in cubic time \cite{LeightonMiller79, neuen_parameterized_2026}, yet it has been unclear whether $k$-WL tests can distinguish among such graphs \cite{hordanspectral}. Our first main result in the manuscript  provides a definitive answer: we prove that no test in the WL hierarchy, despite the hierarchy's ever-increasing expressive power and computational complexity, can distinguish among all graphs with a simple spectrum.

Our proof is constructive. For each $k$, we start from a $3$-regular base graph of sufficiently high treewidth and apply the Cai--F\"urer--Immerman (CFI) \cite{cai_optimal_1992} construction (\cref{fig:cfi-c3}) to obtain a pair of non-isomorphic graphs that $k$-WL cannot distinguish. Unfortunately, these graphs are not simple spectrum. To circumvent this, we introduce a novel mechanism of encoding these CFI adjacency patterns to obtain a pair of simple-spectrum multigraphs, on which $k$-WL still fails. This reveals a fundamental gap as each $k$-WL test runs in $\Theta(kn^{k+1}\log(n))$ time \cite{Grohe_2017}, yet these tests are not able to solve a combinatorial problem that can be solved in cubic time.

Our incompleteness result justifies the common practice of incorporating Laplacian positional encodings to GNNs and Graph Transformers \cite{dwivedi2023benchmarking, pmlr-v235-muller24c}, as these encodings add spectral information that WL tests and corresponding GNNs natively lack. However, naively using positional encoding introduces a sign ambiguity in addition to the inherent permutation ambiguity of the model. To address this ambiguity, sign-invariant models such as SignNet \cite{lim2023sign} and canonicalizations such as OAP and MAP \cite{ma2023laplacian, ma2024a} are used. Unfortunately, the sign invariance of these models comes with restricted expressivity, and such models provably cannot distinguish all simple-spectrum graphs \cite{hordanspectral}. Accordingly, \citet{hordanspectral} conclude that achieving a complete invariant model for simple-spectrum graphs is an open question. 

Our second main result in the manuscript  resolves this open question, by defining  a novel complete canonicalization for eigendecompositions of graphs with a simple spectrum, which we name $\prism$. Composing our canonicalization with a universal set network, such as DeepSets \cite{deepsets} or a Transformer \cite{vaswani2017attention, kim2021transformers}, yields a model that universally approximates any invariant function on simple-spectrum graphs. 

To allow $\prism$ to process arbitrary graphs, we define a heuristic extension of our canonicalization to graphs with repeated eigenvalues. Empirically, we find that $\prism$ outperforms competing canonicalizations and sign and basis invariant models. 

In summary, our main contributions are as follows:

\begin{enumerate}
    \item \textbf{Entire WL hierarchy is incomplete on graphs with simple spectrum} (\cref{thm:main-incomp}). We  prove that for every natural number $k$, there exist non-isomorphic simple-spectrum multigraphs on $\Theta(k)$ nodes that $k$-WL cannot distinguish. In a single step, this proves the incompleteness of every WL-aligned GNN family (PPGN, $k$-GNNs, $k$-WL-aligned Graph Transformers, N2GNNs, d-DRFWL(2) on the class of simple-spectrum graphs, see \cref{cl:wl-nets}).
    \item \textbf{First complete canonicalization} (\cref{thm:s-s-canon}, \cref{thm:uni-bdd-ev}). We introduce $\prism$, the first provably complete sign canonicalization of simple-spectrum eigendecompositions. Composed with DeepSets or a Transformer, $\prism$ achieves universal approximation on simple-spectrum graphs (\cref{thm:uni-bdd-ev}), supplying the completeness guarantee absent from \citet{ma2023laplacian, ma2024a, lim2023sign} and resolving the open problem of universality on simple-spectrum graphs raised by \citet{hordanspectral}.
    \item \textbf{Empirical results} (\cref{sec:exps}). $\prism$ improves ability to distinguish non-isomorphic graph pairs on the BREC \cite{brec} expressivity benchmark, and performs on par or better compared with competing canonicalizations on widely-used molecular datasets (ogbg \cite{hu2020open}, ZINC \cite{irwin2012zinc}, Alchemy \cite{chen2019alchemy}).
\end{enumerate}

The remainder of this paper is organized as follows. \Cref{sec:rw} surveys related work and previous canonicalizations for graph positional encodings. \Cref{sec:pre} introduces the necessary background on graphs with bounded eigenvalue multiplicity, the Weisfeiler--Leman hierarchy, and the CFI construction. \Cref{sec:incomp} presents our incompleteness result, showing that no level of the WL hierarchy suffices for simple-spectrum graphs. \Cref{sec:canon-simple-spec} develops the canonicalization procedure and its completeness and universality guarantees. \Cref{sec:exps} evaluates our method empirically on expressivity and molecular benchmarks.

\section{Related Work}\label{sec:rw}

\paragraph{Weisfeiler--Leman expressivity.}
The WL hierarchy is the primary framework for GNN expressivity: message-passing GNNs are equivalent to $1$-WL \cite{xu2018powerful, Morris_Ritzert_Fey_Hamilton_Lenssen_Rattan_Grohe_2019}, higher-order architectures simulate higher levels (PPGN \cite{maron2019provably}, $k$-GNNs \cite{Morris_Ritzert_Fey_Hamilton_Lenssen_Rattan_Grohe_2019}, N2GNNs \cite{zhou2023distance}, d-DRFWL(2) \cite{feng2023extending}), and Subgraph GNNs are upper-bounded by $3$-WL \cite{frasca2022understanding}.
The Cai--F\"urer--Immerman construction proves the $k$-WL incompleteness on general graphs for every $k$ \cite{cai_optimal_1992,neuen_homomorphism-distinguishing_2024}. We show that this incompleteness exists even for  graphs with a simple spectrum, a class for which graph isomorphism is polynomial-time solvable \cite{LeightonMiller79, babai}.

\paragraph{Spectral positional encodings and ambiguity.}
Laplacian eigenvector positional encodings underlie modern graph transformers \cite{kreuzer2021rethinking, dwivedi2021generalization, dwivedi2023benchmarking}, but are defined only up to sign (simple eigenvalues) or orthogonal basis (repeated eigenvalues) ambiguity. To address these ambiguities, methods such as SignNet, BasisNet, MAP and OAP were introduced \cite{lim2023sign,ma2023laplacian, ma2024a}, but they are incomplete on simple-spectrum graphs. \citet{gai2025homomorphism, zhangexpressive} characterize spectral invariant GNNs via homomorphism counts and prove that spectral invariant GNNs lie strictly between $1$- and $3$-WL. \citet{hordanspectral} prove that spectral invariant GNNs are incomplete on simple-spectrum graphs. As spectral invariant GNN expressivity is bounded by $3$-WL \cite{zhangexpressive}, our hierarchy-wide incompleteness (\cref{thm:main-incomp}) subsumes the spectral GNN incompleteness of \citet{hordanspectral} as an immediate corollary.

\section{Preliminaries}
\label{sec:pre}

\subsection{Graphs with Bounded Eigenvalue Multiplicity}
We use the standard notation  $G=(V,E) $ for combinatorial graphs. All graphs in this manuscript are undirected. A weighted graph is a triplet $g=(V,E,A) $ where $A$ is a symmetric $|V|\times |V| $ matrix with real values, satisfying that  $A_{uv}=0$  if $(u,v) \not \in E$. A \emph{multigraph} is a weighted graph with edge weights $A_{u,v}\in \NN $ for all $(u,v)\in E$. We note that our definitions allow a weighted graph matrix $A$ to have negative entries, and non-zero diagonal entries. Thus the diagonal elements of $A$ can encode node features, and $A$ can represent the graph's Laplacian, or normalized Laplacian, as well as its adjacency matrix. We denote the set of weighted graphs with $n$ vertices by $\mathcal{G}_n $. 

An \emph{eigendecomposition} of a weighted graph $g=(V,E,A)$ in $\mathcal{G}_n $ is an eigendecomposition of the matrix $A$. Namely, it is a  triplet $(U, \vec{\lambda}, \vec{m})$, where $U \in \mathbb{R}^{n \times n}$ is an orthogonal matrix of eigenvectors (rows indexed by nodes, columns by eigenvectors), $\vec{\lambda} = (\lambda_1, \ldots, \lambda_r) \in \mathbb{R}^r$ is the vector of $r \leq n$ distinct eigenvalues with $\lambda_1 < \cdots < \lambda_r$, and $\vec{m} = (m_1, \ldots, m_r) \in [n]^r$ is the \emph{multiplicity vector} with $\sum_{i=1}^r m_i = n$, such that $$A = U \, \mathrm{diag}(\underbrace{\lambda_1, \ldots, \lambda_1}_{m_1}, \ldots, \underbrace{\lambda_r, \ldots, \lambda_r}_{m_r}) \, U^\top.$$

For $k \in \{1, \ldots, n\}$, the \emph{$k$-eigendecomposition} $(U_k, \vec{\lambda}_k, \vec{m}_k)$ consists of the first $k$ columns $U_k \in \mathbb{R}^{n \times k}$, the $r_k \leq k$ distinct eigenvalues among the $k$ smallest, and their multiplicities with $\sum_{i=1}^{r_k} m_i = k$.

An eigendecomposition is not unique: for each eigenvalue $\lambda_i$ with multiplicity $m_i$, any orthogonal transformation $Q_i \in O(m_i)$ applied to the corresponding eigenvectors yields an equally valid eigenbasis. The ambiguity group is therefore $\mathcal{A}(\vec{m}_k) = \prod_{i=1}^{r_k} O(m_i)$. 

When all eigenvalues are distinct ($m_i = 1$ for all $i$), this reduces to $\mathcal{A}(\vec{m}_k) =\{-1,1\}^k \cong \mathbb{Z}_2^k$, which can be viewed as a sign flip $u_i \mapsto -u_i$ ambiguity on each eigenvector, often called \emph{sign ambiguity}. We say such a graph has a \emph{simple spectrum} and write $g \in \mathcal{G}_n^{\mathrm{simple}}$. Graph isomorphism is solvable in cubic time on $\mathcal{G}_n^{\mathrm{simple}}$ \cite{LeightonMiller79, neuen_parameterized_2026}, and Erd\H{o}s--R\'enyi random graphs almost surely lie in $\mathcal{G}_n^{\mathrm{simple}}$ as $n \to \infty$ \cite{tao2017random}.

\subsection{The Weisfeiler--Leman Test Hierarchy}
The subject of our incompleteness results is the Weisfeiler--Leman test hierarchy \cite{weisfeiler1968reduction}. It is a classic combinatorial heuristic for graph isomorphism testing, central to analysing the expressivity of graph neural networks \cite{xu2018powerful}. We give the explicit $1$-dimensional case, also known as \emph{color refinement}, and defer the general $k$-WL formulation to \cref{app:wl-defs}.

\begin{definition}[$1$-WL / Color Refinement]\label{def:1-wl}
Let $g=(V,E,A) \in \mathcal{G}_n$ be a weighted graph. The $1$-WL algorithm maintains a coloring $C^{(t)} : [n] \to \mathcal{C}$ over a discrete color set $\mathcal{C}$. It is initialized by   $C^{(0)}(v) = \mathrm{HASH}\left(A(v,v)\right)$, where $\mathrm{HASH}$ is an injective encoding of the node features $A(v,v) $.  This coloring is then refined  via
\[
    C^{(t+1)}(v) = \mathrm{HASH}\!\left(C^{(t)}(v),\, \left\{\!\!\left\{ \bigl(C^{(t)}(w),\, A_{vw}\bigr) \;\middle|\; w \in [n] \right\}\!\!\right\}\right),
\]
where $\mathrm{HASH}$ is a fixed injective map. The algorithm terminates at the smallest $T$ for which $C^{(T+1)}$ induces the same partition of $[n]$ as $C^{(T)}$, and outputs the graph-level label
\[
    C_{\mathrm{global}}(g) = \mathrm{HASH}\!\left(\left\{\!\!\left\{ C^{(T)}(v) \;\middle|\; v \in [n] \right\}\!\!\right\}\right).
\]

Two graphs $g, g' \in \mathcal{G}_n$ are \emph{$1$-WL indistinguishable}, written $g \sim_{1} g'$, if $C_{\mathrm{global}}(g) = C_{\mathrm{global}}(g')$.
\end{definition}

For each $k \geq 1$ the $k$-dimensional generalisation $k$-WL colors $k$-tuples of vertices and yields a strictly finer indistinguishability relation $\sim_k$, where $g \sim_{k+1} g'$ implies $g \sim_{k} g'$, with the converse failing in general \cite{cai_optimal_1992, grohe2015pebble, Grohe_2017}.

\subsection{CFI graph construction}
We review the Cai--F\"urer--Immerman \cite{cai_optimal_1992} (CFI) graph construction, a standard method for producing pairs of graphs indistinguishable by $k$-WL, and the entrance point for our proof of \cref{thm:main-incomp}.

\begin{definition}[CFI construction \cite{roberson_oddomorphisms_2022}]\label{def:cfi}
Let $G$ be a combinatorial graph and $U \subseteq V(G)$. Write $E(v) \coloneqq \{e \in E(G) : v \in e\}$ for edges incident to~$v$. We define the \emph{CFI graph} $G_{U}$ to be the graph with  vertex set $\{(v, S) \mid v \in V(G),\; S \subseteq E(v),\; |S| \equiv |\{v\} \cap U| \pmod 2\}$. We connect two vertices $(v, S)$ and $(u, T)$ in $G_U$ by an edge if and only if $(u,v) \in E(G)$ and $(u,v) \notin S \mathbin\triangle T$.  We call $G$ the \emph{base graph}, and its vertices the \emph{base vertices}, while vertices $(v,S)$ in $G_U$ we call \emph{fiber vertices}.
\end{definition}

The subset $U$ controls a parity twist: fibers at $v \notin U$ use even-cardinality subsets and at $v \in U$ odd-cardinality subsets. By \cite[Corollary~3.7]{roberson_oddomorphisms_2022}, for any connected $G$, there are exactly two non-isomorphic CFI graphs: $G_0$ ($|U|$ even) and $G_1$ ($|U|$ odd). When $G$ has treewidth at least $k+1$, $G_0$ and $G_1$ are $k$-WL indistinguishable \cite{neuen_homomorphism-distinguishing_2024}, \cite[Corollary V.7]{grohe_logic_2021}.

\begin{example}[CFI construction on $C_3$]\label{ex:cfi-c3}
Consider $G = C_3$ on $\{0, 1, 2\}$. Each vertex has degree~$2$, yielding two fiber vertices per base vertex and six CFI vertices in total. In the trivial lift ($U = \varnothing$), all fibers use even-cardinality subsets. The vertices split into two groups, the $\varnothing$-type and the full-set type, with each forming an independent triangle, so $G_0 \cong 2\,C_3$. In the non-trivial lift ($U = \{0\}$), the parity constraint at vertex~$0$ shifts to odd-cardinality subsets, rewiring edges incident to vertex~$0$ and bridging the two triangles into a single $6$-cycle: $G_{\{0\}} \cong C_6$. See \cref{fig:cfi-c3} for an illustration. We note that $G_0$ and $G_1$ are the classic examples of graphs that are $1$-WL indistinguishable. Full details are given in \cref{app:cfi-c3-details}.
\end{example}

\begin{figure}[!h]
    \centering
    \includegraphics[width=0.95\textwidth]{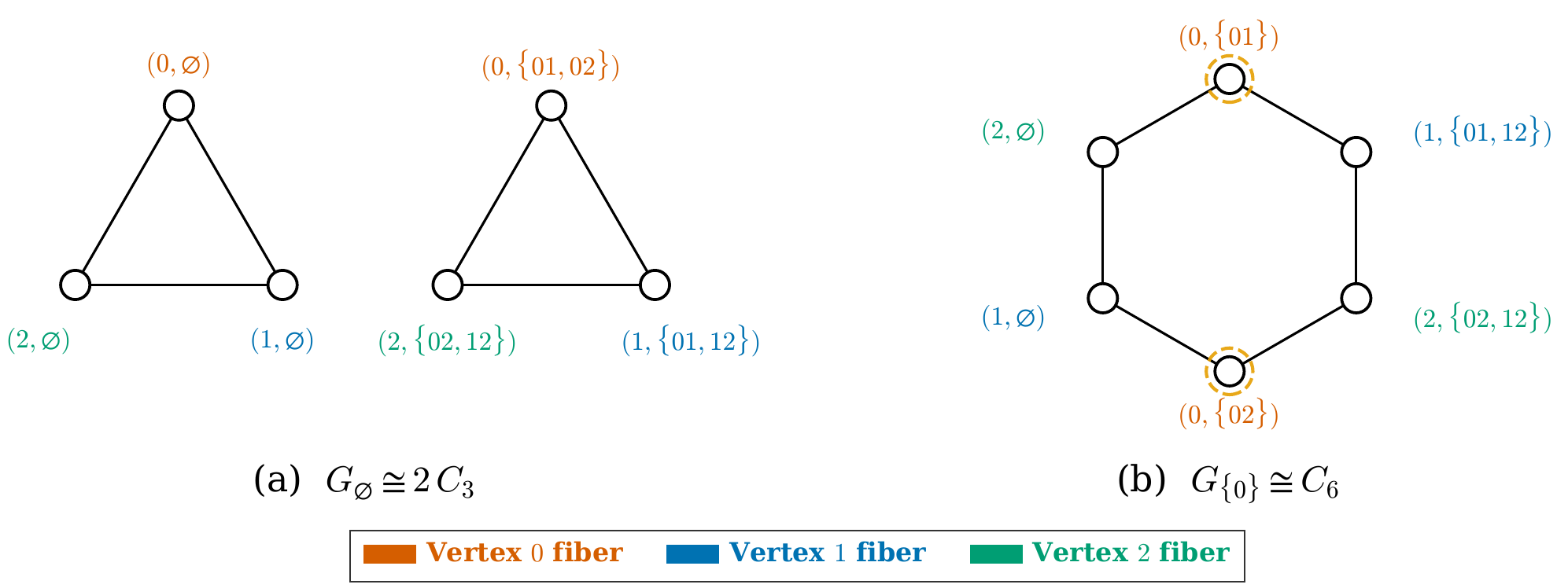}
    \caption{\it The CFI construction applied to the base graph $G = C_3$.
    \textbf{(a)}~Trivial lift $G_\varnothing \cong 2\,C_3$:
    the $(i, \varnothing)$ vertices and $(i, E(i))$ vertices
    each form an independent triangle.
    \textbf{(b)}~Non-trivial lift $G_{\{0\}} \cong C_6$:
    the parity twist at vertex~$0$ bridges the two
    triangles into a single $6$-cycle (dashed circles mark the twisted fiber).
    Colors indicate base-graph vertex correspondence and are for visualization purposes only.}
    \label{fig:cfi-c3}
\end{figure}

\section{Incompleteness of the WL hierarchy on graphs with a simple spectrum}
\label{sec:incomp}
We introduce our main result, which proves that for any $k$-WL test, there exists a simple-spectrum pair of multigraphs that it cannot distinguish.

\begin{restatable}{theorem}{mainIncomp}
    \label{thm:main-incomp}
    For every $k \in \mathbb{N}$, there exist non-isomorphic simple-spectrum multigraphs with $\Theta(k)$ nodes that are not distinguished by the $k$-dimensional Weisfeiler--Leman algorithm.
\end{restatable}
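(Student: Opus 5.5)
Fix $k$ and a $3$-regular connected base graph $G$ on $\Theta(k)$ vertices with treewidth at least $k+1$; expander families give treewidth linear in the number of vertices, so $|V(G)|=\Theta(k)$ suffices. Build the CFI pair $G_0,G_1$ from \cref{def:cfi}. Each fiber has $4$ vertices (subsets of a $3$-set of fixed parity), so both graphs have $4|V(G)|=\Theta(k)$ vertices. By the results quoted after \cref{def:cfi}, $G_0\not\cong G_1$ and $G_0\sim_k G_1$.

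Since these graphs are highly symmetric and not simple spectrum, I will replace each $G_i$ by a multigraph $g_i$ on the same vertex set. Its weight matrix is $A_i = \mathrm{Adj}(G_i) + W$, where $W$ depends only on \emph{base-graph data}, invariantly with respect to the CFI structure. Concretely, fix a labelling $v_1,\dots,v_N$ of $V(G)$ and choose distinct natural weights $w_1,\dots,w_N$ (for instance, rapidly growing). Put diagonal weight $w_j$ on every fiber vertex over $v_j$. Optionally, also give edges between fibers over $u,v$ a multiplicity $c_{uv}$ depending only on the base edge. This keeps the adjacency pattern between fibers (which edges exist) exactly the CFI pattern.

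The $k$-WL indistinguishability is then transferred by a general lemma. If two graphs carry vertex colorings by base vertex, and edge weights that are a function of (base edge, CFI adjacency), then the standard CFI indistinguishability argument still applies. That argument is the bijective pebble game, or equivalently the homomorphism-count characterization of \cite{neuen_homomorphism-distinguishing_2024}, and it goes through for the base-vertex-colored CFI graphs. The reason is that the winning Duplicator strategy uses only isomorphisms that preserve fibers, arising from moving the twist along paths, and these preserve any such coloring and weighting. Since the $k$-WL on weighted graphs sees exactly the vertex colors and the weight classes, $g_0\sim_k g_1$ follows. Non-isomorphism also persists: any isomorphism $g_0\to g_1$ must preserve the distinct diagonal weights, hence map fibers to fibers over the same base vertex. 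Forgetting the weights, it would then give an isomorphism $G_0\cong G_1$, a contradiction.

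\textbf{Main obstacle: simple spectrum.} Each fiber still has internal automorphisms. Within a fiber over $v$, swapping two vertices whose sets differ by complementing on two edges corresponds to composing the twist automorphisms, which are automorphisms of $G_0$ itself. So $g_0$ has nontrivial automorphisms, and these typically force repeated eigenvalues, because the even-parity lifts of a graph with an abelian $\ZZ_2$-group of fiber automorphisms decompose into character spaces. Uniform diagonal weights cannot break this.

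The fix I would try first is to make the encoding asymmetric inside each fiber using base-graph data that is nevertheless CFI-invariant. Put a weight on each fiber vertex $(v,S)$ that depends on $S$ only through a quantity preserved by the twist isomorphisms. If none exists, instead use the fact that the graph is a $\ZZ_2^{E}$-cover. Then decompose $\mathrm{Adj}(g_i)$ over characters $\chi$ of the edge-flip group into the $|V(G)|\times|V(G)|$ signed base matrices $B_\chi = D + \bigl(\chi(e)c_e\bigr)_e$. Choosing $D$ and $c_e$ generic (transcendental-like, realized by huge distinct naturals), each $B_\chi$ has simple spectrum, and the spectra of distinct $B_\chi$ are disjoint. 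The spectra are disjoint because the characteristic polynomials of different signings differ as polynomials in the generic parameters (distinct cycle-sign patterns), so genericity separates their roots. A Schwartz--Zippel style argument over integers then shows some integer choice works.

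Both $g_0$ and $g_1$ decompose this way: they have the same $\chi$'s up to twist, and the twisted characters just change cycle signs. So simple spectrum holds for both. The bulk of the work is verifying this decomposition carefully and checking that the relevant characteristic polynomials are pairwise coprime and squarefree.
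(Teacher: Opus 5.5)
There is a genuine gap, and it sits exactly at the step you flag as the main obstacle. The rest holds up. Every entry of your matrix is a function of the base vertices of the two endpoints, equality, and CFI adjacency. So the initial $k$-WL colour of a tuple in $g_i$ is determined by its atomic type in the base-coloured CFI graph $G_i^*$, and $G_0^*\sim_k G_1^*$ gives $g_0\sim_k g_1$ at once. This is simpler than the paper, whose $A_{G,U}$ has intra-fiber entries depending on $S\triangle T$ and needs an extra refinement round (the shared-neighbour claim). Non-isomorphism is also fine, since the off-diagonal support of your matrix is exactly the CFI adjacency.

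The simple-spectrum step, which the paper settles by prescribing the eigenbasis $\tilde X_{G,U}$ outright, fails as written for two reasons.

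\textbf{The obstacle is not real.} Simple spectrum only forces every automorphism to be an involution: a permutation commuting with $A$ preserves each one-dimensional eigenspace, so it acts by $\pm1$ on it. The automorphism group of $G_i^*$ is the cycle space of $G$, an elementary abelian $2$-group. The paper's simple-spectrum $A_{G,U}$ has all of these automorphisms too. So your claim that weights constant on fibers cannot work is false.

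\textbf{The replacement decomposition does not exist.} $G_U$ is not a $\ZZ_2^{E}$-cover of $G$. Fibers have $4$ vertices, each vertex has two neighbours in every adjacent fiber, and flipping along an edge set outside the cycle space changes the twist, so $\ZZ_2^{E}$ does not act on $G_U$. Dimension also rules it out: $2^{|E|}$ blocks of size $|V|\times|V|$ cannot fit in a $4|V|$-dimensional space. The coprimality and squarefreeness you defer therefore concern the wrong polynomials, and simple spectrum is unproven.

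The step can be repaired for your own weighting: diagonal $w_v$ on the fiber of $v$, and weight $c_e$ on the CFI edges over $e$.
\begin{itemize}
\item Let $\chi^{(v)}_e$ equal $(-1)^{[e\in S]}$ at $(v,S)$ and vanish off the fiber of $v$. By $3$-regularity, fixing membership of $e$ and the parity leaves the other two edges at a vertex balanced. Hence the CFI edges over $e=uv$ send $\mathbf{1}_u\mapsto 2\,\mathbf{1}_v$, $\chi^{(u)}_e\mapsto 2\chi^{(v)}_e$, and $\chi^{(u)}_{e'}\mapsto 0$ for $e'\neq e$.
\item So $\RR^{V(G_U)}$ splits into the fiber-constant space, on which your matrix acts as $\mathrm{diag}(w)+2C$, and, for each $e=uv$, the plane spanned by $\chi^{(u)}_e,\chi^{(v)}_e$, on which it acts as $\begin{pmatrix} w_u & 2c_e\\ 2c_e & w_v\end{pmatrix}$. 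None of this depends on $U$.
\item Simple spectrum then amounts to the discriminant of $\det\bigl(x-\mathrm{diag}(w)-2C\bigr)\prod_{e=uv}\bigl((x-w_u)(x-w_v)-4c_e^2\bigr)$ being a nonzero polynomial in $(w,c)$.
\item For small $c$, the four eigenvalues near $w_v$ are $w_v+\delta_1$, $w_v+\delta_2$, $w_v+\delta_3$ and $w_v+\delta_1+\delta_2+\delta_3$, up to $O(c^4)$. Here $\delta_i=4c_{e_i}^2/(w_v-w_{u_i})$ for the edges $e_i=vu_i$. These are generically distinct, and Schwartz--Zippel then yields natural weights.
\end{itemize}

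These are the paper's own vectors. Its $X$ and $X'$ columns are, up to sign, $\chi^{(u)}_e+\chi^{(v)}_e$ and $\chi^{(u)}_e-\chi^{(v)}_e$, and its $I$ columns span the fiber-constant space. The paper fixes this eigenbasis with arbitrary distinct eigenvalues, getting simple spectrum by construction at the cost of a denser matrix and the extra WL round. Your repaired route keeps the matrix supported on CFI edges and makes the WL transfer trivial, at the cost of this genericity argument.
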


The full proof is rather involved, and is provided in \cref{app:incomp-full}. Our main goal in this section is to describe the main ideas of the proof. 

\begin{proof}[Proof outline]

For every $k\in \NN$, we will construct a pair of simple-spectrum graphs by encoding the adjacency patterns of the CFI graphs, which are constructed with a carefully chosen base graph, into a valid simple-spectrum eigendecomposition via an integral encoding scheme. The novelty over the classical CFI construction lies in Step~2: an integral encoding $\tilde X = [X \mid X' \mid I]$ that turns CFI's $k$-WL-indistinguishable adjacency pattern into an orthogonal eigenvector matrix paired with pairwise distinct integer eigenvalues, simultaneously a valid eigendecomposition and one that inherits the corresponding CFI graphs' $k$-WL indistinguishability. Steps~1 and~3 are otherwise standard CFI machinery.

\textit{Step 1: CFI Graph Construction.}
Our proof begins with a CFI base graph $G$ that is  $3$-regular, has $2n$ vertices, and has treewidth of at least $\Omega(n)$. Such graphs exist by \cite{dvorak_strongly_2016, kolesnik_lower_2014}. By \citet{neuen_homomorphism-distinguishing_2024}, the $\Omega(n)$ treewidth of the base graph $G$ implies that the CFI pair $(G_0, G_1)$ is $\Omega(n)$-WL-indistinguishable. 
By \cite{roberson_oddomorphisms_2022}, the two CFI graphs are non-isomorphic.
The reason for the assumption that $G$ is 3-regular will be apparent later on.

We note that if the CFI graphs $G_0, G_1$ were already simple spectra, the proof would be immediate.  However, we find that the adjacency matrix in these constructions is typically far from having a simple spectrum (see \cref{sub:not_simple} for a formal proof). Therefore, we encode CFI graphs as orthogonal matrices, and use these matrices to define a new multigraph, for which these orthogonal matrices will serve as an eigendecomposition.

\textit{Step 2: Orthogonal Encoding of the CFI Graphs}\label{def:integral-enc}
In this step we construct an \emph{orthogonal integral encoding} of a CFI graph $G_U$ as a matrix $$\tilde X_{G,U} = [\,X \mid X' \mid I\,] \in \mathbb{Z}^{8n\times 8n}.$$ 
The blocks $X$ and $X' $ are indexed by a fiber vertex $(w,S)$ and a base edge $e$, and are
defined via 
\[
\resizebox{\textwidth}{!}{$%
X((w,S),e) = \begin{cases} +1 & e\in E(w)\cap S, \\ -1 & e\in E(w)\setminus S, \\ \phantom{+}0 & e\notin E(w), \end{cases}
\quad
X'((w,S),e) = \begin{cases} +X((w,S),e) & w=u, \\ -X((w,S),e) & w=v, \\ \phantom{+}0 & \text{otherwise,} \end{cases}
\quad
\text{ where } e=(u,v)
$}
\]
The matrix $I_{G,U}$ is indexed by a fiber vertex $(w,S)$ and a base vertex $v$, and is of the general form $I((w,S),v) = W(w, v)$, where the matrix $W$ is defined in \cref{eq:matrix-A} in the appendix. An important property of this matrix is that its first column is the all-ones vector. 

An example of the matrix $\tilde X_{G,U} = [X \mid X' \mid I]$ for the graph $G=C_3$ is provided in the appendix in  \cref{app:integral-c3-details} and  \cref{fig:integral-encoding}.

The columns of the matrix $\tilde X_{G,U}$ are orthogonal. This is shown in \cref{lem:orthogonal} in the appendix, and this result relies on the fact that the base graph $G$ is $3$-regular. To get some intuition for the importance of the regularity assumption, we explain here why the $3$-regularity assumption leads to the matrix $\tilde X_{G,U}$ being a square $8n\times 8n$ dimensional matrix.

In general, the matrix $\tilde X_{G,U}$ has $|V(G_U)|$ rows and $2|E|+|V|$ columns. Since the base graph $G$ is a $3$-regular graph with $|V|=2n$ vertices, it has $|E|=3n$ edges, and so the number of columns of $\tilde X_{G,U}$ will be $2|E|+|V|=8n$. The matrix will also have  $|V(G_U)|=8n $ rows. This is because for each of the $2n$ vertices $v$ in the base graph, there are in total $2^3=8$ sets $S$ in $ E(v) $, as $|E(v)|=3$. Half of these sets $S$ can be paired with $v$ to form a fiber vertex $(v,S)$. Therefore $|V(G_U)|=2n\cdot 4=8n$.

\textit{Step 3: Multigraph Construction.} We now use the orthogonal matrix $ \tilde X_{G,U} $ to form a new \emph{weighted graph} encoded by an adjacency matrix $A_{G,U} = \tilde X_{G,U}\operatorname{diag}(\vec{\lambda})(\tilde X_{G,U})^\top$, where we choose pairwise distinct integer eigenvalues $\lambda_1 < \cdots < \lambda_{8n}$. This yields a symmetric matrix $A_{G,U}$ with integer values and a simple spectrum (see the proof of \cref{thm:main-incomp} in \cref{app:main-incomp}). By assigning a sufficiently large eigenvalue to the constant eigenvector, the entries of $A_{G,U}$ become non-negative integers, producing a proper multigraph.

In conclusion, we obtain a pair of symmetric matrices $A_{G,0}$ and $A_{G,1}$ which encode simple-spectrum multigraphs. These graphs were obtained from non-isomorphic graphs $G_0, G_1$ which are $k$-WL-indistinguishable, and in the appendix we show how this implies that the multigraphs represented by $A_{G,0}$ and $A_{G,1}$ are also $k$-WL-indistinguishable, which then concludes the proof of the theorem.    
\end{proof}
As many GNNs are upper-bounded in their expressivity by a $k$-WL graph isomorphism test, we obtain an immediate corollary.
\begin{corollary}[Incompleteness of GNNs on Graphs with a Simple Spectrum]\label{cl:wl-nets}
    The widely-used Graph Neural Networks PPGN \cite{maron2019provably}, $k$-GNNs \cite{Morris_Ritzert_Fey_Hamilton_Lenssen_Rattan_Grohe_2019}, $k$-WL-aligned Graph Transformers \cite{pmlr-v235-muller24c}, N2GNNs \cite{zhou2023distance}, and d-DRFWL(2) GNNs \cite{feng2023extending} cannot distinguish all simple-spectrum graphs. 
\end{corollary}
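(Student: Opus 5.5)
The plan is to derive the corollary from \cref{thm:main-incomp} by chaining it with the known expressivity upper bounds of each architecture. For each listed family we use the result establishing that it is dominated by some fixed level of the WL hierarchy. PPGN is bounded by $2$-FWL, equivalently $3$-WL \cite{maron2019provably}. $k$-GNNs are bounded by $k$-WL \cite{Morris_Ritzert_Fey_Hamilton_Lenssen_Rattan_Grohe_2019}. The $k$-WL-aligned Graph Transformers are bounded by $k$-WL \cite{pmlr-v235-muller24c}. N2GNNs and d-DRFWL(2) are bounded by $3$-WL, i.e.\ $2$-FWL \cite{zhou2023distance, feng2023extending}. Concretely, for a network family $\mathcal{F}$ with bound $k_{\mathcal{F}}$, the statement we use is this: if two (weighted) graphs satisfy $g\sim_{k_{\mathcal{F}}} g'$, then every $f\in\mathcal{F}$ gives $f(g)=f(g')$.

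Next we apply \cref{thm:main-incomp} with $k=k_{\mathcal{F}}$. It supplies non-isomorphic simple-spectrum multigraphs $A_{G,0},A_{G,1}$ with $A_{G,0}\sim_{k_{\mathcal{F}}}A_{G,1}$. By the upper bound, no member of $\mathcal{F}$ separates them. Hence $\mathcal{F}$ is not complete on $\mathcal{G}^{\mathrm{simple}}_n$ for the corresponding $n=\Theta(k_{\mathcal{F}})$.

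The main subtlety is that the counterexamples are multigraphs, meaning integer-weighted adjacency matrices. The cited bounds are usually stated for simple graphs, possibly with node or edge features. We would address this by treating the edge multiplicity $A_{uv}$ as an edge feature, and the diagonal $A_{vv}$ as a node feature, consistently with \cref{def:1-wl} and the weighted $k$-WL definition in \cref{app:wl-defs}. The simulation arguments in the cited works pass edge features injectively through the initial atomic type of each tuple. They therefore carry over verbatim: the initial colors of $k$-WL already determine the network's initial embeddings, and each layer's update is a function of the multiset that $k$-WL refines.

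If a cited result is only available for unweighted graphs, we fall back to a reduction. We would replace each multi-edge of weight $m$ by $m$ subdivided parallel paths, or use edge-colored graphs. Then we would check that this transformation commutes with $k$-WL indistinguishability up to a constant shift in $k$. Any constant shift is harmless, since \cref{thm:main-incomp} holds for every $k$.
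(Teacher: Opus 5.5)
Your argument is the paper's own. The paper presents the corollary as immediate from Theorem~\ref{thm:main-incomp} together with the fact that each listed architecture is dominated by some fixed level of the WL hierarchy. Your handling of multigraph inputs (edge multiplicities as edge features, diagonal entries as node features) fills in a detail the paper leaves implicit.

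One of your cited bounds needs correcting. N2GNNs are not bounded by $3$-WL: \citet{feng2023extending} only prove that $N^2$-FWL is \emph{at least} as powerful as $3$-WL. As I recall the BREC \cite{brec} results, $N^2$-GNN also separates regular-graph pairs that $3$-WL does not, so the bound fails outright.

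Replace it with a higher fixed level. The $N^2$-FWL update is an instance of $(2,2)$-FWL$^{+}$, which should be simulable by $3$-FWL, i.e.\ $4$-WL. Because Theorem~\ref{thm:main-incomp} holds for every $k$, any fixed bound suffices and your conclusion is unaffected.
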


\section{Canonicalizing eigendecompositions of graphs with a simple spectrum}\label{sec:canon-simple-spec}
We saw in \cref{cl:wl-nets} that even powerful high order GNNs cannot distinguish between all simple-spectrum graphs. This can be seen as theoretical justification for the necessity to integrate the spectral eigendecomposition directly into GNNs to obtain improved expressivity. This is exemplified by the prevalent practice of using spectral positional encodings (e.g. Laplacian Positional Encodings \cite{dwivedi2023benchmarking}) to enhance the expressivity of GNNs and graph transformers \cite{dwivedi2023benchmarking, pmlr-v235-muller24c}, as this spectral encoding adds information which $k$-WL methods cannot capture. However, the spectral encoding contains sign ambiguities (in the simple-spectrum case), and previous sign-invariant methods for handling these ambiguities cannot distinguish between all simple-spectrum graphs \cite{hordanspectral}. 

In this section we provide a solution: a complete canonicalization for the eigendecomposition of simple-spectrum graphs.  We begin by defining such a  canonicalization. 

The input to a  canonicalization of a simple-spectrum eigendecomposition  will be a column-wise orthonormal $n\times k$ matrix $U$, where $n \geq k$. The columns of the matrix represent the top $k$ eigenvectors of a symmetric matrix representing the graph, such as the graph adjacency or Laplacian matrices.  Under the simple-spectrum assumption, the matrix $U$ is uniquely defined up to a sign ambiguity in choosing each of the eigenvectors, and a permutation ambiguity coming from the ambiguity in ordering  the nodes of the graph. Thus, the ambiguity group of $U$ is $ S_n \times \{-1,1\}^k$, where $\{-1,1\}^k $  corresponds to the ambiguity group $\mathcal{A}(\vec{m}_k)$ when all eigenmultiplicities are $1$ (see \cref{sec:pre}).

\begin{definition}\label{def:canon}
Let $k\leq n$ be natural numbers. A canonicalization of a simple-spectrum eigendecomposition $c:\RR^{n \times k}\to \RR^{n \times k} $ is a mapping which is $S_n\times \{-1,1\}^k $ invariant, and satisfies that $c(U) $ is in the orbit of $U$,  for all column-wise orthonormal matrices $U\in \RR^{n \times k} $. 
\end{definition}
A canonicalization $c$ can be composed with any `backbone' model $\psi$ to obtain a model $\psi\circ c $ which is invariant. Such models will be able to distinguish between simple-spectrum graphs, which also leads to universality guarantees, as we will discuss in \cref{thm:uni-bdd-ev} below. Motivated by these observations, we now discuss how to construct  a canonicalization. 

Our canonicalization is inspired by ideas from the classical polynomial time algorithm by \citet{babai} for distinguishing between simple spectrum and bounded eigenmultiplicity graphs  (see also  \cite{spielman}). It is based on four stages, abbreviated $\prism$ = \textbf{P}artition, \textbf{R}efine, \textbf{S}olve, \textbf{M}atch. In the following we give a short overview of these stages, using the illustrative example below which we will explain as we go along. Arrow labels in the example use the short-form stage names $\{$Partition, Refine, Solve, Match$\}$.
{\setlength{\arraycolsep}{2pt}
\newcommand{\stagelbl}[1]{\makebox[\widthof{\text{Partition}}][c]{\text{#1}}}
\begin{align}
U = {} & \begin{pmatrix}
     +1 & +1 & +2 & +1 \\
     -1 & 0 & -1 & -2 \\
     -1 & -1 & -2 & -1 \\
     -1 & 0 & +1 & +2
   \end{pmatrix}
   \xrightarrow{\stagelbl{Partition}}
   \begin{pmatrix}
     +1 & +1 & +2 & +1 \\
     -1 & -1 & -2 & -1  \\ \hline
     -1 & 0 & -1 & -2 \\
     -1 & 0 & +1 & +2
   \end{pmatrix}
   \xrightarrow{\stagelbl{Refine}}
   \begin{pmatrix}
     +1 & +1 & +2 & +1 \\
     -1 & -1 & -2 & -1  \\ \hline
     -1 & 0 & -1 & -2 \\
     -1 & 0 & +1 & +2
   \end{pmatrix} \nonumber\\
\xrightarrow{\stagelbl{Solve}} {} & \begin{pmatrix}
      -1 & -1 & -2 & -1 \\
     +1 & +1 & +2 & +1  \\ \hline
     +1 & 0 & +1 & +2 \\
     +1 & 0 & -1 & -2
   \end{pmatrix}
   \xrightarrow{\stagelbl{Match}}
      \begin{pmatrix}
     +1 & +1 & +2 & +1  \\
     +1 & 0 & +2 & +1 \\
     +1 & 0 & -1 & -2\\
        -1 & -1 & -1 & -2 \\
   \end{pmatrix} = \prism(U)
   \label{eq:example}
\end{align}}

\textbf{Stage~1: \textsc{Partition}} (\cref{alg:abs-value-partition} in the appendix).

In the first stage, we define a (partial) order on the vertices of the graph. Each vertex $v$ is assigned the sign-invariant signature $\sigma(v)=(|U_{v,1}|,\ldots,|U_{v,k}|)$, and then the vertices are sorted lexicographically according to this signature. When each vertex is assigned a unique signature, this defines a unique ordering on the vertices. In the general case, this only groups vertices into classes $\{C_\ell\}$ on which $|U_i|=|U_j|$ for all $i,j\in C_\ell$. Thus, there is a canonical ordering between the different $C_\ell$, but not within each class.

In our example, after applying the partial ordering according to $\sigma$ we remain with a matrix whose first two  rows share absolute-value signature $(1,1,2,1)$ while the remaining rows share the absolute value signature $(1,0,1,2)$, thus yielding a partition of the nodes into two classes $C_1=\{1,2\}, C_2=\{3,4\}$.

\textbf{Stage~2: \textsc{Refine}} (\cref{alg:balanced-refine} in the appendix).
In the second stage, we refine the partition from Stage~1 by running a spectral variant of the Weisfeiler--Leman (WL) color refinement procedure \cite{zhangexpressive}: each node $v$ is initialized with the absolute-value signature $h_v^{(0)}=U_v\odot U_v$ (where $\odot$ represents elementwise multiplication, which is invariant to sign ambiguities), and colors are then iteratively refined using the multiset of edge features $\{(h_u^{(t)},\,U_v\odot U_u):u\in[n]\}$ until the partition stabilizes (\cref{alg:balanced-refine}). At the fixed point, every two nodes $i,j$ in the same class share the same multiset of pairwise products $\{U_i\odot U_k:k\in C_\ell\}=\{U_j\odot U_k:k\in C_\ell\}$, which is the structural property we will use in the canonicalization proof. In our running example, this stage does not further refine the initial partition from Stage~1.

\textbf{Stage~3: \textsc{Solve}} (\cref{alg:sign-solve} in the appendix). The Solve step determines the signs of the eigenvectors by requiring that each node partition $C$ contains a node whose corresponding row has only non-negative entries. In our example, this requirement on $C_1$ and $C_2$ simultaneously forces $s_1=s_2=s_3=s_4=-1$, the unique sign choice that yields a non-negative row in both blocks.

In general, we will show in \cref{apdx:canon-simple} that the constraints that each $C_k$ contains a row with non-negative entries can be written as linear equations in the field $\ZZ_2=\{0,1\}$, and  that (after eliminating linearly dependent equations) these equations uniquely determine the sign of all eigenvectors\footnote{This is true when the graph has no automorphisms. When the graph has automorphisms, then the equations have a unique solution up to these automorphisms. In either case, we obtain a valid canonicalization as discussed in the appendix.}.

\textbf{Stage~4: \textsc{Match}} Once the sign of the eigenvectors is determined, we can apply standard lexicographical sorting to the rows of the matrix to fix the vertex permutation. This concludes the canonicalization process.

This procedure induces a general canonicalization of eigendecompositions of any simple-spectrum graph. We call the resulting canonicalization $\prism$, short for \textbf{P}artition \textbf{R}efine \textbf{S}olve \textbf{M}atch, naming the four pipeline stages above (the final \textsc{Match} stage uses lex-sort to match each row to its canonical position). In the appendix (\cref{apdx:canon-simple}) we give the full formal construction (assembling per-class parity-check matrices, embedding them in a global linear system over $\ZZ_2$, and lex-minimizing over its solution coset) and prove invariance to vertex permutations and sign flips, encapsulated by the theorem below.

\begin{theorem}[Simple Spectrum Canonicalization] \label{thm:s-s-canon}
   For every natural $k\leq n$, the base procedure $\prism$ of \cref{alg:canon-one-full}, applied to a $k$-eigenvector matrix $U\in\RR^{n\times k}$ of a simple-spectrum graph, is a valid canonicalization in the sense of \cref{def:canon}.
\end{theorem}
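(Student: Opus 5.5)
To prove \cref{thm:s-s-canon} we must establish two properties. First, $\prism(U)$ lies in the $S_n\times\{-1,1\}^k$ orbit of $U$. Second, $\prism(U)=\prism(PUD)$ for every permutation matrix $P$ and every diagonal sign matrix $D$.

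The orbit property is immediate from the construction. Partition and Refine only reorder rows. Solve multiplies columns by a sign vector $s$. Match sorts rows again. Hence the output has the form $P'UD_s$.

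The work is in invariance, and I would proceed stage by stage, tracking equivariance.

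\textbf{Partition and Refine.} The signatures $\sigma(v)$ and the refinement colors $h_v^{(t)}$ depend on $U$ only through the entrywise products $U_v\odot U_u$. These products are invariant under $D$, since $(U_vD)\odot(U_uD)=U_v\odot U_u$. Under $P$ they are simply relabeled. By induction on $t$, the ordered sequence of classes $(C_1,\dots,C_m)$ computed from $PUD$ is therefore the image under $P$ of the sequence computed from $U$. The class order itself is canonical, because classes are sorted by color, and that order is unchanged.

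\textbf{Solve.} I would write the constraints of Solve over $\ZZ_2$. Encode signs as $s\in\ZZ_2^k$, with $s_j=1$ meaning the $j$-th column is flipped. The requirement that row $v$ become nonnegative says $s_j=[U_{vj}<0]$ for each $j$ in the support of $U_v$.

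Within a class $C$, every row has the same support. For two rows $i,i'\in C$, the differences $s$-pattern$(i)\oplus$pattern$(i')$ give parity equations. The intended system is the set of sign vectors $s$ for which \emph{every} class contains a nonnegative row.

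The key claim is the following. The set $\mathcal S(U)$ of admissible sign vectors is nonempty, and it transforms as $\mathcal S(UD)=\mathcal S(U)\oplus d$, where $d$ encodes $D$. The second part is clear. Nonemptiness uses the refinement fixed point: the rows of the matrix after applying a candidate $s$ remain a consistent system. Concretely, pick a reference row $r_\ell\in C_\ell$ for each class. I would then use the fixed-point property $\{U_i\odot U_k:k\in C_\ell\}=\{U_j\odot U_k:k\in C_\ell\}$ to show that the solution set is an affine coset $s_0+K$. Here $K$ is a subspace, and its solution set does not depend on which reference rows were chosen. This is where I expect the main obstacle.

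Defining $\mathcal S$ as a union over choices of reference rows is not obviously an affine space. I would first try to prove that for a fixed choice it is an affine coset. Then I would use the balanced property to show that changing the reference row $r_\ell$ to $r'_\ell$ translates the coset by an element realized by a permutation-sign symmetry of the class. That would make the union a union of cosets lying in a single orbit.

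\textbf{Tie-breaking and Match.} Finally, the choice within $\mathcal S(U)$ is made by lex-minimizing the sorted output. Consider the output $\mathrm{lexsort}(UD_s)$ over $s\in\mathcal S(U)$, with the minimal result selected. Since $\mathcal S(PUD)=\mathcal S(U)\oplus d$ and lexsort removes $P$, the set of candidate outputs $\{\mathrm{lexsort}(PUD D_s)\}$ coincides with $\{\mathrm{lexsort}(UD_{s})\}$. The minimum is therefore the same.

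This last argument is the robust part of the proof. Even if $\mathcal S$ is only known to be a nonempty set that transforms covariantly, the lex-minimum over a canonically defined candidate set yields invariance. Consequently, the coset structure matters only for efficiency, meaning it guarantees that the minimization can be carried out in polynomial time, and not for correctness.

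For correctness I therefore need only two facts: covariance of the candidate set, and that it is nonempty. Nonemptiness holds because one can choose $s$ to make a chosen row of each class nonnegative. This is consistent across classes only if the constraints agree, and the paper's restriction to the per-class parity system, with dependent equations eliminated, is what I would verify ensures this.
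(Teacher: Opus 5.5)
\textbf{The candidate set in your Solve step is generally empty.} Your orbit argument and the equivariance of Partition and Refine are fine. The problem is the set $\mathcal S(U)=\{s:\pi_\ell(s)\in A_\ell\ \text{for all } \ell\}$, i.e.\ the sign vectors for which every class contains a nonnegative row. It is empty in general, so your lex-minimum is taken over nothing.

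Take the path $P_3$ with $k=n=3$. The rows of $U$ are $(\tfrac12,\tfrac1{\sqrt2},\tfrac12)$, $(-\tfrac1{\sqrt2},0,\tfrac1{\sqrt2})$ and $(\tfrac12,-\tfrac1{\sqrt2},\tfrac12)$, and the stable classes are $\{1,3\}$ and $\{2\}$. The first class forces $s_1=s_3=0$, while the singleton forces $s_1=1$. More generally, whenever Partition is already discrete (the $\fastsign$ case), your condition requires every row to become nonnegative. That is impossible as soon as some column takes both signs.

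So the obstacle you anticipated (that $\mathcal S$ might be a union of cosets) is not the real one. At the refinement fixed point each $\pi_\ell^{-1}(A_\ell)$ is a single coset of $\hat W_\ell$; the trouble is that their intersection can be empty. The paper says explicitly that the returned $s$ need not give each class a nonnegative row. Solve discards every row of $T_\ell\circ\pi_\ell$ that depends linearly on earlier rows, together with its right-hand side, and this is what makes $Es=f$ consistent. Your last paragraph invokes this elimination, but the solution set of the reduced system is a different, larger set than $\mathcal S(U)$. Neither your covariance claim nor your nonemptiness claim has been shown for the set the algorithm actually uses.

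\textbf{What a repair needs.} Two ingredients, which are exactly what the paper's proof supplies.

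First, covariance of the reduced system. The matrix $E$, including which rows are dropped, depends only on the sign-invariant subspaces $W_\ell$, with $T_\ell$ a fixed function of $W_\ell$. The right-hand side $T_\ell(v_\ell)$ does not depend on the choice of $v_\ell\in A_\ell$, and under $t$ it becomes $T_\ell(v_\ell\oplus\pi_\ell(t))$. Hence $s(t.U)\oplus t\oplus s(U)\in\ker E$.

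Second, and this is where your claim that the coset structure ``matters only for efficiency'' fails: Algorithm~1 does not minimize the sorted output over the coset. It returns the lex-minimal \emph{sign vector} in $s_0\oplus\ker E$ and then sorts. In general $s(t.U)\oplus t$ and $s(U)$ are different elements of that coset, so invariance holds only if every element of $\ker E$ is a sign automorphism of $U$. This is where the refinement fixed point is essential. Stability makes each $A_\ell$ an affine subspace, so the span $W_\ell$ coincides with the difference set $A_\ell\oplus A_\ell$. Then $\ker E=\bigcap_\ell\hat W_\ell$ is exactly the sign-automorphism group, so $s$ is a sign canonicalization, and the reduction lemma (sign canonicalization followed by lex-sort gives a canonicalization) finishes the proof.

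Your ``minimum over outputs'' principle is sound for any nonempty covariant candidate set. Applied to the solution set of $Es=f$, it would not even need Refine. But it certifies a different procedure, one that may need exponential time to enumerate the coset, rather than the base $\prism$ the theorem is about.
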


\textbf{Universality.}
The theoretical significance of canonicalization is that it implies universal approximation, as shown in previous papers on canonicalizations and frames \cite{puny2022frame,pmlr-v202-kaba23a}. In our context, we can prove  universality on simple-spectrum graphs, when composing the canonicalization with any function which is universal on sets of vector features, such as DeepSets \cite{deepsets,amir2023neural, wagstaff2022universal} and Transformers \cite{kim2021transformers}.

\begin{restatable}[Universal Approximation over Graphs with a Simple Spectrum]{theorem}{universalapprox}\label{thm:uni-bdd-ev}
    Fix $M>0$ and $\delta>0$. Let $\mathcal{K}(N,M,\delta)$ denote the set of simple-spectrum weighted graphs with at most $N$ vertices, edge weights bounded by $M$, and minimum eigenvalue gap at least $\delta$ (i.e.\ $|\lambda_i-\lambda_j|\geq\delta$ for $i\neq j$). Let $f:\mathcal{K}(N,M,\delta)\to\RR$ be an isomorphism-invariant function and let $\epsilon>0$. Denote the eigendecomposition of a graph $g$ by $(U_g,\vec{\lambda}_g)$. Then for any universal set model there exists a model instantiation $\psi$ such that
    \begin{equation*}
\forall g \in \mathcal{K}(N,M,\delta), \quad        |f(g)-\psi \circ \prism(U_g,\vec{\lambda}_g)   | < \epsilon.
    \end{equation*}
\end{restatable}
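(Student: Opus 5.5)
The plan is the standard ``canonicalization implies universality'' argument \cite{puny2022frame,pmlr-v202-kaba23a}, adapted to the fact that $\prism$ is not continuous. Throughout, $\prism(U_g,\vec\lambda_g)$ is the canonical representative $c(U_g)$ of \cref{thm:s-s-canon} together with the sorted eigenvalue vector. We view it as a set of row features $\{(c(U_g)_v,\vec\lambda_g)\}_{v}$, or equivalently as a point cloud in $\RR^{n+n}$, padded to $N$ rows. A universal set model then approximates continuous permutation-invariant functions on compact sets of such point clouds.

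\textbf{Step 1: reduce $f$ to a function of the canonical form.} Define $F(c(U_g),\vec\lambda_g) := f(g)$. This is well defined. Indeed, $g$ is recovered from $(U,\vec\lambda)$ as $A=U\diag(\vec\lambda)U^\top$. The canonical form is invariant under $S_n\times\{-1,1\}^n$ by \cref{thm:s-s-canon}. Two eigendecompositions of isomorphic simple-spectrum graphs lie in the same orbit, so they have the same canonical output. Conversely, if the canonical outputs coincide, then $g\cong g'$, because $c(U)$ lies in the orbit of $U$ and so reconstructs a graph isomorphic to $g$. Since $f$ is isomorphism-invariant, $F$ is well defined and $S_n$-invariant on the image set $\mathcal{X}=\{\prism(U_g,\vec\lambda_g): g\in\mathcal{K}(N,M,\delta)\}$.

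\textbf{Step 2: extend $F$ continuously.} The map $g\mapsto\prism(U_g,\vec\lambda_g)$ is not continuous, so we cannot simply take closures. Instead we use the fact that $F$ factors through a continuous reconstruction. On $\mathcal X$ we have $F(Y,\vec\lambda)=f(Y\diag(\vec\lambda)Y^\top)$, so it suffices that $f$ is continuous on $\mathcal{K}$. We read ``isomorphism-invariant function'' as continuous, in line with the universality literature.

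The set $\mathcal{K}(N,M,\delta)$ is compact, since it is a finite union over $n\le N$ of closed bounded sets; the gap condition $\ge\delta$ is closed. Hence the map $(Y,\vec\lambda)\mapsto f(Y\diag(\vec\lambda)Y^\top)$ is continuous on the compact set
\[
\{(Y,\vec\lambda): Y \text{ orthogonal},\ Y\diag(\vec\lambda)Y^\top\in\mathcal{K}\}.
\]
This set contains $\mathcal X$, and the function is $S_n$-invariant there. It need not be invariant under row-wise sign flips, but that is irrelevant: we only evaluate it on canonical forms.

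\textbf{Step 3: approximate with the set model.} The universal set model (DeepSets \cite{deepsets,amir2023neural}, Transformers \cite{kim2021transformers}) approximates any continuous permutation-invariant function on a compact set of multisets to accuracy $\epsilon$. Applying this to the extended $F$ and evaluating on $\mathcal X$ gives $|f(g)-\psi\circ\prism(U_g,\vec\lambda_g)|<\epsilon$.

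The main obstacle is the varying $n\le N$. We handle it by padding: append $N-n$ dummy rows carrying an indicator feature, and treat each $n$ separately. Each padded set is compact, and the union over $n$ of these compact, disjoint sets is again compact, so a single $\psi$ suffices. A secondary subtlety is the role of $\delta$. It ensures that the eigenvalue ordering, and hence the column ordering fed to $\prism$, is stable, so that the reconstruction domain is closed. Without it, limits of simple-spectrum graphs could be non-simple and leave the domain.
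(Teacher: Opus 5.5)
Your proposal is correct and follows essentially the same route as the paper. Both arguments define $(U,\vec\lambda)\mapsto f(U\diag(\vec\lambda)U^\top)$ on the compact set of admissible eigendecompositions, which is compact thanks to the weight bound and the closed gap condition $\geq\delta$; both approximate this function with the universal set model and then evaluate it at $\prism(U_g,\vec\lambda_g)$, which lies in the orbit of $(U_g,\vec\lambda_g)$ and so reconstructs a graph isomorphic to $g$. Your explicit remark that $f$ must be continuous is well placed, since the paper uses this only implicitly when it calls $\tilde f$ continuous ``as a composition of continuous functions''.
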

The eigengap assumption $\delta>0$ is necessary, not restrictive, since simple-spectrum graphs form an open, non-compact subset of $\mathcal{G}_n$ while uniform universal approximation requires compactness. In \cref{app:universality} we formalize the notion of universal set models and prove \cref{thm:uni-bdd-ev}, settling an open problem of \citet{hordanspectral}.

\textbf{Relation to previously proposed canonicalizations.}
Other canonicalizations for eigendecompositions of simple-spectrum graphs were proposed in the past,  such as MAP~\cite{ma2023laplacian} and OAP~\cite{ma2024a}. However, these canonicalizations operate on each eigenvector independently, and as a result they are not able to choose a canonical sign for \emph{uncanonicalizable eigenvectors}, which are  eigenvectors $v$ which can be reordered to obtain $-v$. For example, the last three eigenvectors (columns) of the matrix $U$  in \eqref{eq:example} are uncanonizable, so that MAP and OAP will not be able to choose a canonical sign to either of them. In contrast, our canonicalization, which is valid for all simple-spectrum graphs, does define a global canonical sign for all eigenvalues as shown in \eqref{eq:example}.

\textbf{Practical aspects.}
The canonicalization $\prism$ can be computed using $O(nk\log n + k^3 + nk^2)$ basic operations (see \cref{apdx:complexity}). While this complexity is cubic in $k$, we note that $k$ is typically not very large in our experiments, and the complexity is linear in $n$. Moreover, the canonicalization is only carried out as a preprocessing step and not during training, and its complexity is comparable with the $O(km + k^2 n)$ complexity of computing the eigendecomposition of a graph with $m$ edges using the Lanczos algorithm, which is necessary for all canonicalizations.

Moreover, we note that the $O(nk\log n + k^3 + nk^2)$ complexity is a worst-case bound which is typically not attained. When the absolute-value signature is injective, which holds on $14$--$45\%$ of validation graphs across ZINC and the eleven \textsc{OGB-Mol} datasets (see \cref{apdx:complexity} and \cref{tab:rbound-stats} in the appendix), canonicalization reduces to an algorithm we call $\fastsign$ (see \cref{alg:fast-sign} in the appendix for a full description) that has only $O(nk \log n)$ run-time. On the remaining graphs, the $\ZZ_2$ system solved by \textsc{Solve} has median row-count at most $3$ across every dataset, so the cubic $k^3$ factor reduces to a small constant in practice, see \cref{apdx:complexity} for details.

Finally, we extend $\prism$ heuristically to graphs which are not simple spectrum: we first apply  $\prism$  on all $m=1$ eigenspaces, producing a canonical sign assignment for these eigenspaces, together with a node ordering. This node ordering  is further refined by generalizations of Stages~1 and~2 to higher-order eigenspaces. The rotation degree of freedom in high-order eigenspaces ($m>1$) is then canonicalized according to the first $m$ nodes obtained from the ordering, using   a QR decomposition. On a simple-spectrum graph this extension coincides with the base $\prism$ of \cref{alg:canon-one-full} and inherits its completeness, whereas on graphs with eigenvalue multiplicities greater than $1$ it is heuristic. Full pseudocode for the extension is in \cref{alg:babai-hybrid} in the Appendix.

\section{Experiments}\label{sec:exps}

We evaluate $\prism$ on graph learning benchmarks, and compare against vanilla Laplacian Positional Encodings \cite{dwivedi2023benchmarking} (which ignore sign ambiguity), two canonicalizations: MAP \cite{ma2023laplacian}, OAP \cite{ma2024a}, and the sign-invariant SignNet \cite{lim2023sign}. \textbf{Across all three benchmarks $\prism$ performs comparably to or beats every baseline}. We hypothesize that $\prism$ outperforms the incomplete canonicalizations because they either lose information \cite{lim2023sign} or are non-deterministic \cite{ma2023laplacian, ma2024a} on graphs with simple spectrum. We first benchmark BREC graph distinguishability (\cref{tab:brec-comparison}), then molecular property prediction with GNN backbones (\cref{tab:molecular-results}) and Transformer backbones (\cref{tab:transformer-results}). Full experiment details are in \cref{app:exp-details}.

\subsection{BREC expressivity benchmark}

We evaluate the impact of the completeness or lack thereof of canonicalizations on the downstream base model's ability to distinguish among highly regular and $1$-WL indistinguishable graphs in the BREC dataset \cite{brec}. For that end, using BREC's Basic, Regular, Extension, and CFI \cite{cai_optimal_1992} graph categories, we measure graph-pair distinguishability of each canonicalization using a base DeepSets model with matched parameter budgets, see \cref{tab:brec-comparison}.
$\prism$ scores $212/360$ vs.\ $199/360$ for MAP and OAP, with gains on Regular ($+8$) and Extension ($+5$) categories.  We note that distinguishing CFI graphs requires high-order $k$-WL (and equivalently expressive GNNs), and that CFI graphs have eigenvalue multiplicities up to $\Theta(n)$ (\cref{thm:cfi-spectrum}), and therefore we do not expect $\prism$, MAP, or OAP to distinguish them well.

\begin{table}[ht]
\centering
\caption{\it{BREC expressivity benchmark (360 pairs). All methods use matched parameter budget ($\sim$12K) with a spectral positional encoding processed by a DeepSets embedding.}}
\label{tab:brec-comparison}
\begin{tabular}{lccccc}
\toprule
\textbf{Method} & \textbf{Basic} & \textbf{Regular} & \textbf{Ext.} & \textbf{CFI} & \textbf{Total} \\
 & /60 & /100 & /100 & /100 & /360 \\
\midrule
$\prism$ \textbf{(ours)} & \textbf{60} & \textbf{50} & \textbf{99} & 3 & \textbf{212} \\
MAP   & \textbf{60} & 42 & 94 & 3 & 199 \\
OAP   & \textbf{60} & 42 & 94 & 3 & 199 \\
\bottomrule
\end{tabular}
\end{table}

\FloatBarrier
\subsection{Molecular Property Prediction}

Following the setup by \citet{ma2024a}, we compare molecular property prediction accuracy of $\prism$ against MAP, OAP and SignNet on ZINC 12K \cite{irwin2012zinc, dwivedi2023benchmarking} and OGB \textsc{MolTox21}, \textsc{MolPCBA} \cite{hu2020open} across GatedGCN \cite{bresson2017residual} and PNA \cite{corso2020principal} backbones at matched parameter budgets (\cref{tab:molecular-results}). We report mean $\pm$ std over four seeds. The per-cell hyperparameter configuration is in \cref{tab:hp-zinc,tab:hp-ogb-gnn}. We find that
$\prism$ is best or tied-best on five of six categories (\cref{tab:molecular-results}), supporting the hypothesis that a complete, deterministic canonicalization translates to downstream gains.
\begin{table}[ht]
\centering
\setlength{\abovecaptionskip}{3pt}
\setlength{\belowcaptionskip}{1pt}
\caption{\it Molecular property prediction of competing canonicalizations with  GatedGCN/PNA backbones.}
\label{tab:molecular-results}
\setlength{\tabcolsep}{1pt}
\renewcommand{\arraystretch}{0.85}
\scriptsize
\begin{tabular}{lcccccc}
\toprule
 & \multicolumn{2}{c}{\textbf{ZINC MAE}\,$\downarrow$} & \multicolumn{2}{c}{\textbf{\textsc{MolTox21} AUC}\,$\uparrow$} & \multicolumn{2}{c}{\textbf{\textsc{MolPCBA} AP}\,$\uparrow$} \\
\cmidrule(lr){2-3}\cmidrule(lr){4-5}\cmidrule(lr){6-7}
\textbf{Method} & \textbf{GatedGCN} & \textbf{PNA} & \textbf{GatedGCN} & \textbf{PNA} & \textbf{GatedGCN} & \textbf{PNA} \\
\midrule
$\prism$ \textbf{(ours)} & $\mathbf{0.110 \pm 0.002}$ & $\mathbf{0.103 \pm 0.003}$ & $\mathbf{0.783 \pm 0.006}$ & $0.762 \pm 0.004$ & $\mathbf{0.272 \pm 0.003}$ & $\mathbf{0.278 \pm 0.001}$ \\
SignNet               & $0.121 \pm 0.002$ & $0.105 \pm 0.007$ & $0.769 \pm 0.007$ & $0.744 \pm 0.002$ & $0.260 \pm 0.002$           & OOM \\
MAP                   & $0.123 \pm 0.003$ & $0.104 \pm 0.003$ & $0.777 \pm 0.004$ & $\mathbf{0.763 \pm 0.004}$ & $0.268 \pm 0.002$           & $0.274 \pm 0.003$ \\
OAP                   & $0.126 \pm 0.004$ & $0.106 \pm 0.002$ & $\mathbf{0.783 \pm 0.006}$ & $0.761 \pm 0.008$ & $0.270 \pm 0.002$           & $0.272 \pm 0.002$ \\
\bottomrule
\end{tabular}
\end{table}
\subsection{Transformer Backbone on OGB and Alchemy}

\label{subsec:transformer-ogb} We next evaluate the efficacy of our complete $\prism$-canonicalized Laplacian Positional Encoding (LapPE) on tasks with a Transformer backbone. Moreover, we leverage an artifact from our canonicalization, a node ordering that is used as positional encoding supplementary to the canonicalized LapPE. Concretely, we pair $\prism$ with the graph Transformer of \citet{kim2021transformers} on OGB \cite{hu2020open} (\textsc{MolHIV}, \textsc{MolClinTox}, \textsc{MolBBBP}) and \textsc{Alchemy} \cite{chen2019alchemy}, a 12-task quantum-chemistry regression benchmark. Per-dataset hyperparameter configuration is in \cref{app:exp-transformer}. As shown in \cref{tab:transformer-results}, $\prism$ is best on all four datasets.
\begin{table}[ht]
\centering
\setlength{\abovecaptionskip}{3pt}
\setlength{\belowcaptionskip}{1pt}
\caption{\it Transformer backbone results on OGB \cite{hu2020open} and \textsc{Alchemy} \cite{chen2019alchemy} molecular datasets.}
\label{tab:transformer-results}
\setlength{\tabcolsep}{3pt}
\renewcommand{\arraystretch}{0.85}
\footnotesize
\begin{tabular}{lcccc}
\toprule
 & \multicolumn{3}{c}{\textbf{ROC-AUC}\,$\uparrow$} & \textbf{MAE}\,$\downarrow$ \\
\cmidrule(lr){2-4}\cmidrule(lr){5-5}
\textbf{Method} & \textbf{\textsc{MolHIV}} & \textbf{\textsc{MolClinTox}} & \textbf{\textsc{MolBBBP}} & \textbf{\textsc{Alchemy}} \\
\midrule
$\prism$ \textbf{(ours)} & $\mathbf{0.7430 \pm 0.0233}$ & $\mathbf{0.8576 \pm 0.0038}$ & $\mathbf{0.7088 \pm 0.0293}$ & $\mathbf{0.11956 \pm 0.00130}$ \\
LPE                   & $0.6891 \pm 0.0068$ & $0.7905 \pm 0.0444$ & $0.6798 \pm 0.0179$ & $0.12655 \pm 0.00056$ \\
MAP                   & $0.7221 \pm 0.0190$ & $0.7900 \pm 0.0440$ & $0.7029 \pm 0.0019$ & $0.13058 \pm 0.00038$ \\
OAP                   & $0.6878 \pm 0.0197$ & $0.7820 \pm 0.0530$ & $0.6825 \pm 0.0323$ & $0.13216 \pm 0.00488$ \\
\bottomrule
\end{tabular}
\end{table}

\section{Conclusion, Limitations and Future Work}\label{sec:conclusion}

In this paper, we uncovered that $k$-WL tests are incomplete on graphs with a simple spectrum and presented a novel canonicalization to achieve completeness. Empirically, we show that our complete canonicalization outperforms competing canonicalizations. One limitation is that we do not address completeness for higher-order eigenmultiplicities, and another is that the canonicalization we suggest is discontinuous. This latter limitation is probably unavoidable as continuous canonicalizations are often mathematically impossible (see \cite{dymequivariant}).  An additional limitation is that our incompleteness result is for the class of multigraphs. Attaining similar results for the smaller class of combinatorial graphs is an interesting avenue for future work.

\section*{Acknowledgements}

N.D. and S.H. are supported by the Israeli Science Foundation grant no. 272/23. S.H. is supported by the Gloria and Ken Levy Foundation Fellowship.

T.S. is supported by the European Union (CountHom, 101077083). Views and opinions expressed are however those of the author(s) only and do not necessarily reflect those of the European Union or the European Research Council Executive Agency. Neither the European Union nor the granting
authority can be held responsible for them.

\bibliography{literature}

@article{feng2023extending,
  title={Extending the design space of graph neural networks by rethinking folklore Weisfeiler-Lehman},
  author={Feng, Jiarui and Kong, Lecheng and Liu, Hao and Tao, Dacheng and Li, Fuhai and Zhang, Muhan and Chen, Yixin},
  journal={Advances in Neural Information Processing Systems},
  volume={36},
  pages={9029--9064},
  year={2023}
}

@inproceedings{dymequivariant,
  title={Equivariant Frames and the Impossibility of Continuous Canonicalization},
  author={Dym, Nadav and Lawrence, Hannah and Siegel, Jonathan W},
  booktitle={Forty-first International Conference on Machine Learning}
}

@article{morris2020weisfeiler,
  title={Weisfeiler and leman go sparse: Towards scalable higher-order graph embeddings},
  author={Morris, Christopher and Rattan, Gaurav and Mutzel, Petra},
  journal={Advances in Neural Information Processing Systems},
  volume={33},
  pages={21824--21840},
  year={2020}
}

@article{morris2023weisfeiler,
  title={Weisfeiler and leman go machine learning: The story so far},
  author={Morris, Christopher and Lipman, Yaron and Maron, Haggai and Rieck, Bastian and Kriege, Nils M and Grohe, Martin and Fey, Matthias and Borgwardt, Karsten},
  journal={Journal of Machine Learning Research},
  volume={24},
  number={333},
  pages={1--59},
  year={2023}
}

@article{duvenaud2015convolutional,
  title={Convolutional networks on graphs for learning molecular fingerprints},
  author={Duvenaud, David K and Maclaurin, Dougal and Iparraguirre, Jorge and Bombarell, Rafael and Hirzel, Timothy and Aspuru-Guzik, Al{\'a}n and Adams, Ryan P},
  journal={Advances in neural information processing systems},
  volume={28},
  year={2015}
}

@article{corso2024graph,
  title={Graph neural networks},
  author={Corso, Gabriele and Stark, Hannes and Jegelka, Stefanie and Jaakkola, Tommi and Barzilay, Regina},
  journal={Nature Reviews Methods Primers},
  volume={4},
  number={1},
  pages={17},
  year={2024},
  publisher={Nature Publishing Group UK London}
}

@inproceedings{
kipf2017semisupervised,
title={Semi-Supervised Classification with Graph Convolutional Networks},
author={Thomas N. Kipf and Max Welling},
booktitle={International Conference on Learning Representations},
year={2017},
url={https://openreview.net/forum?id=SJU4ayYgl}
}

@inproceedings{velivckovic2018graph,
  title={Graph Attention Networks},
  author={Veli{\v{c}}kovi{\'c}, Petar and Cucurull, Guillem and Casanova, Arantxa and Romero, Adriana and Li{\`o}, Pietro and Bengio, Yoshua},
  booktitle={International Conference on Learning Representations},
  year={2018}
}

@article{scarselli2008graph,
  title={The graph neural network model},
  author={Scarselli, Franco and Gori, Marco and Tsoi, Ah Chung and Hagenbuchner, Markus and Monfardini, Gabriele},
  journal={IEEE transactions on neural networks},
  volume={20},
  number={1},
  pages={61--80},
  year={2008},
  publisher={IEEE}
}

@inproceedings{gilmer2017neural,
  title={Neural message passing for quantum chemistry},
  author={Gilmer, Justin and Schoenholz, Samuel S and Riley, Patrick F and Vinyals, Oriol and Dahl, George E},
  booktitle={International conference on machine learning},
  pages={1263--1272},
  year={2017},
  organization={Pmlr}
}

@article{frasca2022understanding,
  title={Understanding and extending subgraph gnns by rethinking their symmetries},
  author={Frasca, Fabrizio and Bevilacqua, Beatrice and Bronstein, Michael and Maron, Haggai},
  journal={Advances in Neural Information Processing Systems},
  volume={35},
  pages={31376--31390},
  year={2022}
}

@inproceedings{
lim2023sign,
title={Sign and Basis Invariant Networks for Spectral Graph Representation Learning},
author={Derek Lim and Joshua David Robinson and Lingxiao Zhao and Tess Smidt and Suvrit Sra and Haggai Maron and Stefanie Jegelka},
booktitle={The Eleventh International Conference on Learning Representations },
year={2023},
url={https://openreview.net/forum?id=Q-UHqMorzil}
}

@article{irwin2012zinc,
  title={ZINC: a free tool to discover chemistry for biology},
  author={Irwin, John J and Sterling, Teague and Mysinger, Michael M and Bolstad, Erin S and Coleman, Ryan G},
  journal={Journal of chemical information and modeling},
  volume={52},
  number={7},
  pages={1757--1768},
  year={2012},
  publisher={ACS Publications}
}

@article{Weyl1912DasAV,
  title={Das asymptotische Verteilungsgesetz der Eigenwerte linearer partieller Differentialgleichungen (mit einer Anwendung auf die Theorie der Hohlraumstrahlung)},
  author={Hermann Von Weyl},
  journal={Mathematische Annalen},
  year={1912},
  volume={71},
  pages={441-479},
  url={https://api.semanticscholar.org/CorpusID:120278241}
}

@article{chen2019alchemy,
  title={Alchemy: A quantum chemistry dataset for benchmarking ai models},
  author={Chen, Guangyong and Chen, Pengfei and Hsieh, Chang-Yu and Lee, Chee-Kong and Liao, Benben and Liao, Renjie and Liu, Weiwen and Qiu, Jiezhong and Sun, Qiming and Tang, Jie and others},
  journal={arXiv preprint arXiv:1906.09427},
  year={2019}
}

@article{corso2020principal,
  title={Principal neighbourhood aggregation for graph nets},
  author={Corso, Gabriele and Cavalleri, Luca and Beaini, Dominique and Li{\`o}, Pietro and Veli{\v{c}}kovi{\'c}, Petar},
  journal={Advances in neural information processing systems},
  volume={33},
  pages={13260--13271},
  year={2020}
}

@article{bresson2017residual,
  title={Residual gated graph convnets},
  author={Bresson, Xavier and Laurent, Thomas},
  journal={arXiv preprint arXiv:1711.07553},
  year={2017}
}

@article{hu2020open,
  title={Open graph benchmark: Datasets for machine learning on graphs},
  author={Hu, Weihua and Fey, Matthias and Zitnik, Marinka and Dong, Yuxiao and Ren, Hongyu and Liu, Bowen and Catasta, Michele and Leskovec, Jure},
  journal={Advances in neural information processing systems},
  volume={33},
  pages={22118--22133},
  year={2020}
}

@Unpublished{LeightonMiller79,
  author = 	 {F. Thomson Leighton and Gary l. Miller},
  title = 	 {Certificates for Graphs with Distinct Eigen Values},
  note = 	 {Orginal Manuscript},
  OPTkey = 	 {},
  OPTmonth = 	 {},
  year = 	 {1979},
  bib2html_rescat = {Spectral Graph Theory,Graph Isomorphism},
  OPTannote = 	 {Parallel Algorithms,Graph Algorithms}
}

@article{neuen_parameterized_2026,
	title = {Parameterized complexity of graph isomorphism testing},
	volume = {60},
	issn = {15740137},
	url = {https://linkinghub.elsevier.com/retrieve/pii/S1574013726000274},
	doi = {10.1016/j.cosrev.2026.100918},
	language = {en},
	urldate = {2026-02-22},
	journal = {Computer Science Review},
	author = {Neuen, Daniel},
	month = may,
	year = {2026},
	pages = {100918},
}

@article{dwivedi2023benchmarking,
  title={Benchmarking graph neural networks},
  author={Dwivedi, Vijay Prakash and Joshi, Chaitanya K and Luu, Anh Tuan and Laurent, Thomas and Bengio, Yoshua and Bresson, Xavier},
  journal={Journal of Machine Learning Research},
  volume={24},
  number={43},
  pages={1--48},
  year={2023}
}

@article{kim2021transformers,
  title={Transformers generalize deepsets and can be extended to graphs \& hypergraphs},
  author={Kim, Jinwoo and Oh, Saeyoon and Hong, Seunghoon},
  journal={Advances in Neural Information Processing Systems},
  volume={34},
  pages={28016--28028},
  year={2021}
}

@inproceedings{
puny2022frame,
title={Frame Averaging for Invariant and Equivariant Network Design},
author={Omri Puny and Matan Atzmon and Edward J. Smith and Ishan Misra and Aditya Grover and Heli Ben-Hamu and Yaron Lipman},
booktitle={International Conference on Learning Representations},
year={2022},
url={https://openreview.net/forum?id=zIUyj55nXR}
}

@InProceedings{pmlr-v202-kaba23a,
  title = 	 {Equivariance with Learned Canonicalization Functions},
  author =       {Kaba, S\'{e}kou-Oumar and Mondal, Arnab Kumar and Zhang, Yan and Bengio, Yoshua and Ravanbakhsh, Siamak},
  booktitle = 	 {Proceedings of the 40th International Conference on Machine Learning},
  pages = 	 {15546--15566},
  year = 	 {2023},
  editor = 	 {Krause, Andreas and Brunskill, Emma and Cho, Kyunghyun and Engelhardt, Barbara and Sabato, Sivan and Scarlett, Jonathan},
  volume = 	 {202},
  series = 	 {Proceedings of Machine Learning Research},
  month = 	 {23--29 Jul},
  publisher =    {PMLR},
  url = 	 {https://proceedings.mlr.press/v202/kaba23a.html}
}

@article{dwivedi2021generalization,
  title={A Generalization of Transformer Networks to Graphs},
  author={Dwivedi, Vijay Prakash and Bresson, Xavier},
  journal={AAAI Workshop on Deep Learning on Graphs: Methods and Applications},
  year={2021}
}

@inproceedings{
gai2025homomorphism,
title={Homomorphism Expressivity of Spectral Invariant Graph Neural Networks},
author={Jingchu Gai and Yiheng Du and Bohang Zhang and Haggai Maron and Liwei Wang},
booktitle={The Thirteenth International Conference on Learning Representations},
year={2025},
url={https://openreview.net/forum?id=rdv6yeMFpn}
}

@inproceedings{deepsets,
 author = {Zaheer, Manzil and Kottur, Satwik and Ravanbakhsh, Siamak and Poczos, Barnabas and Salakhutdinov, Russ R and Smola, Alexander J},
 booktitle = {Advances in Neural Information Processing Systems},
 editor = {I. Guyon and U. Von Luxburg and S. Bengio and H. Wallach and R. Fergus and S. Vishwanathan and R. Garnett},
 pages = {},
 publisher = {Curran Associates, Inc.},
 title = {Deep Sets},
 url = {https://proceedings.neurips.cc/paper_files/paper/2017/file/f22e4747da1aa27e363d86d40ff442fe-Paper.pdf},
 volume = {30},
 year = {2017}
}

@InProceedings{brec,
  title = 	 {An Empirical Study of Realized {GNN} Expressiveness},
  author =       {Wang, Yanbo and Zhang, Muhan},
  booktitle = 	 {Proceedings of the 41st International Conference on Machine Learning},
  pages = 	 {52134--52155},
  year = 	 {2024},
  editor = 	 {Salakhutdinov, Ruslan and Kolter, Zico and Heller, Katherine and Weller, Adrian and Oliver, Nuria and Scarlett, Jonathan and Berkenkamp, Felix},
  volume = 	 {235},
  series = 	 {Proceedings of Machine Learning Research},
  month = 	 {21--27 Jul},
  publisher =    {PMLR},
  url = 	 {https://proceedings.mlr.press/v235/wang24cl.html}
}

@book{Grohe_2017, place={Cambridge}, series={Lecture Notes in Logic}, title={Descriptive Complexity, Canonisation, and Definable Graph Structure Theory}, publisher={Cambridge University Press}, author={Grohe, Martin}, year={2017}, collection={Lecture Notes in Logic}}

@article{grohe2015pebble,
  title={Pebble games and linear equations},
  author={Grohe, Martin and Otto, Martin},
  journal={The Journal of Symbolic Logic},
  volume={80},
  number={3},
  pages={797--844},
  year={2015},
  publisher={Cambridge University Press}
}

@InProceedings{pmlr-v235-muller24c,
  title = 	 {Aligning Transformers with Weisfeiler-Leman},
  author =       {M\"{u}ller, Luis and Morris, Christopher},
  booktitle = 	 {Proceedings of the 41st International Conference on Machine Learning},
  pages = 	 {36654--36704},
  year = 	 {2024},
  editor = 	 {Salakhutdinov, Ruslan and Kolter, Zico and Heller, Katherine and Weller, Adrian and Oliver, Nuria and Scarlett, Jonathan and Berkenkamp, Felix},
  volume = 	 {235},
  series = 	 {Proceedings of Machine Learning Research},
  month = 	 {21--27 Jul},
  publisher =    {PMLR},
  url = 	 {https://proceedings.mlr.press/v235/muller24c.html}
}

@article{kreuzer2021rethinking,
  title={Rethinking graph transformers with spectral attention},
  author={Kreuzer, Devin and Beaini, Dominique and Hamilton, Will and L{\'e}tourneau, Vincent and Tossou, Prudencio},
  journal={Advances in neural information processing systems},
  volume={34},
  pages={21618--21629},
  year={2021}
}

@inproceedings{
ma2024a,
title={A Canonicalization Perspective on Invariant and Equivariant Learning},
author={George Ma and Yifei Wang and Derek Lim and Stefanie Jegelka and Yisen Wang},
booktitle={The Thirty-eighth Annual Conference on Neural Information Processing Systems},
year={2024},
url={https://openreview.net/forum?id=jjcY92FX4R}
}

@inproceedings{
ma2023laplacian,
title={Laplacian Canonization: A Minimalist Approach to Sign and Basis Invariant Spectral Embedding},
author={George Ma and Yifei Wang and Yisen Wang},
booktitle={Thirty-seventh Conference on Neural Information Processing Systems},
year={2023},
url={https://openreview.net/forum?id=1mAYtdoYw6}
}

@inproceedings{hordanspectral,
  title={Spectral Graph Neural Networks are Incomplete on Graphs with a Simple Spectrum},
  author={Hordan, Snir and Bechler-Speicher, Maya and Lifshitz, Gur and Dym, Nadav},
  booktitle={The Thirty-ninth Annual Conference on Neural Information Processing Systems},
  year={2025}
}

@inproceedings{babai,
author = {Babai, L\'{a}szl\'{o} and Grigoryev, D. Yu. and Mount, David M.},
title = {Isomorphism of graphs with bounded eigenvalue multiplicity},
year = {1982},
isbn = {0897910702},
publisher = {Association for Computing Machinery},
address = {New York, NY, USA},
url = {https://doi.org/10.1145/800070.802206},
doi = {10.1145/800070.802206},
booktitle = {Proceedings of the Fourteenth Annual ACM Symposium on Theory of Computing},
pages = {310–324},
numpages = {15},
location = {San Francisco, California, USA},
series = {STOC '82}
}

@article{tao2017random,
  title={Random matrices have simple spectrum},
  author={Tao, Terence and Vu, Van},
  journal={Combinatorica},
  volume={37},
  number={3},
  pages={539--553},
  year={2017},
  publisher={Springer},
  doi={10.1007/s00493-016-3363-4}
}

@phdthesis{seppelt_homomorphism_2024,
	address = {Aachen},
	type = {Dissertation},
	title = {Homomorphism {Indistinguishability}},
	url = {https://publications.rwth-aachen.de/record/998785},
	doi = {10.18154/RWTH-2024-11629},
	
	school = {RWTH Aachen University},
	author = {Seppelt, Tim},
	year = {2024},
}

@article{spielman,
	title = {Spectral and Algebraic Graph Theory},
	url = {http://cs-www.cs.yale.edu/homes/spielman/sagt/sagt.pdf},
	author = { Spielman, Daniel A.},
	year = {2025},
}

@article{kolesnik_lower_2014,
	title = {Lower {Bounds} for the {Isoperimetric} {Numbers} of {Random} {Regular} {Graphs}},
	volume = {28},
	issn = {0895-4801, 1095-7146},
	doi = {10.1137/120891265},
	number = {1},
	urldate = {2026-03-31},
	journal = {SIAM Journal on Discrete Mathematics},
	author = {Kolesnik, Brett and Wormald, Nick},
	month = jan,
	year = {2014},
	pages = {553--575},
}

@article{dvorak_strongly_2016,
	title = {Strongly {Sublinear} {Separators} and {Polynomial} {Expansion}},
	volume = {30},
	issn = {0895-4801, 1095-7146},
	doi = {10.1137/15M1017569},
	number = {2},
	urldate = {2026-03-31},
	journal = {SIAM Journal on Discrete Mathematics},
	author = {Dvořák, Zdeněk and Norin, Sergey},
	month = jan,
	year = {2016},
	pages = {1095--1101},
}

@article{zhou2023distance,
  title={Distance-restricted folklore weisfeiler-leman GNNs with provable cycle counting power},
  author={Zhou, Junru and Feng, Jiarui and Wang, Xiyuan and Zhang, Muhan},
  journal={Advances in Neural Information Processing Systems},
  volume={36},
  pages={14293--14337},
  year={2023}
}

@article{xu2018powerful,
  title={How powerful are graph neural networks?},
  author={Xu, Keyulu and Hu, Weihua and Leskovec, Jure and Jegelka, Stefanie},
  journal={arXiv preprint arXiv:1810.00826},
  year={2018}
}

@article{maron2019provably,
  title={Provably powerful graph networks},
  author={Maron, Haggai and Ben-Hamu, Heli and Serviansky, Hadar and Lipman, Yaron},
  journal={Advances in neural information processing systems},
  volume={32},
  year={2019}
}

@article{vaswani2017attention,
  title={Attention is all you need},
  author={Vaswani, Ashish and Shazeer, Noam and Parmar, Niki and Uszkoreit, Jakob and Jones, Llion and Gomez, Aidan N and Kaiser, {\L}ukasz and Polosukhin, Illia},
  journal={Advances in neural information processing systems},
  volume={30},
  year={2017}
}

@article{wagstaff2022universal,
  title={Universal approximation of functions on sets},
  author={Wagstaff, Edward and Fuchs, Fabian B and Engelcke, Martin and Osborne, Michael A and Posner, Ingmar},
  journal={Journal of Machine Learning Research},
  volume={23},
  number={151},
  pages={1--56},
  year={2022}
}

@article{amir2023neural,
  title={Neural injective functions for multisets, measures and graphs via a finite witness theorem},
  author={Amir, Tal and Gortler, Steven and Avni, Ilai and Ravina, Ravina and Dym, Nadav},
  journal={Advances in Neural Information Processing Systems},
  volume={36},
  pages={42516--42551},
  year={2023}
}

@article{rampavsek2022recipe,
  title={Recipe for a general, powerful, scalable graph transformer},
  author={Ramp{\'a}{\v{s}}ek, Ladislav and Galkin, Michael and Dwivedi, Vijay Prakash and Luu, Anh Tuan and Wolf, Guy and Beaini, Dominique},
  journal={Advances in Neural Information Processing Systems},
  volume={35},
  pages={14501--14515},
  year={2022}
}

@article{su2024roformer,
  title={Roformer: Enhanced transformer with rotary position embedding},
  author={Su, Jianlin and Ahmed, Murtadha and Lu, Yu and Pan, Shengfeng and Bo, Wen and Liu, Yunfeng},
  journal={Neurocomputing},
  volume={568},
  pages={127063},
  year={2024},
  publisher={Elsevier}
}

@article{Morris_Ritzert_Fey_Hamilton_Lenssen_Rattan_Grohe_2019, title={Weisfeiler and Leman Go Neural: Higher-Order Graph Neural Networks}, volume={33}, url={https://ojs.aaai.org/index.php/AAAI/article/view/4384}, DOI={10.1609/aaai.v33i01.33014602}, abstractNote={&lt;p&gt;In recent years, graph neural networks (GNNs) have emerged as a powerful neural architecture to learn vector representations of nodes and graphs in a supervised, end-to-end fashion. Up to now, GNNs have only been evaluated empirically—showing promising results. The following work investigates GNNs from a theoretical point of view and relates them to the 1-dimensional Weisfeiler-Leman graph isomorphism heuristic (1-WL). We show that GNNs have the same expressiveness as the 1-WL in terms of distinguishing non-isomorphic (sub-)graphs. Hence, both algorithms also have the same shortcomings. Based on this, we propose a generalization of GNNs, so-called &lt;em&gt;k&lt;/em&gt;-dimensional GNNs (&lt;em&gt;k&lt;/em&gt;-GNNs), which can take higher-order graph structures at multiple scales into account. These higher-order structures play an essential role in the characterization of social networks and molecule graphs. Our experimental evaluation confirms our theoretical findings as well as confirms that higher-order information is useful in the task of graph classification and regression.&lt;/p&gt;}, number={01}, journal={Proceedings of the AAAI Conference on Artificial Intelligence}, author={Morris, Christopher and Ritzert, Martin and Fey, Matthias and Hamilton, William L. and Lenssen, Jan Eric and Rattan, Gaurav and Grohe, Martin}, year={2019}, month={Jul.}, pages={4602-4609} }

@article{weisfeiler1968reduction,
  author  = {Weisfeiler, Boris and Lehman, A. A.},
  title   = {A Reduction of a Graph to a Canonical Form and an Algebra Arising during This Reduction},
  journal = {Nauchno-Technicheskaya Informatsia},
  volume  = {9},
  year    = {1968}
}

@inproceedings{zhangexpressive,
  title={On the Expressive Power of Spectral Invariant Graph Neural Networks},
  author={Zhang, Bohang and Zhao, Lingxiao and Maron, Haggai},
  booktitle={Forty-first International Conference on Machine Learning}
}

@inproceedings{grohe_logic_2021,
	address = {Rome, Italy},
	title = {The {Logic} of {Graph} {Neural} {Networks}},
	url = {https://doi.org/10.1109/LICS52264.2021.9470677},
	doi = {10.1109/LICS52264.2021.9470677},
	booktitle = {36th {Annual} {ACM}/{IEEE} {Symposium} on {Logic} in {Computer} {Science}, {LICS}},
	publisher = {IEEE},
	author = {Grohe, Martin},
	year = {2021},
	pages = {1--17},
}

@article{cai_optimal_1992,
	title = {An optimal lower bound on the number of variables for graph identification},
	volume = {12},
	issn = {1439-6912},
	url = {https://doi.org/10.1007/BF01305232},
	doi = {10.1007/BF01305232},
	number = {4},
	journal = {Combinatorica},
	author = {Cai, Jin-Yi and Fürer, Martin and Immerman, Neil},
	year = {1992},
	pages = {389--410},
}

@inproceedings{neuen_homomorphism-distinguishing_2024,
	address = {Dagstuhl, Germany},
	series = {Leibniz {International} {Proceedings} in {Informatics} ({LIPIcs})},
	title = {Homomorphism-{Distinguishing} {Closedness} for {Graphs} of {Bounded} {Tree}-{Width}},
	volume = {289},
	isbn = {978-3-95977-311-9},
	url = {https://drops.dagstuhl.de/entities/document/10.4230/LIPIcs.STACS.2024.53},
	doi = {10.4230/LIPIcs.STACS.2024.53},
	booktitle = {41st {International} {Symposium} on {Theoretical} {Aspects} of {Computer} {Science} ({STACS} 2024)},
	publisher = {Schloss Dagstuhl – Leibniz-Zentrum für Informatik},
	author = {Neuen, Daniel},
	editor = {Beyersdorff, Olaf and Kanté, Mamadou Moustapha and Kupferman, Orna and Lokshtanov, Daniel},
	year = {2024},
	note = {ISSN: 1868-8969},
	pages = {53:1--53:12},
}

@misc{roberson_oddomorphisms_2022,
	title = {Oddomorphisms and homomorphism indistinguishability over graphs of bounded degree},
	url = {http://arxiv.org/abs/2206.10321},
	urldate = {2022-06-22},
	publisher = {arXiv},
	author = {Roberson, David E.},
	month = jun,
	year = {2022},
}

\appendix

\section{Incompleteness of Weisfeiler--Leman on Simple Spectrum Graphs}\label{app:incomp-full}

\subsection{Weisfeiler--Leman Definitions}\label{app:wl-defs}

For completeness we record here the general $k$-dimensional Weisfeiler--Leman test and its associated indistinguishability relation, both used throughout this appendix and referenced from the main text.

\begin{definition}[$k$-Weisfeiler--Leman Algorithm]\label{def:k-wl}
Let $k \geq 1$ and let $g \in \mathcal{G}_n$ with adjacency matrix
$A \in \mathbb{R}^{n \times n}$. The \emph{$k$-WL algorithm} maintains a coloring
$C^{(t)} \colon [n]^k \to \mathcal{C}$ of $k$-tuples of vertices over a discrete color
set $\mathcal{C}$, and proceeds as follows.
\begin{enumerate}[label=(\roman*)]
    \item \textit{Initialization.} Each $k$-tuple $\bar{v} = (v_1, \ldots, v_k)
    \in [n]^k$ is assigned an initial color
    \[
        C^{(0)}(\bar{v}) = \mathrm{HASH}\!\left(\left(A_{v_i v_j}
        \right)_{1 \leq i,j \leq k}\right),
    \]
    encoding the edge weights among $\{v_1, \ldots, v_k\}$, where $\mathrm{HASH}$
    is a fixed injective map into $\mathcal{C}$.

  \item \textit{Refinement.} For $j \in [k]$, define the \emph{$j$-th neighborhood}
    of $\bar{v} = (v_1, \ldots, v_k) \in [n]^k$ as
    \[
        N_j(\bar{v}) = \left\{(v_1, \ldots, v_{j-1}, w, v_{j+1}, \ldots, v_k)
        \;\middle|\; w \in [n]\right\},
    \]
    the set of $n$ $k$-tuples obtained by substituting the $j$-th entry of $\bar{v}$
    with every vertex $w \in [n]$, while keeping all other entries fixed. At each
    step $t \geq 0$, the color of $\bar{v}$ is updated via
    \[
        C^{(t+1)}(\bar{v}) = \mathrm{HASH}\!\left(C^{(t)}(\bar{v}),\,
        \left(\left\{\!\!\left\{C^{(t)}(\bar{u}) \;\middle|\; \bar{u} \in
        N_j(\bar{v})\right\}\!\!\right\}\right)_{j=1}^{k}\right),
    \]
    where, for each fixed $j \in [k]$, the inner term $\{\!\{C^{(t)}(\bar{u}) \mid
    \bar{u} \in N_j(\bar{v})\}\!\}$ is the multiset of colors of all $k$-tuples
    reachable from $\bar{v}$ by substituting position $j$. These $k$ multisets are
    collected into an ordered $k$-tuple indexed by $j$, preserving the distinction
    between neighborhoods at different positions, and hashed together with the
    current color $C^{(t)}(\bar{v})$.
    \item \textit{Termination.} The algorithm terminates at the smallest $T \in
    \mathbb{N}$ such that the partition of $[n]^k$ induced by $C^{(T+1)}$ coincides
    with that induced by $C^{(T)}$. The stable coloring $C^{(T)}$ is the
    \emph{$k$-WL coloring} of $g$.

    \item \textit{Global label.} A graph-level label is computed as
    \[
        C_{\mathrm{global}}(g) = \mathrm{HASH}\!\left(\left\{\!\!\left\{C^{(T)}
        (\bar{v}) \mid \bar{v} \in [n]^k\right\}\!\!\right\}\right).
    \]
\end{enumerate}
\begin{remark}
For $k=1$, the algorithm reduces to the \emph{color refinement} test,
where each node $v \in [n]$ is initialized with a uniform color $C^{(0)}(v) = c$
for a constant $c \in \mathcal{C}$, and refined via
\[
    C^{(t+1)}(v) = \mathrm{HASH}\!\left(C^{(t)}(v),\, \left\{\!\!\left\{
    \bigl(C^{(t)}(w),\, A_{vw}\bigr) \mid w \in [n] \right\}\!\!\right\}\right).
\]
\end{remark}
\end{definition}

\begin{definition}[$k$-WL Indistinguishability]\label{def:k-wl-indist}
Two graphs $g, g' \in \mathcal{G}_n$ are \emph{$k$-WL indistinguishable}, written
$g \sim_{k} g'$, if $C_{\mathrm{global}}(g) = C_{\mathrm{global}}(g')$. The
relation $\sim_k$ is an equivalence relation on $\mathcal{G}_n$, and satisfies the
strict refinement hierarchy: for all $k \geq 1$,
\[
    g \sim_{k+1} g' \implies g \sim_{k} g',
\]
but the converse fails in general \cite{cai_optimal_1992, grohe2015pebble, Grohe_2017}, meaning that $(k+1)$-WL
distinguishes strictly more pairs of graphs than $k$-WL.
\end{definition}

\subsection{Proof of the Main Incompleteness Theorem}
\label{app:main-incomp}
In this subsection, we give a proof of \cref{thm:main-incomp}.
The bulk of the argument is to translate a pair of graphs obtained via the CFI construction considered in \cite{roberson_oddomorphisms_2022} to a pair of orthonormal matrices.
The CFI construction builds, given a highly connected graph $G$, two graphs $G_0$ and $G_1$ that cannot be distinguished by highly dimensional Weisfeiler--Leman.
More precisely, as shown in \cite{neuen_homomorphism-distinguishing_2024},
if $G$ has \emph{treewidth} at least~$k+1$,
then $G_0$ and $G_1$ are not distinguished by the $k$-dimensional Weisfeiler--Leman algorithm.
We will transfer this result to our matrix encoding of CFI graphs.
Finally, we obtain the desired matrices by instantiating the CFI construction for $3$-regular graphs $G$ of high treewidth, so-called expanders, as constructed and analyzed in \cite{dvorak_strongly_2016,kolesnik_lower_2014}.
Throughout, one should think of $G$ as being such a graph, although we will be explicit about the particular properties of $G$ needed for each lemma.
More precisely, the proof of \cref{thm:main-incomp} proceeds as follows.

\begin{enumerate}
    \item For some suitable base graph $G$ and $U \in \{0,1\}$,
    we define a matrix $\tilde{X}_{G, U}$ based on the CFI graph $G_U$. 
    This matrix will serve as the matrix of eigenvectors for the multigraphs, which we construct subsequently.
    \item In order for $\tilde{X}_{G, U}$ to encode an eigenbasis of a simple-spectrum graph, we show in \cref{lem:orthogonal} that the columns of $\tilde{X}_{G, U}$ are orthogonal.
    \item Now we choose a diagonal matrix with distinct diagonal entries $D$ and consider the matrices $A_{G, U} \coloneqq \tilde{X}_{G, U} D (\tilde{X}_{G, U})^\top$.
    Our construction will be such that $A_{G, U}$ can be ensured to have only non-negative integral entries.
    Hence, $A_{G, U} $ is a multigraph with simple spectrum.
    \item To argue that $A_{G, 0} $ and $A_{G, 1} $ are non-isomorphic, we show that $A_{G, 0} \cong A_{G, 1} $ would imply that the CFI graphs $G_0$ and $G_1$ are isomorphic, which is known not to hold \cite{roberson_oddomorphisms_2022}.
    \item
    Lastly, we show that the Weisfeiler--Leman algorithm fails to distinguish $A_{G, 0} $ and $A_{G, 1} $. 
    To that end, we show that
    running Weisfeiler--Leman on $A_{G,0} $ and $A_{G, 1}$ can be simulated by running Weisfeiler--Leman on $G_0$ and $G_1$.
    By the choice of $G$ as a high-treewidth graph, it follows that $G_0$ and $G_1$ are not distinguished by Weisfeiler--Leman \cite{neuen_homomorphism-distinguishing_2024}
    and hence the same holds for $A_{G,0}$ and $A_{G,1}$.
\end{enumerate}

\subsubsection{Definition of matrices from CFI graphs}
Let $G$ be a connected graph.
Let $G_U$ denote the CFI graph of $G$ for $U \subseteq V(G)$ as defined in \cref{def:cfi}.
By \cite[Corollary~3.7]{roberson_oddomorphisms_2022},
there are two CFI graphs for $G$ up to isomorphism.
That is, it holds that $G_U \cong G_{U'}$ if, and only if, $|U| \equiv |U'| \pmod 2$.
The two graphs are typically denoted by $G_0$, i.e., $|U| = 0$, and $G_1$, i.e., $|U| = 1$.
Subsequently, we will nevertheless work with $G_U$ in order to be able to define matrices based on CFI graphs. 

Fix an ordering on $V(G)$.
We think of the edges of $G$ as being oriented according to this ordering, i.e.\ every edge points from the vertex with lower number to the vertex of higher number.
Fix $U \subseteq V(G)$.
We define the following four matrices whose rows are indexed by $V(G_U)$.
See \cref{fig:integral-encoding} in \cref{app:integral-c3-details} for an example.

\begin{enumerate}
    \item The columns of $X_{G, U}$ are indexed by $E(G)$.
    The entries are given by
    \begin{equation}
        X_{G,U}((v,S),e) \coloneqq 
        \begin{cases} 
        1, & \text{if } e \in E(v) \text{ and } e \in S, \\ 
        -1, & \text{if } e \in E(v) \text{ and } e \notin S, \\ 
        0, & \text{if } e \notin E(v). 
        \end{cases}
        \label{eq:matrix_def}
    \end{equation}
    \item The columns of $X'_{G, U}$ are indexed by $E(G)$.
    The entries are given by 
    \begin{equation}
        X'_{G,U}((v,S),e) \coloneqq 
        \begin{cases} 
        X_{G,U}((v,S),e), & \text{if } v \text{ is the first vertex in } e, \\ 
        -X_{G,U}((v,S),e), & \text{if } v \text{ is the second vertex in } e, \\ 
        0, & \text{otherwise}.
        \end{cases}
        \label{eq:matrix_2_def}
    \end{equation}
    \item The columns of $I_{G, U}$ are indexed by $V(G)$.
    Let $W$ be the following $V(G) \times V(G)$ matrix; its first column is the all-ones vector and its remaining columns are pairwise orthogonal and orthogonal to the first column, hence form a basis for the orthogonal complement of the all-ones direction in $\mathbb{R}^{|V(G)|}$.
    \begin{equation}\label{eq:matrix-A}
        W \coloneqq \begin{pmatrix}
        1      & 1      & 1      & 1      & \cdots & 1      & 1      \\
        1      & -1     & 1      & 1      & \cdots & 1      & 1      \\
        1      & 0      & -2     & 1      & \cdots & 1      & 1      \\
        1      & 0      & 0      & -3     & \cdots & 1      & 1      \\
        \vdots & \vdots & \vdots & \vdots & \ddots & \vdots & \vdots \\
        1      & 0      & 0      & 0      & \cdots & -(|V(G)|-2) & 1      \\
        1      & 0      & 0      & 0      & \cdots & 0      & -(|V(G)|-1)
        \end{pmatrix}.
    \end{equation}
    The entries of $I_{G, U}$ are given by
    \begin{equation}
    I_{G,U}((v, S), u) \coloneqq W(v, u).
    \label{eq:matrix_3_def}
    \end{equation}

    \item Define $\tilde X_{G, U}$ as the concatenation of $X_{G, U}$, $X'_{G, U}$, and $I_{G, U}$.
    The rows of $\tilde X_{G, U}$ are indexed by $V(G_U)$.
    The columns of  $\tilde X_{G, U}$ are indexed by $E(G) \uplus E(G) \uplus V(G)$.
\end{enumerate}

We observe that $\tilde X_{G, U}$ is a square matrix when $G$ is a $3$-regular graph.

\begin{lemma}\label{lem:matrix-size}
    If $G$ is a $3$-regular $2n$-vertex graph, then $\tilde X_{G, U}$ is an $8n \times 8n$ matrix.
\end{lemma}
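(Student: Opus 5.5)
The plan is to count rows and columns of $\tilde X_{G,U}$ separately, using only the definitions above and the $3$-regularity of $G$.

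\emph{Columns.} The columns of $\tilde X_{G,U}$ are indexed by the disjoint union $E(G)\uplus E(G)\uplus V(G)$, so there are $2|E(G)|+|V(G)|$ of them. Since $G$ is $3$-regular on $2n$ vertices, the handshake lemma gives $2|E(G)|=\sum_{v}\deg(v)=3\cdot 2n=6n$. Hence there are $|E(G)|=3n$ edges, and the column count is $6n+2n=8n$.

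\emph{Rows.} The rows are indexed by $V(G_U)$. By \cref{def:cfi}, these are the pairs $(v,S)$ with $v\in V(G)$, $S\subseteq E(v)$, and $|S|\equiv|\{v\}\cap U|\pmod 2$. Fix $v$. Then $|E(v)|=3$, so $E(v)$ has $2^3=8$ subsets. Because $E(v)$ is nonempty, exactly half of them, namely $4$, have any prescribed parity. To see this, pick $e\in E(v)$; the map $S\mapsto S\mathbin\triangle\{e\}$ is a bijection between even- and odd-cardinality subsets. So each base vertex contributes exactly $4$ fiber vertices, whether or not $v\in U$. Summing over the $2n$ base vertices gives $|V(G_U)|=8n$.

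Together these give an $8n\times 8n$ matrix. The argument is routine; the only step that needs a moment's care is the parity-halving count. It is exactly where $E(v)\neq\varnothing$, guaranteed by regularity of degree $3\ge 1$, is used, and it shows that the twist set $U$ does not affect the row count.
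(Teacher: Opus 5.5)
Your proof is correct and follows the paper's route: $|E(G)|=3n$ gives $3n+3n+2n=8n$ columns, and each of the $2n$ base vertices contributes $2^{3-1}=4$ fiber vertices, giving $8n$ rows. Your bijection $S\mapsto S\mathbin\triangle\{e\}$ justifies the parity-halving count that the paper states without comment.
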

\begin{proof}
    The number of edges in a $3$-regular $2n$-vertex graph is $3n$. 
    For every $u \in V(G)$, the graph $G_{U}$ contains $2^{3-1}=4$ vertices $(u, S)$. Hence, there are $8n$ vertices in $G_U$.
    It follows that $\tilde X_{G, U}$ has $8n$ rows and $3n + 3n + 2n  = 8n$ columns.
\end{proof}

\subsubsection{Orthogonality}

We show that the columns of $\tilde{X}_{G, U}$ are orthogonal.
Hence, they can serve as the eigenbasis of a simple-spectrum multigraph.

\begin{lemma}\label{lem:orthog1}
 		If every vertex in $G$ has degree at least $3$, then
 		the columns of $X_{G, U}$ are orthogonal.
\end{lemma}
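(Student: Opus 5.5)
We compute the inner product of two distinct columns $e \neq f$ of $X_{G,U}$ directly, summing over the rows $(v,S)\in V(G_U)$. The entry $X_{G,U}((v,S),e)$ vanishes unless $v$ is an endpoint of $e$, so a row contributes only if $v \in e \cap f$. Since $G$ is simple and $e \neq f$, the edges share at most one endpoint. If $e\cap f=\emptyset$, every product is zero and we are done. So assume $e\cap f=\{v\}$. Then
\[
\langle X_{\cdot,e}, X_{\cdot,f}\rangle=\sum_{S} \epsilon_e(S)\,\epsilon_f(S),
\]
where $S$ ranges over the subsets of $E(v)$ with $|S|\equiv |\{v\}\cap U| \pmod 2$, and $\epsilon_e(S)=+1$ if $e\in S$ and $-1$ otherwise. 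The product $\epsilon_e(S)\epsilon_f(S)$ is $+1$ exactly when $S$ contains both or neither of $e,f$.

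The key step is a counting argument showing that this signed sum is zero, and this is where the degree hypothesis is used. Let $d=|E(v)|\ge 3$, and pick a third edge $g\in E(v)\setminus\{e,f\}$, which exists since $d\ge3$. Define an involution on the admissible sets $S$: toggle $e$ and toggle $g$, that is, $S\mapsto S\triangle\{e,g\}$. This map changes $|S|$ by $0$ or $\pm2$, so it preserves parity and hence maps admissible sets to admissible sets. It also flips $\epsilon_e(S)$ while leaving $\epsilon_f(S)$ unchanged, so it reverses the sign of each term. Therefore the terms cancel in pairs, and the sum is $0$.

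Equivalently, one can count directly. For each of the four membership patterns of $(e,f)$ in $S$, the number of parity-constrained completions on the remaining $d-2\ge1$ edges is $2^{d-3}$, independent of the pattern. Hence the two patterns with product $+1$ and the two with product $-1$ balance. This also shows why $d\ge3$ is needed: when $d=2$, the parity constraint would force the pattern of $(e,f)$, and the terms would not cancel.

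The only real subtlety, and the step I would check most carefully, is that the parity constraint is compatible with the cancellation. The toggle must not change $|S|\bmod 2$, and this is exactly why toggling a single edge does not work and we instead toggle a pair. The rest is bookkeeping over the case split on $|e\cap f|\in\{0,1\}$.

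As a side remark not needed for orthogonality, each column has squared norm $2\cdot 2^{d-1}$ summed over its two endpoints.
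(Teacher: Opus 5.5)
Your proof is correct and follows the paper's route: the same case split on whether the two edges share an endpoint, the same reduction to the block of the shared vertex $v$, and your ``equivalent'' count of $2^{d-3}$ admissible sets per membership pattern of $(e,f)$ is exactly the paper's argument (phrased there as counting solutions of three linearly independent equations over $\mathbb{Z}_2$). Your toggle involution $S\mapsto S\triangle\{e,g\}$ is a tidy alternative way to see the same cancellation, and it uses $d\ge 3$ in the same place; one small nit is that in your side remark the squared column norm for an edge $uv$ is $2^{d_u-1}+2^{d_v-1}$, which equals $2\cdot 2^{d-1}$ only when both endpoints have the same degree, since the lemma assumes minimum degree $3$ rather than regularity.
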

\begin{proof}
    Let $e_1, e_2 \in E(G)$ be distinct edges.
    If $e_1$ and $e_2$ do not share a vertex, then there is no row index $(v,S)$ such that the columns of $e_1$ and $e_2$ are both non-zero.
    Hence, in this case, the columns of $e_1$ and $e_2$ are clearly orthogonal.
    
    So suppose that $e_1 = uv$ and $e_2 = vw$ for $u \neq w$.
    The column of $vw$ is zero on the block for $u$ 
    and the column of $uv$ is zero on the block for~$w$.
    Hence, it remains to consider the block for~$v$.
    Thus, the inner product of the columns of $e_1$ and $e_2$ is
    \begin{equation}\label{eq:orthog1}
        \sum_{\substack{S \subseteq E(v) \\ |S| \equiv |\{v\} \cap U| \pmod 2}} 
        (-1)^{vu \in S}(-1)^{vw \in S}
        = 2^{d-3} - 2 \cdot 2^{d-3} + 2^{d-3} = 0
    \end{equation}
    for $d \coloneqq |E(v)|$.
    Here, the sum is over all row indices $(v, S)$ in the block for $v$.
    The condition $|S| \equiv |\{v\} \cap U| \pmod 2$ comes from the definition of the vertex set of $G_U$.
    By \cref{eq:matrix_def},
    we need to distinguish cases whether $vu \in S$ and $vw \in S$.
    If both edges or neither of them is contained in $S$, then the product of the two entries is $+1$, otherwise it is $-1$.

    Finally, we count how often each of these four scenarios arises.
    Here, we use that the sets $S$ are solutions to a system of linear equations over~$\ZZ_2$.
    The variables $x_e$ are indexed by edges $e \in E(v)$ incident to $v$. The constraints are
    \begin{align*}
        \sum_{e \in E(v)} x_e &= |\{v\} \cap U|
    \end{align*}
    and $x_{vu} = 1$ or $x_{vu} = 0$, and $x_{vw} = 0$ or $x_{vw} = 1$, depending on the given scenario.
    In other words, the sets $S$ are in bijection with solutions to a $d$-variable system of $3$ linearly independent equations over $\ZZ_2$.
    Hence, the number of solutions is $2^{d-3}$.
\end{proof}

\begin{lemma}\label{lem:orthog2}
    If every vertex in $G$ has degree at least $3$, then
    the columns of $X'_{G, U}$ are orthogonal.
\end{lemma}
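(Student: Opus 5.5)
The plan is to reduce \cref{lem:orthog2} directly to \cref{lem:orthog1}. We compare the inner product of two columns of $X'_{G,U}$ with the corresponding inner product of columns of $X_{G,U}$, and check that the extra signs do not break the cancellation.

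By \cref{eq:matrix_2_def}, for each edge $e$ and row $(v,S)$ we have $X'_{G,U}((v,S),e) = \epsilon(v,e)\,X_{G,U}((v,S),e)$. Here $\epsilon(v,e)=+1$ if $v$ is the first vertex of $e$, $\epsilon(v,e)=-1$ if $v$ is the second vertex, and the entry is $0$ if $v\notin e$. In the last case $X_{G,U}((v,S),e)=0$ anyway. The key observation is that $\epsilon(v,e)$ depends only on the base vertex $v$ and the edge $e$, not on the subset $S$.

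Now take distinct edges $e_1,e_2$. If they share no vertex, their columns in $X'_{G,U}$ have disjoint supports, because the support of the column for $e$ lies in the blocks of the endpoints of $e$. So they are orthogonal.

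Otherwise write $e_1=uv$ and $e_2=vw$ with $u\neq w$. Exactly as in the proof of \cref{lem:orthog1}, only the block of $v$ contributes: the column of $e_2$ vanishes on the block of $u$, and the column of $e_1$ vanishes on the block of $w$. On the block of $v$, the product of entries is $\epsilon(v,e_1)\epsilon(v,e_2)$ times the product of the corresponding $X_{G,U}$ entries. Since this factor is a constant sign on the whole block, the inner product equals
\[
\epsilon(v,e_1)\epsilon(v,e_2)\sum_{\substack{S\subseteq E(v)\\ |S|\equiv|\{v\}\cap U| \pmod 2}}(-1)^{vu\in S}(-1)^{vw\in S}.
\]
The sum is $0$ by \cref{eq:orthog1}, which uses the degree-at-least-$3$ assumption to get the $2^{d-3}$ solution count. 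Hence the inner product is $0$.

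There is no real obstacle. The only point needing care is that the sign factor is uniform over the block of $v$, so it factors out of the sum. Two edges can share at most one vertex, since $G$ is a simple graph, so exactly one block contributes and no sign cancellation across blocks needs to be tracked.
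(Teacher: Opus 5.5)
Your proof is correct and is essentially the paper's argument: the paper also factors the block-constant sign $p_1p_2$ (your $\epsilon(v,e_1)\epsilon(v,e_2)$) out of the sum over the block of the shared vertex $v$, and then reuses the four-case $2^{d-3}$ count from \cref{lem:orthog1}. Your explicit handling of the disjoint-edge case just spells out what the paper covers with ``as above.''
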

\begin{proof}
    The argument is similar to \cref{lem:orthog1}.
    As above, we need to consider only the case of two columns indexed by $e_1 = uv$ and $e_2 = vw$ for $u \neq w$ (with $v$ the shared vertex).
    The definition of $X'_{G,U}$ negates each entry depending on whether $v$ is the first or second endpoint of the edge in the fixed total ordering of $V(G)$.
    Concretely, if $p_1 \in \{+1,-1\}$ is the sign contributed by edge $e_1$ (i.e., $p_1 = +1$ if $v$ is the first vertex of $e_1$, and $p_1 = -1$ otherwise), and $p_2$ is defined analogously for $e_2$, then the inner product over the block for $v$ is
    \[
    \begin{aligned}
    &\sum_{\substack{S \subseteq E(v) \\ |S| \equiv |\{v\} \cap U| \pmod 2}}
        \bigl(p_1 \cdot (-1)^{vu \in S}\bigr)\bigl(p_2 \cdot (-1)^{vw \in S}\bigr) \\
    &\qquad= p_1 p_2 \sum_{\substack{S \subseteq E(v) \\ |S| \equiv |\{v\} \cap U| \pmod 2}}
        (-1)^{vu \in S}(-1)^{vw \in S}.
    \end{aligned}
    \]
    The sum on the right equals $2^{d-3} - 2 \cdot 2^{d-3} + 2^{d-3} = 0$ by the same four-case counting as in \cref{eq:orthog1}, regardless of the values of $p_1, p_2 \in \{+1,-1\}$ (i.e., regardless of the ordering of $u,v,w$).
    This parity-symmetry argument covers all six orderings of $\{u,v,w\}$ simultaneously: the global sign factor $p_1 p_2$ is irrelevant to the cancellation, which depends only on the balanced count of subsets $S$ with $vu \in S$, $vw \in S$, both, or neither.
\end{proof}

\begin{lemma}\label{lem:orthog3}
  The columns of $W$ are pairwise orthogonal. If, in addition, $G$ is $d$-regular, the columns of $I_{G,U}$ are pairwise orthogonal as well.
\end{lemma}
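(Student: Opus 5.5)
The plan has two parts. First, prove directly that the columns of $W$ are pairwise orthogonal. Then use $d$-regularity to transfer this to $I_{G,U}$, whose rows are copies of the rows of $W$ repeated a constant number of times.

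\textbf{Columns of $W$.} Write $N=|V(G)|$ and index rows and columns by $1,\dots,N$. From \cref{eq:matrix-A}, column $1$ is the all-ones vector. For $j\ge 2$, column $j$ has entries:
\begin{itemize}
\item $1$ in rows $i<j$,
\item $-(j-1)$ in row $j$,
\item $0$ in rows $i>j$.
\end{itemize}
The inner product of column $1$ with column $j$ is $(j-1)\cdot 1-(j-1)=0$.

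For $2\le j<l$, column $l$ equals $1$ on every row $i\le j$, since $j<l$. Column $j$ is supported on rows $i\le j$. So the inner product is
\[
\sum_{i\le j}W(i,j)\cdot 1=(j-1)-(j-1)=0.
\]
This is just the Helmert-type construction, so the first claim is a short computation.

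\textbf{Columns of $I_{G,U}$.} By \cref{eq:matrix_3_def}, for base vertices $u,u'$ we have
\[
\langle I_{G,U}(\cdot,u),I_{G,U}(\cdot,u')\rangle=\sum_{v\in V(G)} f_v\, W(v,u)W(v,u'),
\]
where $f_v$ is the number of fiber vertices $(v,S)$ over $v$. This count is the number of subsets $S\subseteq E(v)$ with the prescribed parity $|\{v\}\cap U| \bmod 2$. For $|E(v)|=d\ge 1$, exactly half of all subsets have each parity, so $f_v=2^{d-1}$. This holds independently of $v$ and of $U$ because $G$ is $d$-regular.

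The inner product is therefore $2^{d-1}\langle W(\cdot,u),W(\cdot,u')\rangle$, which vanishes for $u\neq u'$ by the first part.

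\textbf{Main obstacle.} The only point needing care is the fiber count. Regularity, with $d\ge1$, guarantees a uniform multiplicity, so that the rows are duplicated evenly. Without regularity the weights $f_v$ would differ across $v$, and orthogonality of the columns of $W$ would not transfer. I would note explicitly that $d\ge1$ is needed for the parity-halving argument, and that this holds in our setting since $d=3$.
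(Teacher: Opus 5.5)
Your proof is correct and follows the same route as the paper: a direct Helmert-type computation showing the columns of $W$ are orthogonal, then the identity $I_{G,U}^\top I_{G,U} = 2^{d-1}\, W^\top W$, which holds because every fiber has size $2^{d-1}$ under $d$-regularity. Your note that $d \ge 1$ is needed for the parity-halving count is a useful precision the paper leaves implicit: for $d=0$ and $U \neq \varnothing$, the fiber sizes would differ and the transfer would fail.
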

\begin{proof}
  The columns of $W$ are orthogonal by direct computation of the inner products in \cref{eq:matrix-A}. When $G$ is $d$-regular, each base vertex $v$ contributes the same number $2^{d-1}$ of fiber rows replicating $W(v, \cdot)$, so $I_{G,U}^\top I_{G,U} = 2^{d-1}\, W^\top W$ and the orthogonality transfers to $I_{G,U}$.
\end{proof}

\begin{lemma}\label{lem:orthogonal}
    If $G$ is a $d$-regular graph for $d \geq 3$,
    then the columns of $\tilde X_{G, U}$ are orthogonal.
\end{lemma}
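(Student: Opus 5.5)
The plan is to split the columns of $\tilde X_{G,U}$ into its three blocks $X_{G,U}$, $X'_{G,U}$ and $I_{G,U}$. Pairs of columns from the same block are already covered: \cref{lem:orthog1} handles $X_{G,U}$, \cref{lem:orthog2} handles $X'_{G,U}$, and \cref{lem:orthog3} handles $I_{G,U}$, which uses $d$-regularity. So only the three cross-block cases remain. Each of them reduces to a parity count over the fiber of a single base vertex $v$. Because $d\ge 3$, the set $\{S\subseteq E(v): |S|\equiv|\{v\}\cap U| \pmod 2\}$ has $2^{d-1}$ elements. Moreover, fixing the membership of one or two prescribed edges of $E(v)$ splits these sets into equally sized classes, by the same $\ZZ_2$ linear-system argument as in \cref{lem:orthog1}.

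\emph{$X$ versus $I$ (and $X'$ versus $I$).} The column of $I_{G,U}$ indexed by $u$ is constant on each fiber block $\{(v,S)\}$, with value $W(v,u)$. Hence the inner product with the column of $X_{G,U}$ indexed by $e$ is
\[
\sum_{v} W(v,u)\sum_{S}X_{G,U}((v,S),e).
\]
If $e\notin E(v)$ the inner sum vanishes trivially. If $e\in E(v)$, it equals (number of $S$ with $e\in S$) minus (number with $e\notin S$), which is $2^{d-2}-2^{d-2}=0$ by the one-constraint count. On each fiber block, the column of $X'_{G,U}$ indexed by $e$ is a global sign $\pm1$ times the corresponding column of $X_{G,U}$, so the same computation gives zero.

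\emph{$X$ versus $X'$.} Take the column $e$ of $X_{G,U}$ and the column $f$ of $X'_{G,U}$. If $e$ and $f$ share no vertex, their supports are disjoint. If $e\neq f$ share a vertex $v$, the only common support is the fiber block of $v$. There, $X'_{G,U}(\cdot,f)=\pm X_{G,U}(\cdot,f)$, so the inner product is $\pm$ the block sum computed in \cref{eq:orthog1}, which vanishes. The genuinely new case is $e=f=uv$ with $u$ the first vertex. On the $u$-block the two columns agree, and on the $v$-block they are negatives of each other. Since all entries are $\pm1$, the inner product is $|\text{block } u|-|\text{block } v| = 2^{d-1}-2^{d-1}=0$.

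I expect the main point of care to be exactly this diagonal case $e=f$. It is the reason the sign twist in \cref{eq:matrix_2_def} is needed, and it uses that both endpoint fibers have the same size; this holds here since every fiber has $2^{d-1}$ rows. Everything else is bookkeeping on supports together with the balanced-count argument already used for \cref{lem:orthog1}.
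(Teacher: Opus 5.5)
Your proposal is correct and follows the paper's proof essentially step for step. The within-block cases are delegated to \cref{lem:orthog1,lem:orthog2,lem:orthog3}. The $X$/$X'$-versus-$I$ products vanish because $I_{G,U}$ is constant on fibers while each edge column sums to $2^{d-2}-2^{d-2}=0$ on every fiber. The $X$-versus-$X'$ products vanish by the count of \cref{eq:orthog1} when the edges meet in one vertex, and by $2^{d-1}-2^{d-1}=0$ in the diagonal case $e=f$, which is exactly where the paper also notes that equal fiber sizes (regularity) are essential.
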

\begin{proof}
    By \cref{lem:orthog1,lem:orthog2,lem:orthog3},
    the columns of $X_{G, U}$, $X'_{G, U}$, and $I_{G, U}$ are respectively orthogonal.
    Thus, 
    it remains to consider inner products of pairs of columns stemming from different matrices $X_{G, U}$, $X'_{G, U}$, and $I_{G, U}$.
    Distinguish cases:
    \begin{enumerate}
        \item One column $e \in E(G)$ from $X_{G, U}$ or $X'_{G, U}$ and one column $u \in V(G)$ from $I_{G, U}$.

        The crucial observation is that the columns of $I_{G, U}$ are constant on sets of vertices $(v, S)$ belonging to the same vertex $v \in V(G)$.
        We argue that the columns of $X_{G, U}$ and $X'_{G, U}$ are orthogonal to the subspace of vectors in $\mathbb{R}^{V(G_U)}$ that are constant on these fibers.
        Let $u \in V(G)$ and consider the inner product of a column indexed by $e \in E(G)$ of $X_{G, U}$ or $X'_{G, U}$ with the vector in $\mathbb{R}^{V(G_U)}$ that is $1$ on vertices belonging to $u$ and zero otherwise.

        If $e$ is not incident to $u$, then the inner product is zero.
        Otherwise, up to sign, the inner product is equal to 
        \[
            \sum_{\substack{S \subseteq E(u) \\ |S| \equiv |U \cap \{u\}| \pmod 2}} (-1)^{e \in S}
            = 2^{d-2} - 2^{d-2} = 0.
        \]
        Here, as in \cref{lem:orthog1}, we observe that $S \subseteq E(u)$ such that $|S| \equiv |U \cap \{u\}| \pmod 2$ and $e \in S$ correspond to solutions to a system of linear equations over $\ZZ_2$ with $d$ variables and $2$ linearly independent constraints.

        \item One column $e \in E(G)$ from $X_{G, U}$ and one column $e' \in E(G)$ from $X'_{G, U}$.

        The case when $e$ and $e'$ intersect only in one vertex is similar to what has been argued in \cref{lem:orthog2}.
        If $e = e' = uv$ where wlog $u$ precedes $v$ in the ordering on $G$,
        then the inner product is
        \begin{align*}
            &\sum_{\substack{S \subseteq E(u) \\ |S| \equiv |U \cap \{u\}| \pmod 2}} (-1)^{e \in S}(-1)^{e' \in S}
            - 
            \sum_{\substack{T \subseteq E(v) \\ |T| \equiv |U \cap \{v\}| \pmod 2}} (-1)^{e \in T}(-1)^{e' \in T} \\
            &= 
             \sum_{\substack{S \subseteq E(u) \\ |S| \equiv |U \cap \{u\}| \pmod 2}} 1
             - \sum_{\substack{T \subseteq E(v) \\ |T| \equiv |U \cap \{v\}| \pmod 2}} 1 \\
            &=
            2^{d-1} - 2^{d-1} 
            = 0.
        \end{align*}
        Here, the assumption that $G$ is regular is crucial. \qedhere
    \end{enumerate}
\end{proof}

\subsubsection{Definition of the multigraphs}
\label{def:multigraphs}
By \cref{lem:orthogonal},
the matrix $\tilde{X}_{G, U}$ has orthogonal columns.
Furthermore, by \cref{eq:matrix-A}, the all-ones vector is one of these columns.
All entries of $\tilde{X}_{G, U}$ are integers.

We define a matrix 
\begin{equation}\label{eq:multigraph-def}
    A_{G,U} \coloneqq \tilde{X}_{G, U} D (\tilde{X}_{G, U})^\top
\end{equation}
by fixing some diagonal matrix $D$ with distinct diagonal entries.
By choosing a sufficiently large positive entry for the index corresponding to the all-ones vector in $\tilde{X}_{G, U}$, we can ensure that the matrix $ A_{G,U}$ has only positive integer entries.

Define $A_{G,0} \coloneqq A_{G, \emptyset}$ and $A_{G,1} \coloneqq A_{G, \{v\}}$ for an arbitrary vertex $v \in V(G)$.
Here, we use the same matrix $D$ for both $A_{G, 0}$ and $A_{G, 1}$.

Thus, we have defined a pair of symmetric simple-spectrum matrices with positive integer entries.
These will be the adjacency matrices of the multigraphs stipulated by \cref{thm:main-incomp}.

\subsubsection{No isomorphisms}

\begin{lemma}\label{lem:iso}
    The multigraphs $A_{G,0}$ and $A_{G, 1}$ are non-isomorphic.
\end{lemma}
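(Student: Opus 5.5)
We argue directly from the eigendecompositions. We suppose that a permutation matrix $P$ satisfies $P A_{G,\emptyset} P^\top = A_{G,\{v_0\}}$ and derive a parity contradiction, in the same spirit as the classical argument that $G_0 \not\cong G_1$. Write $\tilde X_0 = \tilde X_{G,\emptyset}$ and $\tilde X_1 = \tilde X_{G,\{v_0\}}$. We never need the fibre-identifying block $I$; the blocks $X$ and $X'$ suffice.

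\textbf{Step 1: the permutation acts on eigenvectors by signs.} Both matrices are built with the same diagonal $D$, whose entries are pairwise distinct. Hence every eigenspace of $A_{G,\emptyset}$ and of $A_{G,\{v_0\}}$ is one-dimensional, spanned by the corresponding column of $\tilde X_0$, respectively $\tilde X_1$. By \cref{lem:orthogonal} these columns are orthogonal and nonzero. Since $P$ conjugates $A_{G,\emptyset}$ to $A_{G,\{v_0\}}$, it maps the $\lambda$-eigenspace of the first matrix onto the $\lambda$-eigenspace of the second. Therefore $P\tilde X_0 = \tilde X_1 \Sigma$ for some diagonal $\Sigma$, with columns matched by their common index in $E(G)\uplus E(G)\uplus V(G)$.

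The column norms of $\tilde X_0$ and $\tilde X_1$ agree column by column. Each $X$- or $X'$-column indexed by $e=uv$ has squared norm $2\cdot 2^{d-1}$, and the $I$-columns have identical norms in both matrices. A permutation preserves norms, so $\Sigma$ is a $\pm1$ sign matrix. In particular, restricting to the $X$-block gives $P X_{G,\emptyset} = X_{G,\{v_0\}}\,\diag(s_e)_{e\in E(G)}$ with $s_e\in\{\pm1\}$.

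\textbf{Step 2: the permutation preserves fibres.} Take a row $(v,S)$ of $X_{G,\emptyset}$, and let $(w,T)$ be the row of $X_{G,\{v_0\}}$ to which $P$ sends it. Sign changes on columns do not alter supports. The support of row $(v,S)$ in the $X$-block is exactly $E(v)$, so $E(v)=E(w)$. Since $G$ is a simple $3$-regular graph, distinct vertices share at most one edge, and therefore $w=v$.

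\textbf{Step 3: compare the subsets.} For every $e\in E(v)$, the matched rows satisfy $X_1((v,T),e) = s_e\, X_0((v,S),e)$. By \eqref{eq:matrix_def}, this says $e\in T \iff (e\in S \text{ xor } s_e=-1)$. Put $F=\{e\in E(G): s_e=-1\}$. Then $T = S \mathbin\triangle (F\cap E(v))$.

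\textbf{Step 4: parity contradiction.} Taking cardinalities mod $2$ in Step 3 gives $|T|\equiv |S| + |F\cap E(v)|$. The parity constraints in \cref{def:cfi} say $|S|\equiv 0$ and $|T|\equiv |\{v\}\cap\{v_0\}|$. Hence $|F\cap E(v)|\equiv [v=v_0] \pmod 2$ for every $v\in V(G)$. Summing over all $v$, each edge of $F$ is counted twice, so $0\equiv 2|F| \equiv \sum_v |F\cap E(v)| \equiv 1 \pmod 2$, a contradiction. Hence $A_{G,0}\not\cong A_{G,1}$.

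The only delicate point is Step 1. Because the spectrum is simple, $P$ can only rescale eigenvectors, and it cannot mix columns; equal column norms then force the scalars to be $\pm1$. Once this is in place, the rest is the standard CFI parity count.
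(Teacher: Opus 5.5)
Your proof is correct, and from Step 2 on it takes a genuinely different route from the paper. Step 1 matches the paper's first step: simple spectrum plus equal column norms give edge signs $\alpha_e\in\{\pm1\}$.

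The paper then claims that the row permutation $\pi$ is an isomorphism $G_1\to G_0$ of the CFI graphs. It reads equal $\pm1$ entries in the column of $e=uv$ as adjacency, and concludes by citing Roberson's result that $G_0\not\cong G_1$.

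You instead prove that $\pi$ preserves fibres and compute its effect on labels, $T=S\mathbin\triangle(F\cap E(v))$. You then finish with the handshake parity count, so the CFI non-isomorphism theorem is never invoked.

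The paper's reduction is shorter and presents the encoding as faithful to the underlying CFI graph. However, it leaves implicit exactly the step you make explicit. Without fibre preservation, two rows with equal $\pm1$ entries in column $uv$ could lie in the same fibre, where they are not adjacent.

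Your argument is self-contained and does not use connectivity of $G$. It also shows that only fibre-preserving maps need to be excluded, and for those the elementary parity count suffices. The cited theorem rules out arbitrary isomorphisms of the uncoloured CFI graphs, which is more than is needed here.

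One point to tighten in Step 1, which you inherit from the paper's description of the construction. The columns $\tilde x_j$ of $\tilde X_{G,U}$ are orthogonal but not normalized, so $A_{G,U}\tilde x_j = D_{jj}\|\tilde x_j\|^2\,\tilde x_j$. The eigenvalue attached to column $j$ is therefore $D_{jj}\|\tilde x_j\|^2$, not $D_{jj}$.

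Pairwise distinct $D_{jj}$ alone therefore do not force one-dimensional eigenspaces. What you need is that these products are pairwise distinct, and the choice of $D$ must guarantee this. You already show that column norms agree for $U=\emptyset$ and $U=\{v_0\}$, so each column index carries the same eigenvalue in both matrices. With that in place, your matching of eigenspaces by column index goes through unchanged.
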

\begin{proof}
    Towards a contradiction, 
    let $\pi\colon V(G_{1}) \rightarrow V(G_{0})$ be the isomorphism witnessing $A_{G,0} \cong A_{G, 1}$.
    We argue that this map gives an isomorphism $G_1 \to G_0$.
    No such map exists by \cite[Corollary~3.7]{roberson_oddomorphisms_2022}.
    
    Write $P \in \{0,1\}^{V(G_0) \times V(G_1)}$ for the permutation matrix encoding $\pi$. It holds that
    \[
        A_{G,0} = P A_{G,1} P^\top.
    \]
    It follows from this equation that $P$ maps eigenvectors of $A_{G,1}$ for eigenvalue $\lambda$
    to eigenvectors of $A_{G, 0}$ with the same eigenvalue.
    Furthermore, for any vector $v$, it holds that $Pv$ and $v$ have the same norm.
    Since the diagonal entries of the matrix $D$ in \cref{eq:multigraph-def}
    is the same for $A_{G,1}$  and $A_{G, 0}$,
    there are weights $\alpha_{e} \in \{\pm 1\}$ for every $e \in E(G)$ such that
\[
X_{G,1}((v,S),e) = \alpha_{e} \cdot X_{G,0}(\pi(v,S),e)
\]
for all $(v,S) \in V(G_{1})$ and $e \in E(G)$. We claim that $\pi$ is an isomorphism $G_{1} \rightarrow G_{0}$. Let $(v,S), (u,T) \in V(G_{U})$ be adjacent, i.e., $vu \in E(G)$ and $vu \notin S \triangle T$.

Let $e \coloneqq uv$. By \cref{eq:matrix_def}, it holds that $X_{G,1}((v,S),e) = X_{G,1}((u,T),e) \in \{\pm 1\}$. By assumption:
\[
X_{G,0}(\pi(v,S),e) = \alpha_{e} \cdot X_{G,1}((v,S),e) = \alpha_{e} \cdot X_{G,U}((u,T),e) = X_{G,U'}(\pi(u,T),e) \in \{\pm 1\}.
\]
It follows, again by \cref{eq:matrix_def}, that $\pi(v,S)$ and $\pi(u,T)$ are adjacent.
\end{proof}

\subsubsection{Weisfeiler--Leman indistinguishability}

We show that the Weisfeiler--Leman algorithm on $A_{G, 0}$ and $A_{G, 1}$ can be simulated by running it on the CFI graphs $G_0$ and $G_1$.
The CFI graphs are known not to be distinguished by Weisfeiler--Leman.

For two vertices $(v, S), (u, T) \in V(G_U)$, the entry $A_{G, U}((v, S), (u, T))$ is
\begin{align}
    A_{G, U}((v, S), (u, T))  = & \sum_{e \in E(G)} D_e \cdot X_{G, U}((v,S), e) \cdot X_{G, U}((u,T), e) \notag \\
    &+ 
    \sum_{e' \in E(G)} D'_{e'} \cdot X'_{G, U}((v,S), e') \cdot X'_{G, U}((u,T), e') \notag \\
    &+
    \sum_{w \in V(G)} D''_{w} \cdot I_{G, U}((v,S), w) \cdot I_{G, U}((u,T), w). \label{eq:multigraph-entries-expanded}
\end{align}
Here, $D, D', D''$ denote the values of the diagonal matrix from \cref{eq:multigraph-def}.
As a first step, we compute the products in each of the sums explicitly.

\begin{lemma}
Let $G$ be a graph and $u, v,w \in V(G)$ with $u \neq v$. For $U \subseteq V(G)$ and $e \in E(G)$, it holds that
\begin{align} \allowdisplaybreaks
    X_{G,U}((u,S),e)^{2} = X'_{G,U}((u,S),e)^{2} &= 
\begin{cases}
1, & \text{if } e \in E(u), \\
0 ,& \text{otherwise}.
\end{cases}
\label{eq:sq_entry} \\
X_{G,U}((u,S),e) \cdot X_{G,U}((v,T),e) &= 
\begin{cases}
1, & \text{if } e = uv \text{ and } (u,S) \sim (v,T) \\
-1, &  \text{if } e = uv \text{ and } (u,S) \not\sim (v,T) \\
0, & \text{otherwise}.
\end{cases}
\label{eq:prod_entry} \\
X'_{G,U}((u,S),e) \cdot X'_{G,U}((v,T),e) &= 
\begin{cases}
-1, & \text{if } e = uv \text{ and } (u,S) \sim (v,T) \\
1, &  \text{if } e = uv \text{ and } (u,S) \not\sim (v,T) \\
0, & \text{otherwise}.
\end{cases}
\label{eq:prod_entry_b} \\
X_{G,U}((u,S),e) \cdot X_{G,U}((u,T),e) &= 
X'_{G,U}((u,S),e) \cdot X'_{G,U}((u,T),e) \notag \\
&=
\begin{cases}
1, & \text{if } e \in E(u) \text{ and } e \notin S \triangle T, \\
-1, & \text{if } e \in E(u) \text{ and } e \in S \triangle T, \\
0, & \text{otherwise}.
\end{cases}
\label{eq:prod_entry_c}
\end{align}
\end{lemma}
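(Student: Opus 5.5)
The plan is a direct case analysis from the definitions \cref{eq:matrix_def} and \cref{eq:matrix_2_def}. First I will rewrite the entries in closed form. For $e \in E(u)$ we have $X_{G,U}((u,S),e) = (-1)^{[e\notin S]}$, and $X_{G,U}((u,S),e)=0$ for $e \notin E(u)$. Likewise $X'_{G,U}((u,S),e) = p_u(e)\, X_{G,U}((u,S),e)$, where $p_u(e)\in\{\pm1\}$ is $+1$ if $u$ is the first endpoint of $e$ in the fixed ordering and $-1$ if it is the second. The sign $p_u(e)$ depends only on $u$ and $e$, not on $S$. Every identity below then reduces to multiplying signs.

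For \cref{eq:sq_entry}, every nonzero entry is $\pm1$ and $p_u(e)^2=1$. Hence both squares equal the indicator of $e\in E(u)$.

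For \cref{eq:prod_entry} with $u\neq v$, I will first dispose of the zero case. The product is nonzero only if $e\in E(u)\cap E(v)$. Since $G$ is simple, this forces $e=uv$, which in particular requires $uv\in E(G)$. In that case the product is $(-1)^{[uv\notin S]+[uv\notin T]}$. This equals $+1$ exactly when $uv$ lies in both or neither of $S,T$, that is, when $uv\notin S\triangle T$. By \cref{def:cfi}, together with $uv\in E(G)$, this is precisely adjacency $(u,S)\sim(v,T)$ in $G_U$. Otherwise the product equals $-1$.

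For \cref{eq:prod_entry_b}, the same computation applies with an extra factor $p_u(uv)\,p_v(uv)$. Since exactly one of $u,v$ is the first endpoint of $uv$, this factor is $-1$, and so the signs of \cref{eq:prod_entry} flip.

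For \cref{eq:prod_entry_c}, both rows lie in the fiber of the same base vertex $u$. The product is therefore nonzero iff $e\in E(u)$, and then it equals $(-1)^{[e\notin S]+[e\notin T]}$. This is $+1$ iff $e\notin S\triangle T$ and $-1$ otherwise. For $X'$ the extra factor is $p_u(e)^2=1$, so the two products agree.

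There is no real obstacle here. The only points needing care are two facts used above: the orientation sign $p_u(e)$ is independent of the subset $S$, and the two endpoints of an edge receive opposite orientation signs. The first makes the factor cancel in the same-fiber case; the second produces the sign flip in \cref{eq:prod_entry_b}. I will also note explicitly that the adjacency relation $\sim$ is that of $G_U$ from \cref{def:cfi}, so that the case $e=uv$ with $uv\notin S\triangle T$ matches it exactly.
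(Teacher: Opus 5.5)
Your proposal is correct and takes the same route as the paper. The paper's proof just says the identities are immediate from the definitions, that the $X'$ product for $u\neq v$ differs from the $X$ product by a sign flip, and that in the same-fiber case this sign flip is squared and cancels; your orientation factor $p_u(e)$, which takes opposite values at the two endpoints of an edge, is exactly that argument written out in full.
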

\begin{proof}
    Immediate from \cref{eq:matrix_def,eq:matrix_2_def}.
    \Cref{eq:prod_entry_b} differs from \cref{eq:prod_entry} only by the sign flip.
    In \cref{eq:prod_entry_c} the sign flip is squared and hence neutralized.
\end{proof}

For a CFI graph $G_U$, write $G_{U}^{*}$ for the vertex-colored graph obtained from $G_U$ by coloring $(v, S)$ by~$v$. That is, every vertex fiber gets a unique color.
Recall \cref{def:k-wl}.
Since we are comparing WL colors of multiple graphs, we write $C^{(t)}(G, \bar{v})$ for the round-$t$  $k$-WL color of the tuple $\bar{v} \in V(G)^k$.

\begin{lemma}\label{lem:wl-cfi-to-multigraph}\label{cor:wl-graph-to-matrix}
Let $k \ge 1$ and $r \ge 0$. Let $G$ be a graph of minimum degree $\geq 3$ and $U, U' \subseteq V(G)$. For all $\bar{x} \in V(G_{U})^{k}$ and $\bar{x'} \in V(G_{U'})^{k}$,
\[
C^{(r+1)}(G_{U}^{*},\bar{x}) = 
C^{(r+1)}(G_{U'}^{*},\bar{x'}) \implies 
C^{(r)}(A_{G,U},\bar{x}) = C^{(r)}(A_{G,U'},\bar{x'}).
\]
In particular, if $G_{U}^{*}$ and $G_{U'}^{*}$ are not distinguished by the $k$-dimensional Weisfeiler--Leman algorithm, 
then $A_{G,U}$ and $A_{G,U'}$ are not distinguished by the $k$-dimensional Weisfeiler--Leman algorithm.
\end{lemma}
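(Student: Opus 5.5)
The plan is to prove the implication by induction on $r$. The base case carries the real content: the initial $k$-WL color of a tuple in $A_{G,U}$ must be shown to be determined by the round-$1$ $k$-WL color of the same tuple in $G_U^*$. The inductive step is then formal, because the refinement rule of \cref{def:k-wl} has the same shape on both sides.

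\textbf{Base case.} Assume $C^{(1)}(G_U^*,\bar x)=C^{(1)}(G_{U'}^*,\bar x')$. Already $C^{(0)}$ of the colored graph fixes, for each pair $i,j$, the following data:
\begin{itemize}
\item the base colors $v_i$ (the base vertex of the fiber containing $x_i$);
\item whether $x_i=x_j$;
\item whether $x_i\sim x_j$ in $G_U$.
\end{itemize}
The goal is to show that these data determine each entry $A_{G,U}(x_i,x_j)$, via the expansion \cref{eq:multigraph-entries-expanded} together with \cref{eq:sq_entry,eq:prod_entry,eq:prod_entry_b,eq:prod_entry_c}. There are three cases.
\begin{enumerate}
\item $v_i\neq v_j$. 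The $X$ and $X'$ sums reduce to the single edge $e=v_iv_j$, if it exists. By \cref{eq:prod_entry,eq:prod_entry_b} they contribute $\pm(D_e-D'_e)$, with the sign given by adjacency. The $I$ sum is $\sum_w D''_w W(v_i,w)W(v_j,w)$, which depends only on $v_i,v_j$.
\item $x_i=x_j$. By \cref{eq:sq_entry} the entry is $\sum_{e\in E(v_i)}(D_e+D'_e)+\sum_w D''_w W(v_i,w)^2$, which depends only on $v_i$.
\item $v_i=v_j$ but $x_i=(v,S)\neq x_j=(v,T)$. By \cref{eq:prod_entry_c} the entry is $\sum_{e\in E(v)}(D_e+D'_e)(-1)^{[e\in S\triangle T]}$ plus the same $I$ term as in case 2. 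This requires knowing which edges lie in $S\triangle T$, and adjacency within a fiber does not give that directly (fiber vertices are never adjacent to each other).
\end{enumerate}

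\textbf{Main obstacle: case 3.} The plan is to recover $S\triangle T$ from one refinement round. For each neighbor $u$ of $v$, compare the neighborhoods of $(v,S)$ and $(v,T)$ inside the fiber of $u$. By definition, $e=vu\notin S\triangle T$ exactly when $(v,S)$ and $(v,T)$ have the same neighbors in fiber $u$. In a degree-$\ge3$ fiber, each fiber vertex is adjacent to exactly half of the fiber of $u$, and that half is determined by the bit $[e\in S]$. So the two cases are: either the neighborhoods coincide, or they are complementary, in which case the two vertices have no common neighbor in fiber $u$.

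The $j$-th neighborhood multiset in \cref{def:k-wl} substitutes a vertex $w$ for one coordinate. We apply it to a tuple position other than $i,j$; if $k=2$, we substitute position $j$ and compare against position $i$ via the atomic type. This records, for each color $u$, how many $w$ of color $u$ are adjacent to both $x_i$ and $x_j$. That count is nonzero if and only if $vu\notin S\triangle T$.

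For $k=1$ case 3 does not arise, since the tuple has a single entry. For $k=2$, a pair $(x_i,x_j)$ is itself a tuple, and substituting a coordinate gives the common-neighbor counts through the atomic types of $(x_i,w)$ and $(w,x_j)$. Strictly speaking this is the folklore-style count, and I will check that the oblivious $k$-WL of \cref{def:k-wl} still captures it. If it does not, the fallback is to use the $r+1$ versus $r$ slack more carefully, or to accept a shift in dimension.

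\textbf{Inductive step.} The round-$(r+1)$ color in $A_{G,U}$ hashes $C^{(r)}(A,\bar x)$ with the multisets $\{\!\{C^{(r)}(A,\bar y):\bar y\in N_j(\bar x)\}\!\}$. Equality of $C^{(r+2)}$ in $G^*$ yields equality of $C^{(r+1)}(G^*,\bar x)$ and a bijection between the $N_j$-neighborhoods that preserves $C^{(r+1)}$ in $G^*$. Applying the induction hypothesis termwise gives equal multisets on the $A$ side.

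\textbf{Conclusion.} Stable colorings and the global label then transfer. Non-distinction of $G_U^*$ and $G_{U'}^*$ gives equal multisets of stable colors, and the implication pushes these equalities to $A_{G,U}$ and $A_{G,U'}$.
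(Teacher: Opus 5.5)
Your route is the paper's: induction on $r$; a base case obtained by expanding each entry of $A_{G,U}$ into its $X$, $X'$ and $I$ contributions; the same-fiber case settled by the shared-neighbor criterion (your half-fiber observation is the paper's \cref{claim:shared-neighbor}); and a formal inductive step. The step you left open (``I will check that the oblivious $k$-WL still captures it'') is a genuine gap, and it does not close for $k=2$.

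Under the oblivious refinement of \cref{def:k-wl} with $k=2$, the round-$1$ color of $(x_1,x_2)$ contains only the two separate multisets $\{\!\{C^{(0)}(w,x_2)\}\!\}$ and $\{\!\{C^{(0)}(x_1,w)\}\!\}$. These are colored degrees. They never record the joint type of a single $w$ with both entries. In $G_U^*$, every vertex of fiber $v$ is adjacent to exactly half of fiber $u$ for each $u\sim v$. A short induction therefore shows that the oblivious $2$-WL color of a pair, at every round, depends only on the two base vertices and the atomic type. So your fallback of exploiting the $r+1$ versus $r$ slack cannot work, and the first assertion of the lemma is false for $k=2$ under \cref{def:k-wl}. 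Take $G=K_4$, $U=U'=\emptyset$, $E(v)=\{e_1,e_2,e_3\}$, and the tuples $\bar x=((v,\emptyset),(v,\{e_1,e_2\}))$ and $\bar x'=((v,\emptyset),(v,\{e_1,e_3\}))$. They receive equal colors in $G_U^*$ at every round. Their off-diagonal entries in $A_{G,U}$, however, differ by $2\bigl((D_{e_3}+D'_{e_3})-(D_{e_2}+D'_{e_2})\bigr)$, which is nonzero for generic $D$.

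You therefore have to commit to your other option, a change of WL variant or dimension. The first choice is to read $k$-WL in the folklore sense, whose update hashes $\{\!\{(C(\bar x[w/1]),\dots,C(\bar x[w/k])):w\}\!\}$, with $\bar x[w/j]$ denoting $\bar x$ with its $j$-th entry replaced by $w$. That update records the joint atomic types of $(x_1,w)$ and $(w,x_2)$ for each $w$. The cited CFI indistinguishability results of Neuen and Grohe are about this version. The paper's own proof, written for $k=2$, asserts that one round detects a shared neighbor, which is valid only in this reading. The second choice is to keep \cref{def:k-wl} and assume $k\ge 3$. Then your substitution at a third coordinate gives exactly the needed common-neighbor counts. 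Either choice suffices for \cref{thm:main-incomp}, because indistinguishability is monotone in $k$ and folklore $k$-WL refines oblivious $k$-WL. With that fixed, your case analysis, inductive step and transfer to the global label go through as written.
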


\begin{proof}
It suffices to establish the base case $r=0$. 
The rest follows by induction. 
To ease notation, consider the case $k=2$. Let $\bar{x}=((u,S), (v,T))$ and $\bar{x'}=((u,S'), (v,T'))$. Note that since $G_{U}^{*}$ and $G_{U'}^{*}$ are vertex-colored graphs, we may suppose that the first components $u, v$ of $\bar{x}$ and $\bar{x'}$ are the same.

We shall show that $C^{(1)}(G_{U}^{*},\bar{x}) = 
C^{(1)}(G_{U'}^{*},\bar{x'})$
implies, for every $e \in E(G)$, and $w \in V(G)$ that 
\begin{align}
    X_{G,U}((u,S),e) \cdot X_{G,U}((v,T),e) &= 
    X_{G,U'}((u,S'),e) \cdot X_{G,U'}((v,T'),e), \label{eq:ip1} \\
    X'_{G,U}((u,S),e) \cdot X'_{G,U}((v,T),e) &= 
    X'_{G,U'}((u,S'),e) \cdot X'_{G,U'}((v,T'),e), \text{ and } \label{eq:ip2}\\
    I_{G, U}((u, S), w) \cdot I_{G, U}((v, T), w)
    &= I_{G, U'}((u, S'), w) \cdot I_{G, U'}((v, T'), w).\label{eq:ip3}
\end{align}
By \cref{eq:multigraph-entries-expanded}, these identities imply that
\[
    A_{G, U}((u, S), (v,T)) = A_{G, U'}((u, S'), (v,T')),
\]
as desired.

\Cref{eq:sq_entry,eq:prod_entry,eq:prod_entry_b} readily
imply \cref{eq:ip1,eq:ip2} if $u \neq v$
since the initial color $C^{(0)}(G_{U}^{*},\bar{x}) = 
C^{(0)}(G_{U'}^{*},\bar{x'})$
encodes whether $(u,S) \sim (v, T)$.

The case $u = v$ is more subtle since the condition \cref{eq:prod_entry_c} is not encoded in the initial coloring.  
By \cref{eq:prod_entry_c}, 
it suffices to consider the case $e = uw$ for some $w \in V(G)$.

\begin{claim}\label{claim:shared-neighbor}
It holds that $e = uw \notin S \triangle T$ if, and only if, there exists a shared neighbor $(w, R)$ of $(u, S)$ and $(u, T)$.
\end{claim}
\begin{claimproof}
    For the forward direction, by the assumption that every vertex in $G$ has at least three neighbors, there exist two vertices $(w, R_{0}), (w, R_{1}) \in V(G_{U})$ such that $uw \notin R_{0}$ and $uw \in R_{1}$. Depending on whether $e \in S$ or not, it follows that $(u, S)$ and $(w, R_{0})$ or $(w, R_{1})$ are adjacent. 
    Hence, if $e \not\in S \triangle T$, that is, $S$ and $T$ \enquote{agree} about $e$,
    then $(u, S)$ and $(u, T)$ are both adjacent to one of $(w, R_{0})$, $(w, R_{1})$.
    
    Conversely, if $(u, S)$ and $(u, T)$ are both adjacent to $(w, R)$, 
    then $uw \not\in S \triangle R$ and $uw \not\in T \triangle R$ by \cref{def:cfi}.
    Hence, $uw \not\in (S \triangle R) \triangle (T \triangle R) = S \triangle T$, as desired.
\end{claimproof}

After one iteration of Weisfeiler--Leman, 
the color $C^{(1)}(G_{U}^{*},\bar{x}) = 
C^{(1)}(G_{U'}^{*},\bar{x'})$ encodes whether $(u,S)$ and $(v, T)$ 
share a neighbor of color $w$ for $e = uw$.
Finally, \cref{claim:shared-neighbor,eq:prod_entry_c} imply \cref{eq:ip1,eq:ip2} in the case $u = v$.

It remains to consider \cref{eq:ip3}.
By \cref{eq:matrix_3_def},
\[
I_{G, U}((u, S), w) \cdot I_{G, U}((v, T), w)
= W(u,w) \cdot W(v, w)
= I_{G, U'}((u, S'), w) \cdot I_{G, U'}((v, T'), w).
\]
In other words, the value of $I_{G, U}((u, S), w)$ does not depend on $S$.
\end{proof}

\subsubsection{Summary}
This concludes the preparations for the main theorem.

\mainIncomp*

\begin{proof}
By \cite{kolesnik_lower_2014} and \cite[Lemma~5 and Corollary~7]{dvorak_strongly_2016},
there exists, for every sufficiently large $n \in \mathbb{N}$,
a connected $3$-regular $2n$-vertex graph $G_n$ of treewidth $\geq \frac{\alpha}{3(\alpha+1)} \cdot 2n - 1 = \frac{n}{12} - 1$ for $\alpha = \frac17$.
By \cite[Lemma~12]{neuen_homomorphism-distinguishing_2024} and \cite[Corollary~4.7]{grohe_logic_2021}, 
the vertex-colored CFI graphs $G_{n,0}^{*}$ and $G_{n,1}^{*}$ of $G_{n}$ are not distinguished by the $\left\lceil \frac{n}{12} - 1 \right\rceil$-dimensional Weisfeiler--Leman algorithm. 
By \cref{cor:wl-graph-to-matrix}, so are multigraphs $A_{G_n,0}$ and $A_{G_n,1}$.
By \cref{lem:iso}, the multigraphs $A_{G_n,0}$ and $A_{G_n,1}$ are non-isomorphic.
They have simple spectrum by construction in \cref{def:multigraphs}.
\end{proof}

\subsection{CFI Construction on $C_3$: Detailed Example}\label{app:cfi-c3-details}
\begin{figure}[t]
    \centering
    \includegraphics[width=\textwidth]{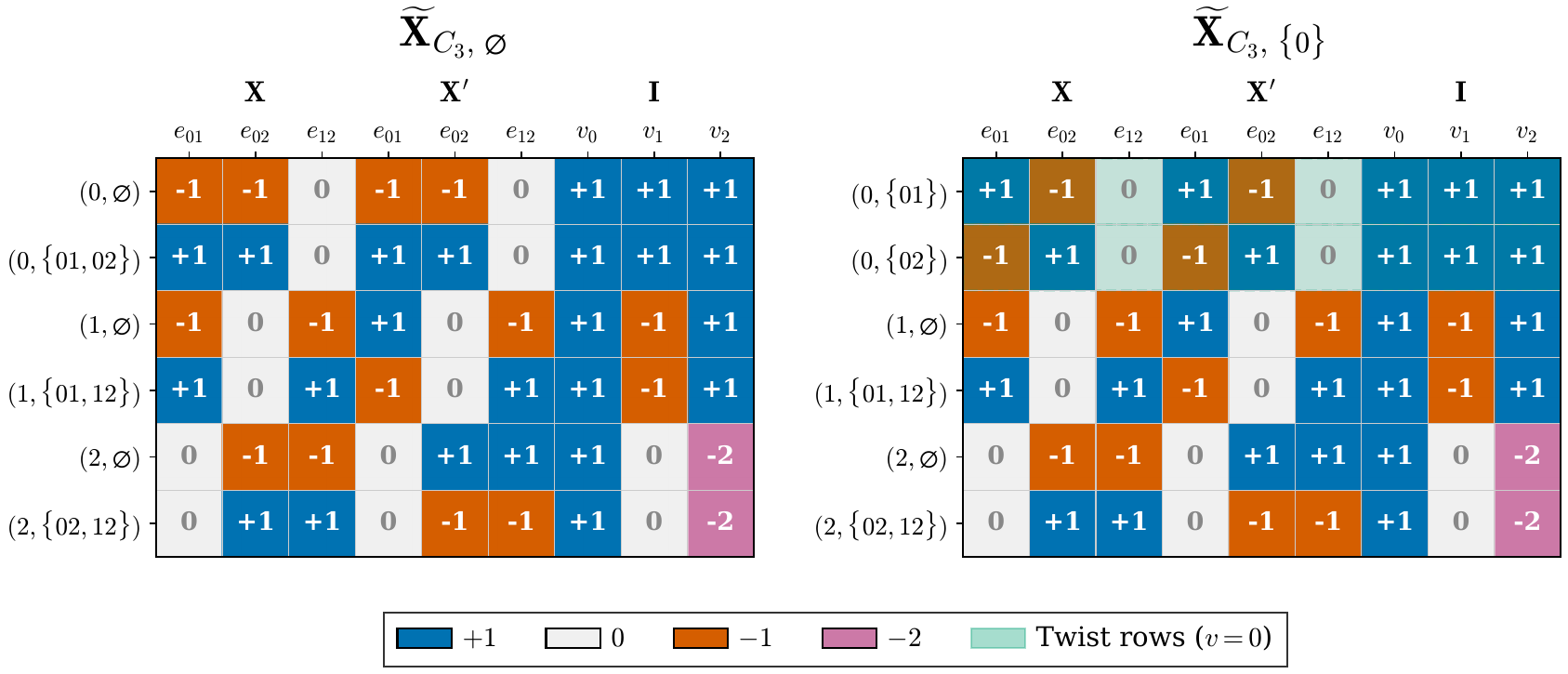}
    \caption{Integral encoding matrices for the CFI construction on $C_3$.
    \textbf{Left:}~$\tilde X_{C_3,\varnothing}$ (trivial lift): the vertex-$0$ rows have \emph{coherent} signs in the $X$-block.
    \textbf{Right:}~$\tilde X_{C_3,\{0\}}$ (non-trivial lift, highlighted rows): the parity twist at vertex~$0$ \emph{splits} the signs, making the two matrices non-isomorphic under $S_6 \times \{\pm 1\}^{9}$.}
    \label{fig:integral-encoding}
\end{figure}
We provide the full details of the CFI construction on $C_3$ (\cref{ex:cfi-c3}).
Let $G = C_3$, the triangle on vertex set $\{0, 1, 2\}$ with edges
$\{0,1\}$, $\{1,2\}$, $\{0,2\}$.
Each vertex $i$ has degree~$2$, so $|E(i)| = 2$ and the even-cardinality
subsets of $E(i)$ are $\varnothing$ and $E(i)$ itself,
yielding two fiber vertices per base vertex and six CFI vertices in total.

\paragraph{Trivial lift ($U = \varnothing$).}
The fiber of each vertex~$i$ is
$\{(i, \varnothing),\; (i, E(i))\}$.
Concretely:
\begin{align*}
    \text{vertex } 0 &: \quad (0,\varnothing), \quad (0,\{\{0,1\},\{0,2\}\}), \\
    \text{vertex } 1 &: \quad (1,\varnothing), \quad (1,\{\{0,1\},\{1,2\}\}), \\
    \text{vertex } 2 &: \quad (2,\varnothing), \quad (2,\{\{0,2\},\{1,2\}\}).
\end{align*}
For each base edge $e = \{i,j\}$, the adjacency condition
$e \notin S \triangle T$ forces $S$ and $T$ to agree on~$e$.
Since each fiber has exactly one vertex containing~$e$ and one not containing~$e$,
each base edge contributes two edges, one connecting the
``contains~$e$'' pair and one connecting the ``does not contain~$e$'' pair.
The $\varnothing$-type vertices $\{(i,\varnothing)\}_{i=0}^{2}$ therefore form a
triangle among themselves, as do the $(i,E(i))$-type vertices, with no edges between the two groups.
Hence $G_0 \cong 2\,C_3$.

\paragraph{Non-trivial lift ($U = \{0\}$).}
The parity constraint at vertex~$0$ shifts to
odd-cardinality subsets:
$(0, \{\{0,1\}\})$ and $(0, \{\{0,2\}\})$.
Vertices~$1$ and~$2$ retain their even-parity fibers.
The adjacency rewiring at base edges $\{0,1\}$ and $\{0,2\}$ produces:
\begin{alignat*}{2}
    (0,\{\{0,1\}\}) &\sim (1,\{\{0,1\},\{1,2\}\}), &\qquad
    (0,\{\{0,1\}\}) &\sim (2,\varnothing), \\
    (0,\{\{0,2\}\}) &\sim (1,\varnothing), &\qquad
    (0,\{\{0,2\}\}) &\sim (2,\{\{0,2\},\{1,2\}\}).
\end{alignat*}
Together with the unchanged edges
$(1,\varnothing) \sim (2,\varnothing)$ and
$(1,\{\{0,1\},\{1,2\}\}) \sim (2,\{\{0,2\},\{1,2\}\})$
from base edge $\{1,2\}$, the six vertices form a single cycle:
\begin{align*}
    (0,\{\{0,1\}\}) &\to (1,\{\{0,1\},\{1,2\}\}) \to (2,\{\{0,2\},\{1,2\}\}) \to (0,\{\{0,2\}\}) \\
    &\to (1,\varnothing) \to (2,\varnothing) \to (0,\{\{0,1\}\}).
\end{align*}
Hence $G_{\{0\}} \cong C_6$.

\subsection{Integral Encoding on $C_3$: Detailed Example}\label{app:integral-c3-details}

We provide the full integral encoding matrices for the CFI construction on $C_3$ (\cref{def:integral-enc}).
Let $G = C_3$ on vertices $\{0,1,2\}$ with oriented edges
$e_{01} = (0,1)$, $e_{02} = (0,2)$, $e_{12} = (1,2)$,
where the smaller vertex is always the first endpoint.
Each vertex has degree~$2$, so each fiber contains two CFI vertices
and $|V(\operatorname{CFI}(G,U))| = 6$ for every~$U$.
The integral encoding $\mathbf{X}^{*}_{G,U} = [\,\mathbf{X} \mid \mathbf{X}' \mid \mathbf{I}\,] \in \mathbb{Z}^{6 \times 9}$.

\paragraph{Trivial lift ($U = \varnothing$).}
\[
\mathbf{X}^{*}_{C_3,\varnothing} =
\bordermatrix{
              & e_{01} & e_{02} & e_{12} & e_{01} & e_{02} & e_{12} & v_0 & v_1 & v_2 \cr
(0,\varnothing)       & -1 & -1 &  0 & -1 & -1 &  0 & 1 & 1 & 1 \cr
(0,\{01,02\})         & +1 & +1 &  0 & +1 & +1 &  0 & 1 & 1 & 1 \cr
(1,\varnothing)       & -1 &  0 & -1 & +1 &  0 & -1 & 1 & -1 & 1 \cr
(1,\{01,12\})         & +1 &  0 & +1 & -1 &  0 & +1 & 1 & -1 & 1 \cr
(2,\varnothing)       &  0 & -1 & -1 &  0 & +1 & +1 & 1 & 0 & -2 \cr
(2,\{02,12\})         &  0 & +1 & +1 &  0 & -1 & -1 & 1 & 0 & -2
}
\]
The first three columns form~$\mathbf{X}$, the next three form~$\mathbf{X}'$
(which flips the sign whenever $v$ is the \emph{second} endpoint of~$e$),
and the last three form the indicator~$\mathbf{I}$.
Note that in the $\mathbf{X}$-block the vertex-$0$ rows have \emph{coherent}
signs: $(0,\varnothing)$ has $(-1,-1)$ and $(0,\{01,02\})$ has $(+1,+1)$
on its incident edges, that is both entries share the same sign.

\paragraph{Non-trivial lift ($U = \{0\}$).}
The parity twist at vertex~$0$ replaces the even-cardinality subsets
$\{\varnothing, \{01,02\}\}$ with the odd-cardinality subsets
$\{\{01\}, \{02\}\}$.  Only the first two rows change:
\[
\mathbf{X}^{*}_{C_3,\{0\}} =
\bordermatrix{
              & e_{01} & e_{02} & e_{12} & e_{01} & e_{02} & e_{12} & v_0 & v_1 & v_2 \cr
(0,\{01\})            & +1 & -1 &  0 & +1 & -1 &  0 & 1 & 1 & 1 \cr
(0,\{02\})            & -1 & +1 &  0 & -1 & +1 &  0 & 1 & 1 & 1 \cr
(1,\varnothing)       & -1 &  0 & -1 & +1 &  0 & -1 & 1 & -1 & 1 \cr
(1,\{01,12\})         & +1 &  0 & +1 & -1 &  0 & +1 & 1 & -1 & 1 \cr
(2,\varnothing)       &  0 & -1 & -1 &  0 & +1 & +1 & 1 & 0 & -2 \cr
(2,\{02,12\})         &  0 & +1 & +1 &  0 & -1 & -1 & 1 & 0 & -2
}
\]
The critical difference is in the vertex-$0$ rows of the $\mathbf{X}$-block:
the signs now \emph{split}, $(0,\{01\})$ has $(+1,-1)$ and $(0,\{02\})$
has $(-1,+1)$, encoding the parity twist.
Since no simultaneous sign flip and row permutation can convert coherent
pairs to split pairs, the two matrices are non-isomorphic under the
action of $\{{\pm 1}\}^{9} \times S_6$.

\subsection{Spectrum of CFI Graphs}\label{sub:not_simple}

As a justification for the construction above, 
we show that the CFI graphs from \cref{def:cfi} do not have simple spectrum.

\begin{theorem}\label{thm:cfi-spectrum}
    Let $G$ be a $3$-regular graph with $2n$ vertices and spectrum $\lambda_1, \dots, \lambda_{2n}$.
    Then the spectrum of the even CFI graph $G_0$ of $G$ is
    \[
        \underbrace{2 \lambda_1, \dots, 2 \lambda_{2n}}_{2n},\quad
        \underbrace{+2, \dots, +2}_{3n},\quad
        \underbrace{-2, \dots, -2}_{3n}.
    \]
    In particular, $G_0$ does not have simple spectrum.
\end{theorem}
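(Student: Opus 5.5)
The plan is to diagonalize the adjacency matrix $A(G_0)$ of $G_0=G_\varnothing$ directly in the orthogonal basis formed by the columns of $\tilde X_{G,\varnothing}$, which \cref{lem:orthogonal} already provides. \Cref{lem:matrix-size} says this basis has exactly $8n$ vectors, and $|V(G_0)|=8n$. So it suffices to find three families of eigenvectors: a $2n$-dimensional family of fiber-constant vectors with eigenvalues $2\lambda_i$, the $3n$ columns of $X_{G,\varnothing}$ with eigenvalue $+2$, and the $3n$ columns of $X'_{G,\varnothing}$ with eigenvalue $-2$. These three subspaces are mutually orthogonal and span $\RR^{V(G_0)}$, so the spectrum is fully accounted for.

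First I will write the blocks of $A(G_0)$. For an edge $e=uv$, define the vector $x_e^{(v)}\in\RR^{V(G_0)}$ by $x_e^{(v)}(v,S)=(-1)^{[e\notin S]}$ and zero outside the fiber of $v$; this is the restriction of the column $e$ of $X_{G,\varnothing}$ to the $v$-fiber. By \cref{def:cfi}, $(v,S)$ and $(u,T)$ are adjacent iff $S$ and $T$ agree on $e$, i.e.
\[
A(G_0)_{(v,S),(u,T)}=\tfrac12\bigl(1+x_e^{(v)}(v,S)\,x_e^{(u)}(u,T)\bigr).
\]
Hence the $(v,u)$ block of $A(G_0)$ equals $\tfrac12\bigl(J+x_e^{(v)}(x_e^{(u)})^\top\bigr)$. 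Non-adjacent fibers give zero blocks, and there are no edges inside a fiber.

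\emph{Fiber-constant vectors.} Let $f=\mathbf 1_{\text{fiber}}\otimes y$ for $y\in\RR^{V(G)}$. By the same parity count as in \cref{lem:orthogonal}, each $x_e^{(u)}$ sums to zero over the fiber, so only the $\tfrac12 J$ part acts on $f$. Each fiber has $4$ vertices, so each neighbouring fiber contributes $\tfrac12\cdot 4\,y_u=2y_u$, giving $A(G_0)f=\mathbf 1\otimes 2A(G)y$. Taking $y$ to be the eigenvectors of $A(G)$ yields eigenvalues $2\lambda_1,\dots,2\lambda_{2n}$ on this $2n$-dimensional subspace, which is the span of the $I_{G,\varnothing}$ columns.

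\emph{Edge vectors.} I apply $A(G_0)$ to $x_e^{(v)}$ with $e=uv$. The $J$ parts vanish because $x_e^{(v)}$ sums to zero over its fiber. The rank-one part of the block toward $u$ gives $\tfrac12\,x_e^{(u)}\langle x_e^{(v)},x_e^{(v)}\rangle=2x_e^{(u)}$. For any other edge $e'=vw$ at $v$, the rank-one part is proportional to $\langle x_{e'}^{(v)},x_e^{(v)}\rangle$, which is $0$ by the counting in \cref{lem:orthog1}; this step uses degree $3$. Thus $A(G_0)x_e^{(v)}=2x_e^{(u)}$, and symmetrically $A(G_0)x_e^{(u)}=2x_e^{(v)}$. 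Therefore the column of $X_{G,\varnothing}$ indexed by $e$, namely $x_e^{(u)}+x_e^{(v)}$, has eigenvalue $+2$. Up to sign, the column of $X'_{G,\varnothing}$ indexed by $e$ is $x_e^{(u)}-x_e^{(v)}$, and it has eigenvalue $-2$.

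The only point needing care is keeping the sign conventions consistent, namely $\pm1$ according to $e\in S$ versus $e\notin S$, together with the orientation used in $X'$. These only affect which vectors are the sums and which are the differences, not the resulting multiset of eigenvalues. Since $3n\ge 3>1$, the eigenvalues $\pm2$ are repeated, so $G_0$ does not have simple spectrum.
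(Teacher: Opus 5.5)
Your proof is correct and follows essentially the same route as the paper. Both lift eigenvectors of $G$ to fiber-constant eigenvectors with eigenvalues $2\lambda_i$, verify that the columns of $X_{G,\varnothing}$ and $X'_{G,\varnothing}$ are eigenvectors for $+2$ and $-2$, and conclude completeness from \cref{lem:orthogonal}. The difference is only in presentation: your block formula $\tfrac12\bigl(J+x_e^{(v)}(x_e^{(u)})^\top\bigr)$ and the swap identity $A(G_0)x_e^{(v)}=2x_e^{(u)}$ replace the paper's case distinction on how $e$ sits relative to $(v,S)$ and its neighbours, which makes the cancellations more transparent.
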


If $G$ has treewidth at least $3$, e.g.\ when $G$ is taken to be the complete graph on $4$ vertices,
then the CFI graphs $G_0$ and $G_1$ have the same spectrum \cite{neuen_homomorphism-distinguishing_2024}, \cite[Theorem~3.3.2]{seppelt_homomorphism_2024}.
In this case, neither $G_0$ nor $G_1$ have simple spectrum regardless of whether the spectrum of $G$ is simple.
The proof of \cref{thm:cfi-spectrum} shows that the matrices $\tilde X_{G, \emptyset}$ constructed in \cref{app:main-incomp} roughly correspond to the eigenvectors of the CFI graph $G_0$.

\begin{proof}
    The first $2n$ eigenvalues of $G_0$ correspond to the eigenvalues of $G$.
    The other $6n$ eigenvalues stem from the edges of $G$.

    \paragraph{Eigenvectors constant on fibers.}
    Let $y \in \mathbb{R}^{V(G)}$ be an eigenvector of $G$ for eigenvalue $\lambda$.
    Define $x \in \mathbb{R}^{V(G_0)}$ by $x_{(v, S)} = y_v$.
    We claim that $x$ is an eigenvector of $G_0$ with eigenvalue $2 \lambda$.
    Indeed, for every $(v, S) \in V(G_0)$,
    \[
        \sum_{(u, T) \sim (v, S)} x_{(u, T)}
        = \sum_{(u, T) \sim (v, S)} y_{u}
        = \sum_{u \sim v} 2 \cdot y_u
        = 2 \lambda \cdot y_v
        = 2 \lambda \cdot x_{(v, S)}.
    \]
    Here, the second equality holds since for every $(v, S)$ and $u \sim v$ there are precisely $2$ sets $T$ such that $(u, T) \sim (v,S)$.
    As argued in \cref{app:main-incomp}, the sets $T$ correspond to solutions of a system of linear equations over $\ZZ_2$.
    The system has $3$ variables and $2$ equations: one enforcing that $|T| \equiv 0 \pmod 2$ and one guaranteeing that $uv \not\in S \triangle T$.
    This system has $2$ solutions.
    Note that these eigenvectors are constant on fibers, i.e.\ $x_{(v,S)} = x_{(v, T)}$ for all $S, T$.

    \paragraph{Eigenvectors corresponding to edges.}
    Next we construct eigenvectors corresponding to the $3n$ edges of $G$. Recall \cref{eq:matrix_def,eq:matrix_2_def}.
    Fix an ordering $V(G) = \{1, \dots, 2n\}$ of the vertices of $G$.
    For an edge $e \in E(G)$,
    define the vector $y^e \in \mathbb{R}^{V(G_0)}$ via
    \[
        y^e_{(v, S)} \coloneqq \begin{cases}
           1, & \text{if } e \in E(v) \text{ and } e \in S, \\ 
           -1, & \text{if } e \in E(v) \text{ and } e \not\in S,\\
           0, &\text{otherwise}.
        \end{cases}
    \]
    Towards verifying that $y^e$ is an eigenvector of the adjacency matrix of $G_0$, 
    let $(v, S) \in V(G_0)$.
    Distinguish cases:
    \begin{itemize}
        \item The edge $e$ is not incident to $v$ or one of its neighbors. We have
        \[
            \sum_{(u, T) \sim (v, S)} y^e_{(u, T)}
            = 0
            = y^e_{(v, S)}
            = 2 y^e_{(v, S)}.
        \]
        \item The edge $e$ is incident to two neighbors $w_1, w_2$ of $v$.
        For each $i \in \{1,2\}$, the number of even subsets $T \subseteq E(w_i)$ such that $vw_i \not\in S \triangle T$ and $e \in T$
        is one, and is thus equal to the number of even subsets $T' \subseteq E(w_i)$ such that $vw_i \not\in S \triangle T'$ and $e \not\in T'$.
        Hence each of $w_1$ and $w_2$ contributes $+1$ from the $e\in T$ term and $-1$ from the $e\notin T$ term, giving
        \[
            \sum_{(u, T) \sim (v, S)} y^e_{(u, T)}
            = \underbrace{(1 - 1)}_{w_1} + \underbrace{(1 - 1)}_{w_2}
            = 0
            = y^e_{(v, S)}
            = 2 y^e_{(v, S)}.
        \]
        \item The edge $e$ is incident to precisely one neighbor $w$ of $v$ and not to $v$.
        This case is analogous to the previous case.
        \item The edge $e$ is incident to $v$, i.e.\ $e = vw$ for some vertex $w$.
        The number of even subsets $T \subseteq E(w)$ such that $e \not\in S \triangle T$ and $e \in T$ is either zero or two, depending on whether $e \in S$.
        In both cases,
        \[
        \sum_{(u, T) \sim (v, S)} y^e_{(u, T)}
        = 2 y^e_{(v, S)}.
        \]
    \end{itemize}
    Hence, $y^e$ is an eigenvector of the adjacency matrix of $G_0$ with eigenvalue~$2$.

    For an edge $e \in E(G)$ and $(v, S) \in V(G_0)$, 
    the remaining eigenvalues are given by
    \[
        y^{-e}_{(v,S)} \coloneqq \begin{cases}
            y^e_{(v, S)} &\text{if }v \text{ is the first vertex in } e,\\
            -y^e_{(v, S)} & \text{if }v \text{ is the second vertex in } e,\\
            0, & \text{if } e \not\in E(v),
        \end{cases}
    \]
    see \cref{eq:matrix_2_def}.
    In order to argue that $y^{-e}$ is an eigenvector to eigenvalue $-2$, we conduct a case distinction as above. The first three cases are as for $y^e$.
    In the final case, only the sign is flipped, i.e.\
    \[
    \sum_{(u, T) \sim (v, S)} y^{-e}_{(u, T)}
    = - 2 \cdot y^{-e}_{(v, S)},
    \]
    as desired.
    
    By \cref{lem:orthogonal}, the eigenvectors constructed above form an orthogonal basis.
    Hence, we have mapped out the entire spectrum of the adjacency matrix of $G_0$.
\end{proof}

\section{Canonicalization of Simple Spectrum Graphs}\label{apdx:canon-simple}

This appendix accompanies \cref{sec:canon-simple-spec} and gives the formal proof of \cref{thm:s-s-canon}, working with the partition $\{C_1,\dots,C_L\}$ produced by \cref{alg:abs-value-partition,alg:balanced-refine}.

\subsection{Proof of \cref{thm:s-s-canon}}\label{apdx:canon-proof}
In this proof we identify $\{-1,1\}^k $ with the $\ZZ_2^k=\{0,1\}^k $. The action of $a\in \ZZ_2^k$ on $U\in \RR^{n\times k}$, denoted by $a.U$, is given by multiplying, for each $j\in \{1,\ldots,k\}$, the $j$-th column of $U$ by a corresponding sign $(-1)^{a_j} $. We denote the summation of two vectors $s,t\in \ZZ_2^k $ by $s\oplus t $.

We now prove that the map $\prism$ from \cref{sec:canon-simple-spec} is a valid canonicalization in the sense of \cref{def:canon}. The proof follows the standard reduction strategy: we first define a \emph{sign canonicalization} $s:\RR^{n\times k}\to\ZZ_2^k$ (\cref{def:sgn-canon}) and show that any such $s$ induces a canonicalization $c$ via lexicographic row sorting (\cref{lem:sign-to-canon}). The technical core of the proof, in \cref{apdx:proof-construction}, then constructs an explicit sign canonicalization on top of the partition $\{C_1,\dots,C_L\}$ output by \cref{alg:abs-value-partition,alg:balanced-refine}.

\subsubsection{Reduction to sign canonicalization}\label{apdx:proof-reduction}
We begin by introducing a definition of choosing unique representative of simple-spectrum eigendecompositions only up to sign ambiguities, or actions of the group $\ZZ_2^k$ for a $k$-eigendecomposition. We will then show that once we obtain this canonicalization it naturally extends to the full ambiguity group $S_n \times \ZZ_2^k$ using lexicographic sorting.
\begin{definition}\label{def:sgn-automorphism}
We say that $a\in \ZZ_2^k $ is a sign automorphism of $U\in \RR^{n \times k}$, if $a.U $ is equal to $U$, up to permutation.
\end{definition}

\begin{definition}\label{def:sgn-canon}
We say that $s:\RR^{n \times k}\to \ZZ_2^k $ is a sign-canonicalization, if 
\begin{enumerate}
	\item \textbf{Permutation invariance:} $s$ is invariant to permutations.
	\item \textbf{Sign equivariance:} $s(t.U)\oplus (t \oplus s(U)) $ is a sign automorphism of $U$, for all $U$ and $t\in \ZZ_2^k$.
\end{enumerate}
\end{definition}
We note that when $U$ has no non-trivial automorphisms, the second condition coincides with the standard notion of equivariance of $s$, namely $s(t.U)=t \oplus s(U) $ for all $U$ and $t\in \ZZ_2^k$.

\begin{lemma}\label{lem:sign-to-canon}
If $s$ is a sign canonicalization, then a canonicalization $c(U)$ can be defined by ordering $s(U).U$ in lexicographical ordering.
\end{lemma}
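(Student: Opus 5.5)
We define $c(U)$ as the matrix obtained by sorting the rows of $s(U).U$ in lexicographic order. We must check the two requirements of \cref{def:canon}: $c(U)$ lies in the $S_n\times\ZZ_2^k$ orbit of $U$, and $c$ is invariant under $S_n\times\ZZ_2^k$. We also use that lexicographic row sorting, $\mathrm{sort}(M)$, depends only on the multiset of rows of $M$. In particular, $\mathrm{sort}(M)=\mathrm{sort}(PM)$ for every permutation matrix $P$, and $\mathrm{sort}(M)$ is a row permutation of $M$.

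\emph{Orbit membership.} We have $c(U)=P\,(s(U).U)$ for the sorting permutation $P$. Since $s(U)\in\ZZ_2^k$, the pair $(P,s(U))$ is an element of $S_n\times\ZZ_2^k$ mapping $U$ to $c(U)$. Permutations of rows commute with column sign flips, so the order in which the two actions are applied is irrelevant.

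\emph{Permutation invariance.} Let $P$ be a permutation. By the first property in \cref{def:sgn-canon}, $s(PU)=s(U)$. Because sign flips act on columns, $s(PU).(PU)=P\,(s(U).U)$, and sorting removes $P$. Hence $c(PU)=c(U)$.

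\emph{Sign invariance.} This is the only step requiring care, because $s$ need not be strictly equivariant when $U$ has sign automorphisms. Fix $t\in\ZZ_2^k$ and set $a:=s(t.U)\oplus t\oplus s(U)$. By the second property in \cref{def:sgn-canon}, $a$ is a sign automorphism of $U$, so $a.U=QU$ for some permutation $Q$. Since $\ZZ_2^k$ is abelian with every element its own inverse, $s(t.U)=a\oplus t\oplus s(U)$. Therefore
\[
s(t.U).(t.U)=(s(t.U)\oplus t).U=(a\oplus s(U)).U=s(U).(a.U)=s(U).(QU)=Q\,(s(U).U).
\]
The last equality uses again that column sign flips commute with row permutations. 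Thus $s(t.U).(t.U)$ and $s(U).U$ differ only by a row permutation, so they have the same lexicographic sort, and $c(t.U)=c(U)$.

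Invariance under the full group follows by combining the two cases, since every element of $S_n\times\ZZ_2^k$ is a composition of a permutation and a sign flip, and both actions commute.
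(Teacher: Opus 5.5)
Your proof is correct and matches the paper's argument. The key step is the same chain $s(t.U).(t.U)=(s(t.U)\oplus t).U=s(U).(a.U)\cong_{S_n} s(U).U$, where $a=s(t.U)\oplus t\oplus s(U)$ is a sign automorphism by sign equivariance. You also spell out orbit membership and permutation invariance, which the paper leaves implicit in its ``equivalently'' reduction.
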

\begin{proof}
For a given matrix $U$, permutation $\tau$ and sign vector $t$, we need to show that $c((\tau,t).U)=c(U)$. Equivalently, we need to show that $s(t.U).(t.U) \cong_{S_n} s(U).U $, where we use $\cong_{S_n}$ to denote equivalence up to permutation.  Indeed
$$s(U).U\cong_{S_n}s(U).(s(t.U)\oplus t \oplus s(U)).U=(s(t.U)\oplus t).U=s(t.U).(t.U)\qedhere $$
\end{proof}

By \cref{lem:sign-to-canon}, it suffices to construct a sign canonicalization. The remainder of the proof does this on top of the partition output by \cref{alg:abs-value-partition,alg:balanced-refine}.

\subsubsection{Spectral refinement and the affine subspaces $A_\ell$}\label{apdx:proof-construction}

\paragraph{Setup.}
Throughout this subsection we denote the rows of $U$ by $U_i$ and write $\odot$ for the elementwise product of vectors. Let $\{C_1,\ldots,C_L\}$ be the partition of the vertex set $[n]$ produced by Stage~1 (\textsc{Partition}, \cref{alg:abs-value-partition}) followed by Stage~2 (\textsc{Refine}, \cref{alg:balanced-refine}) of \cref{sec:canon-simple-spec}. Two vertices $i,j$ lie in the same class $C_\ell$ only if they share the absolute-value signature $\sigma(v)=(|U_{v,1}|,\ldots,|U_{v,k}|)$, so within each class
\[
|U_i|=|U_j|\quad\text{for all }i,j\in C_\ell .
\]
In particular, the support set $K_\ell=\{j\in[k]:|U_{v,j}|>0\}$ is well-defined for each $C_\ell$ (it does not depend on the choice of $v\in C_\ell$). For every nonzero entry $U_{i,k}$ let $s_{i,k}$ denote its sign bit, namely $\sign\left(U_{i,k} \right)=(-1)^{s_{i,k}}$, and let $s_i=(s_{i,k})_{U_{i,k}\neq 0}\in\ZZ_2^{K_\ell}$ for $i\in C_\ell$.

\paragraph{Spectral WL refinement.}
We record the spectral Weisfeiler--Leman iteration that \cref{alg:balanced-refine} runs, since the proof uses its termination property directly. Initialize a coloring for each node $i\in[n]$ by
\begin{equation}\label{eq:init}
h_i^{(0)} =U_i\odot U_i ,
\end{equation}
and refine via
\begin{equation}\label{eq:wl-update-inv}
    h_i^{(t+1)} = \mathrm{UPDATE}_{(t)}\Big(h_i^{(t)},\big\{(h_j^{(t)}, U_i \odot U_j )\mid j=1,\ldots,n\big\}\Big),
\end{equation}
where $\{\cdot\}$ denotes a multiset and $\mathrm{UPDATE}_{(t)}$ is invariant to the order of its multiset argument. The colors $h^{(t)}$ define a sequence of partitions $C_1^{(t)},\ldots,C_{k_t}^{(t)}$ where $i,j$ are in the same class iff $h_i^{(t)}=h_j^{(t)}$. Each successive partition refines the previous one, so the iteration stabilizes at some $t\leq n$; this fixed-point partition $\{C_1,\ldots,C_L\}$ is the output of \cref{alg:balanced-refine}. By definition of the WL fixed point, classes at termination have a stable multiset of edge features:
\begin{equation}\label{eq:stable-multiset}
\{U_i \odot U_k\mid k\in C_\ell\}=\{U_j \odot U_k\mid k\in C_\ell\}\quad\text{for all }i,j\in C_\ell,\ \ell\in[L].
\end{equation}
This is the structural fact the proof relies on. The iteration \eqref{eq:wl-update-inv} is precisely \cref{alg:balanced-refine} (defined in \cref{apdx:canon-simple}); we describe it again here only to make explicit that the partition $\{C_\ell\}$ used by the proof is the spectral-WL fixed point. \Cref{rem:balanced-refine-realization} gives an equivalent $\ZZ_2$ parity-check realization used in our implementation. The same partition arises in the classical bounded-eigenvalue-multiplicity isomorphism algorithms of \citet{babai} and \citet{spielman}, which use related sign-product invariants to refine an initial partition; the novelty of our approach is to take this stable partition as the starting point for an \emph{explicit linear sign canonicalization} (\cref{alg:sign-solve}), as opposed to using it for isomorphism testing alone.

\paragraph{The affine subspaces $A_\ell$.}
For $i\in C_\ell$ define $A_\ell=\{s_i:i\in C_\ell\}\subseteq\ZZ_2^{K_\ell}$.

\begin{lemma}\label{lem:affine}
For every $\ell\in [L]$, we have that the set 
$$A_\ell=\{s_i, i\in C_\ell \} $$
is an affine subspace of $\ZZ_2^{K_{\ell}} $.
\end{lemma}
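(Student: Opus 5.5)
The plan is to show that $A_\ell$ is a coset of a subgroup of $\ZZ_2^{K_\ell}$. Concretely, we will identify the subgroup $H_\ell=\{h\in\ZZ_2^{K_\ell} : h\oplus A_\ell=A_\ell\}$ of translations preserving $A_\ell$, and prove that $A_\ell=s_i\oplus H_\ell$ for any $i\in C_\ell$. Clearly $H_\ell$ is a subgroup: it contains $0$, and it is closed under $\oplus$ because translations compose. It is nonempty as a coset base since $C_\ell\neq\emptyset$. So the whole content lies in two inclusions, and both come from the stable-multiset property \eqref{eq:stable-multiset} of the spectral WL fixed point.

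The first step is to translate \eqref{eq:stable-multiset} into a statement about sign bits. Let $i,k\in C_\ell$. Since $|U_i|=|U_k|$ and this common vector has support exactly $K_\ell$, the product $U_i\odot U_k$ vanishes off $K_\ell$. For $m\in K_\ell$ it satisfies
\[
(U_i\odot U_k)_m=|U_{i,m}|^2\,(-1)^{s_{i,m}\oplus s_{k,m}} .
\]
Here $|U_{i,m}|^2>0$ depends only on the class. Hence the map $s\mapsto \bigl(|U_{i,m}|^2(-1)^{s_m}\bigr)_{m\in K_\ell}$ is injective on $\ZZ_2^{K_\ell}$, and $U_i\odot U_k$ determines, and is determined by, $s_i\oplus s_k$. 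Consequently \eqref{eq:stable-multiset} yields, for all $i,j\in C_\ell$, the equality of multisets
\[
\{s_i\oplus s_k : k\in C_\ell\}=\{s_j\oplus s_k : k\in C_\ell\}.
\]
In particular the underlying sets agree, so $s_i\oplus A_\ell=s_j\oplus A_\ell$, or equivalently $(s_i\oplus s_j)\oplus A_\ell=A_\ell$. This gives $s_i\oplus s_j\in H_\ell$ for all $i,j\in C_\ell$.

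Now fix $i\in C_\ell$. For the inclusion $A_\ell\subseteq s_i\oplus H_\ell$, take any $s_j\in A_\ell$ and write $s_j=s_i\oplus(s_i\oplus s_j)$, where the bracket lies in $H_\ell$ by the previous step. For the reverse inclusion, take $h\in H_\ell$. Since $s_i\in A_\ell$ and $h\oplus A_\ell=A_\ell$, we get $s_i\oplus h\in A_\ell$. Hence $A_\ell=s_i\oplus H_\ell$ is an affine subspace, which proves the lemma.

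I expect the main (though mild) obstacle to be the first step, namely justifying that equality of the real-valued multisets $\{U_i\odot U_k\}$ really forces equality of the sign-difference sets. This requires the observation that all rows in a class share the same absolute values with the same support $K_\ell$. That fact holds because the Refine partition refines the Partition stage, so the products carry no information beyond $s_i\oplus s_k$ and lose none of it. Once this is in place, the rest is elementary group theory over $\ZZ_2$.
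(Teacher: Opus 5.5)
Your proof is correct and takes essentially the paper's route. Both arguments turn the stable-multiset property \eqref{eq:stable-multiset} into the identity $s_i\oplus A_\ell=s_j\oplus A_\ell$ and then finish with elementary group theory over $\ZZ_2$: the paper shows the difference set $s_i\oplus A_\ell$ is closed under $\oplus$, while you show $A_\ell$ is a coset of its translation stabilizer $H_\ell$. Your injectivity argument for the sign-bit projection, which uses the common absolute values on $K_\ell$, spells out a step the paper only asserts.
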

\begin{proof}
By the multiset stability \eqref{eq:stable-multiset}, for every $i,j\in C_\ell$,
$$\{U_i \odot U_k\mid k\in C_\ell \}=\{U_j \odot U_k\mid k\in C_\ell \}, $$
which under the sign-bit projection becomes
\begin{equation}\label{eq:closed}
\{s_i \oplus s_k\mid k\in C_\ell \}=\{s_j \oplus s_k\mid k\in C_\ell \}.
\end{equation}
To show $A_\ell$ is an affine subspace, it suffices to show the set $\{s_i \oplus s_k\mid k\in C_\ell\}$ is a linear subspace (translating by $s_i$ then yields $A_\ell$). It contains the zero vector $s_i\oplus s_i$. For closure under addition, given $k,j\in C_\ell$ we have $(s_i\oplus s_k)\oplus(s_i\oplus s_j)=s_j\oplus s_k$, which by \eqref{eq:closed} equals $s_i\oplus s_q$ for some $q\in C_\ell$. We note that in this proof we treated the rows $\{U_i\}_{i \in C_\ell}$ as pairwise distinct so that $\{s_i \oplus s_k \mid k \in C_\ell\}$ is a set; it is possible in general that some of the rows coincide. However, in this case, since the coloring is stable all rows will have equal multiplicity, and so the argument carries through verbatim with sets replaced by multisets. Note also that under a full simple-spectrum eigendecomposition, that is $n$-eigendecomposition, the rows are pairwise distinct since they are pairwise orthogonal.
\end{proof}

\subsubsection{Sign automorphisms of $U$}\label{apdx:proof-automorphisms}

Let $W_\ell=A_\ell \oplus A_\ell $. This is the subspace associated with the affine space $A_\ell$, and for every $i\in C_\ell$ we have $W_\ell=\{s_i \oplus s_k\mid k\in C_\ell\}$. The space $W_\ell$ has $|K_\ell|$ coordinates; we extend it to a subspace $\hat W_\ell\subseteq\ZZ_2^k$ consisting of all vectors whose restriction to $K_\ell$ lies in $W_\ell$. 

The next lemma identifies the sign-automorphism group $\hat W$ that the canonicalization must respect.

\begin{lemma}\label{lem:sign-aut}
The set of sign automorphisms of $U$ is precisely
$$\hat W:=\cap_{\ell=1}^L \hat W_\ell .$$
\end{lemma}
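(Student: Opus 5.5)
We prove the two inclusions separately. Throughout we use that the support $K_\ell$ and the absolute values $|U_i|$ are constant on each class $C_\ell$. Hence, for $i\in C_\ell$, the row $(a.U)_i$ is determined by $|U_i|$ together with the sign pattern $s_i\oplus a|_{K_\ell}$. Coordinates outside $K_\ell$ are zero and are therefore unaffected by $a$.

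\emph{Inclusion $\hat W\subseteq$ sign automorphisms.} Let $a\in\hat W$, so $a|_{K_\ell}\in W_\ell$ for every $\ell$. By \cref{lem:affine}, $A_\ell$ is an affine subspace with direction space $W_\ell$. Hence translation by $a|_{K_\ell}$ is a bijection $A_\ell\to A_\ell$. Consequently, for each $i\in C_\ell$ there is some $j\in C_\ell$ with $s_j=s_i\oplus a|_{K_\ell}$. Since $|U_j|=|U_i|$, this gives $(a.U)_i=U_j$.

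To upgrade this to a permutation of rows, we must account for repeated rows. As noted in the proof of \cref{lem:affine}, stability \eqref{eq:stable-multiset} forces every sign pattern in $A_\ell$ to occur with the same multiplicity within $C_\ell$. Therefore the translation lifts to a bijection $\pi_\ell:C_\ell\to C_\ell$ with $(a.U)_i=U_{\pi_\ell(i)}$. Gluing the maps $\pi_\ell$ over all classes gives a permutation $\pi$ of $[n]$ with $a.U=\pi.U$. Thus $a$ is a sign automorphism.

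\emph{Inclusion sign automorphisms $\subseteq\hat W$.} Suppose $a.U$ equals $U$ up to a permutation $\pi$, i.e.\ $(a.U)_i=U_{\pi(i)}$ for all $i$. The key step is to show that $\pi$ maps each class $C_\ell$ into itself. We use two properties of the spectral WL iteration \eqref{eq:init}--\eqref{eq:wl-update-inv}.
\begin{enumerate}
\item It is \emph{sign invariant}: the quantities $U_i\odot U_i$ and $U_i\odot U_j$ are unchanged when $U$ is replaced by $a.U$. Hence, by induction on $t$, the color of $i$ computed on $a.U$ equals the color $h_i^{(t)}$ computed on $U$.
\item It is \emph{permutation equivariant}: the color of $i$ computed on $\pi.U$ equals $h_{\pi(i)}^{(t)}$ computed on $U$. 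This also follows by induction, because the multiset in \eqref{eq:wl-update-inv} is merely reindexed by $\pi$.
\end{enumerate}
Since $a.U=\pi.U$, combining the two properties gives $h_i^{(t)}=h_{\pi(i)}^{(t)}$ for all $t$. At the fixed point this means $i$ and $\pi(i)$ lie in the same class $C_\ell$.

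Now let $i\in C_\ell$. From $(a.U)_i=U_{\pi(i)}$, taking sign bits on $K_\ell$ gives $s_i\oplus a|_{K_\ell}=s_{\pi(i)}$. Therefore $a|_{K_\ell}=s_i\oplus s_{\pi(i)}\in W_\ell$. As this holds for every $\ell$, we conclude $a\in\hat W$.

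The main obstacle is the bookkeeping in both directions. In the first direction, the translation must be shown to lift to a genuine bijection of rows when rows repeat, which is where the equal-multiplicity consequence of stability is needed. In the second direction, the invariance and equivariance of the refinement must be stated and checked carefully, since everything depends on $\pi$ preserving the stable partition rather than merely the absolute-value signatures.
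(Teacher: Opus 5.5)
Your proof is correct and follows the paper's two-inclusion structure. For $\hat W\subseteq$ sign automorphisms, you glue per-class permutations obtained from the affine structure of $A_\ell$; for the reverse inclusion, you show that $\pi$ preserves each class and then conclude $a|_{K_\ell}=s_i\oplus s_{\pi(i)}\in W_\ell$. Your version is tighter than the paper's at two points: you justify that $\pi$ preserves the \emph{refined} classes via sign invariance and permutation equivariance of the spectral WL iteration (the paper only observes that permutations preserve absolute-value signatures, which by itself yields only the Stage-1 classes), and you explicitly derive the equal-multiplicity property needed to lift the translation by $a|_{K_\ell}$ to a bijection of rows.
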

\begin{proof}
Let $A$ be the set of sign automorphisms of $U$. By \cref{def:sgn-automorphism}, $A = \{a \in \mathbb{Z}_2^k \mid \exists \tau \in S_n \text{ s.t. } (\tau, a) \cdot U = U \}$. We must prove $A = \hat{W}$.

\begin{enumerate}
    \item \textbf{Proof of $A \subseteq \hat{W}$:}
    Let $a \in A$. By definition, there exists a $\tau \in S_n$ such that $g = (\tau, a) \in \mathrm{Aut}(U)$.
    By construction of the partition $\{C_\ell\}_{\ell=1}^L$ from absolute-value signatures (\cref{alg:abs-value-partition,alg:balanced-refine}), any $g \in \mathrm{Aut}(U)$ has its associated permutation $\tau$ mapping each partition class to itself. (Permutations preserve absolute-value signatures, so $\tau(v) \in C_\ell$ whenever $v \in C_\ell$.) This means that for every $\ell \in [L]$, $\tau(C_\ell) = C_\ell$.
    This implies the action of $g$ stabilizes the set of rows $\{U_i\}_{i \in C_\ell}$ for each $\ell$.
    The group $\hat{W}_\ell$ is constructed from $W_\ell$, which is derived from the affine space $A_\ell = \{s_i \mid i \in C_\ell\}$. $\hat{W}_\ell$ is, by construction, the group of sign vectors that stabilizes the class $C_\ell$. Any other element $g\in \{ \pm 1\}^k \setminus \hat W_{\ell}$ that is not in this group will not stabilize $C_\ell$, as $g.A_{\ell} \neq A_{\ell}$, since the identity element will not appear in the coset $g.A_{\ell}$. Since the global automorphism $a$ must stabilize \textit{every} class $C_\ell$ (for $\ell=1, \dots, L$), $a$ must be an element of $\hat{W}_\ell$ for all $\ell$.
    Therefore, $a \in \cap_{\ell=1}^{L}\hat{W}_{\ell}$, which means $A \subseteq \hat{W}$.

    \item \textbf{Proof of $\hat{W} \subseteq A$:}
    Let $a \in \hat{W} = \cap_{\ell=1}^{L}\hat{W}_{\ell}$. By definition, $a \in \hat{W}_\ell$ for every $\ell \in [L]$.
    The fact that $a \in \hat{W}_\ell$ means that $a$ stabilizes the set of rows $\{U_i\}_{i \in C_\ell}$.
    This implies that for each $\ell$, the set $\{a \cdot U_i\}_{i \in C_\ell}$ is a permutation of $\{U_i\}_{i \in C_\ell}$.
    Therefore, for each class $\ell$, there exists a permutation $\tau_\ell: C_\ell \to C_\ell$ such that $a \cdot U_i = U_{\tau_\ell(i)}$ for all $i \in C_\ell$.
    Since the classes $\{C_\ell\}$ form a partition of $[n]$, we can construct a global permutation $\tau \in S_n$ as the disjoint union of these permutations: $\tau = \cup_{\ell=1}^L \tau_\ell$.
    By construction, this $\tau$ satisfies $(\tau, a) \cdot U = U$.
    By \cref{def:sgn-automorphism}, this means $a$ is a sign automorphism.
    Therefore, $a \in A$, which means $\hat{W} \subseteq A$.\qedhere
\end{enumerate}

\end{proof}

It will be useful to define $\pi_\ell:\ZZ_2^k\to\ZZ_2^{K_\ell}$ to be the natural projection from $\hat W_\ell$ to $W_\ell$, and choose for each $\ell$ a \emph{parity-check matrix} $T_\ell$ of $W_\ell$: a linear map $T_\ell:\ZZ_2^{K_\ell}\to\ZZ_2^{K_\ell-\dim W_\ell}$ whose kernel is exactly $W_\ell$, equivalently a matrix whose rows form a basis of the orthogonal complement $W_\ell^{\perp}\subseteq\ZZ_2^{K_\ell}$. \Cref{lem:sign-aut} implies that $s$ is an automorphism of $U$, if and only if it satisfies the homogeneous linear equations
\begin{equation}\label{eq:full_homogenous}
T_\ell \circ \pi_\ell(s)=0, \quad  \forall \ell=1,\ldots,L. 
\end{equation}
\subsubsection{Iterative construction of the sign canonicalization}\label{apdx:proof-iterative}

We now redescribe the step of the canonicalization that iteratively adds linear constraints to determine $s=s(U)$ in an equivariant fashion. The iterations are indexed by the spaces $\hat W_1,\ldots,\hat W_L$. We recall that the basic intuition is that we choose the sign vector $s\in \ZZ_2^k$ so that for $s.U$, each equivalence class $C_{\ell}$ will contain a vector whose entries are all non-negative. However, since the order inside $C_{\ell}$ is arbitrary, we must do this in a way that does not depend on the choice in $C_{\ell}$. To do this, for every given $\hat W_\ell $ we pick an arbitrary $v_\ell \in A_\ell $, and require that $\pi_\ell(s)\oplus v_\ell $ is in the subspace $W_\ell $, or equivalently, that $T_\ell(\pi_\ell(s)\oplus v_\ell)=0 $. This in turn implies that the restriction of $s.U $ to $C_\ell $ will contain a non-negative row. Thus, the condition that $s.U$ will have a non-negative row in each equivalence class translates to the non-homogeneous linear equations
$$T_\ell\circ \pi_\ell(s)=T_\ell (v_\ell), \quad  \forall \ell=1,\ldots, L $$

In general, these linear equations may be linearly dependent, and in this case the non-homogenous equations may not have a solution. To deal with this, we recursively go over the equations, ordered according to the index $\ell$ and the order of the rows inside each $T_\ell$, and remove an equation if it is linearly dependent on the equations that preceded it. This procedure is described in \cref{alg:sign-solve}, and can be performed in $O(k^3)$ time \cite{LeightonMiller79}. As a result, we get a reduced system of non-homogeneous equations
$$\eta_\ell \circ T_\ell\circ \pi_\ell(s)=\eta_\ell \circ T_\ell (v_\ell), \quad  \forall \ell=1,\ldots, L $$
where $\eta_\ell$ is the projection that removes the dependent equations found in the $\ell$-th stage.\footnote{This system is exactly the $E\cdot s=f$ of \cref{alg:sign-solve}: stacking the surviving rows of $\eta_\ell\circ T_\ell\circ\pi_\ell$ as rows of a single matrix $E$ over $\ZZ_2^k$ (zero-padding the columns outside $K_\ell$), and the surviving right-hand sides $\eta_\ell\circ T_\ell(v_\ell)$ as entries of $f$, gives the assembled global system used by the algorithm.} The solution $s=s(U)$ does not in general satisfy $\pi_\ell(s)\in A_\ell$ for every $\ell$: when $\eta_\ell$ drops a row of $T_\ell$ that lies in the row span of earlier rows of $E$, the kept rows define a strict relaxation of $\pi_\ell(s)\in A_\ell$, so $s.U$ need not have a uniformly non-negative row in $C_\ell$ on the coordinates spanned by the dropped rows, since the non-negativity is preserved only on the linearly-independent block of constraints. Nevertheless, $s$ is uniquely determined modulo $\ker(E)=\hat W$, the sign-automorphism group of $U$ (\cref{lem:sign-aut}), and depends equivariantly on $U$. We lex-minimize over the coset $s_0\oplus\hat W$ to fix a canonical representative, leading to a valid canonicalization as we now prove:

\begin{theorem}\label{thm:s-is-sign-canon}
The function $s(U)$ defined above is a sign canonicalization.
\end{theorem}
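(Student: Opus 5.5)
We have to verify the two conditions of \cref{def:sgn-canon} for $s(U)$: permutation invariance and sign equivariance modulo $\hat W$. The plan is to track how every ingredient of the construction (the partition, the subspaces $W_\ell$, the affine spaces $A_\ell$, the chosen matrices $T_\ell$, the representatives $v_\ell$, the reduced system $E s=f$) transforms under the action of $(\tau,t)\in S_n\times\ZZ_2^k$. Note that well-definedness (independence of the arbitrary choices of $v_\ell$ and of $T_\ell$) is implicit in permutation invariance, so we treat it together with that.

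\emph{Step 1: invariance of the combinatorial data.} The signatures $\sigma(v)$, the initial colors $U_i\odot U_i$ and the edge features $U_i\odot U_j$ are all unchanged by sign flips. Under a permutation $\tau$, these objects are simply relabeled. Hence the ordered partition $C_1,\dots,C_L$ (ordered canonically by the hashed colors) is unchanged up to relabeling the vertices inside each class. The same then holds for the supports $K_\ell$ and for the subspaces $W_\ell=\{s_i\oplus s_k: k\in C_\ell\}$, since these depend only on the multiset of sign-products. We may therefore assume $T_\ell$ and $\pi_\ell$ are chosen as deterministic functions of $W_\ell$, for instance via a reduced row echelon basis of $W_\ell^\perp$. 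It follows that the matrix $E$, and with it the dropping pattern $\eta_\ell$, is invariant under $S_n\times\ZZ_2^k$. By \cref{lem:sign-aut} and \eqref{eq:full_homogenous}, $\ker E=\hat W$. The key point here is that row dropping only removes rows lying in the span of earlier rows, so the kernel is preserved.

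\emph{Step 2: the right-hand side.} Here $f_\ell=\eta_\ell T_\ell(v_\ell)$. Since $v_\ell\in A_\ell=v\oplus W_\ell$ and $T_\ell$ vanishes on $W_\ell$, the vector $T_\ell(v_\ell)$ does not depend on the choice of $v_\ell\in A_\ell$. Permuting rows leaves $A_\ell$ unchanged, so $f$ is permutation invariant, and the solution coset $s_0\oplus\hat W$ is invariant too. The lex-minimal element of that coset is therefore permutation invariant, which gives condition 1. Under a sign flip $t$, we have $A_\ell\mapsto A_\ell\oplus\pi_\ell(t)$, so $f\mapsto f\oplus Et$. Consequently the solution coset of $t.U$ is $t\oplus s_0\oplus\hat W$.

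\emph{Step 3: equivariance modulo automorphisms.} From Step 2, $s(t.U)\in t\oplus s(U)\oplus\hat W$, so $s(t.U)\oplus t\oplus s(U)\in\hat W$. By \cref{lem:sign-aut}, this element is a sign automorphism of $U$, which is condition 2. One subtlety is that the lex-minimization takes place over a coset that also depends on $t$. This is harmless, because condition 2 only asks for equality modulo $\hat W$. A second subtlety is that $\hat W$ for $t.U$ is the same as for $U$, which holds because $W_\ell$ is invariant under sign flips.

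The main obstacle is Step 1: making precise that the entire pipeline, including the ordering of classes and the Gaussian-elimination dropping pattern, depends only on sign- and permutation-invariant data. Once that is established, the remaining steps are short linear algebra over $\ZZ_2$.
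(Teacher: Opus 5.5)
Your proposal is correct and matches the paper's proof. Permutation invariance comes from the invariant construction of the classes, together with the fact that $T_\ell(v_\ell)$ does not depend on the representative. Sign equivariance comes from the right-hand side shifting by $\eta_\ell T_\ell\pi_\ell(t)$, so that $s(t.U)\oplus t\oplus s(U)$ lies in the kernel of the reduced system, which equals $\hat W$ by \cref{lem:sign-aut}. You are more careful than the paper in requiring that $T_\ell$, and hence the row-dropping pattern, be a deterministic function of $W_\ell$; the paper leaves this implicit, but it is needed because the greedy reduction can depend on which basis of $W_\ell^\perp$ is chosen.
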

\begin{proof}
To show $s$ is a sign canonicalization, we need to prove:
\begin{enumerate}
    \item $s$ is permutation invariant.
    \item  $s(t.U) \oplus (t \oplus s(U))$ is a sign automorphism of $U$, for all $U$ and $t\in \ZZ_2^k$.
\end{enumerate}
The permutation invariance follows from the invariant construction of the partitions $C_\ell$, and the fact that the linear equations constructed do not depend on the choice of representative in $C_\ell$.

To prove (2), we note that  the sign canonicalization $s=s(U)$ is defined as the solution to the system of equations over $\ZZ_2$:
$$\eta_\ell \circ T_\ell\circ \pi_\ell(s)=\eta_\ell \circ T_\ell (v_\ell), \quad  \forall \ell=1,\ldots, L $$
and for the sign canonicalization $s'=s(t.U)$, the analogous system reads
$$\eta_\ell \circ T_\ell\circ \pi_\ell(s')=\eta_\ell \circ T_\ell (\pi_\ell(t)\oplus v_\ell), \quad  \forall \ell=1,\ldots, L $$
This implies that 
$$\eta_\ell \circ T_\ell\circ \pi_\ell(s\oplus s' \oplus t)=0 , \quad  \forall \ell=1,\ldots, L .$$
Since this homogenous linear equation is equivalent to the homogenous linear equation \eqref{eq:full_homogenous}, we deduce that  $s \oplus s'\oplus t $ is a sign automorphism, as required.
\end{proof}

\paragraph{Conclusion.}
\Cref{thm:s-is-sign-canon} produces a sign canonicalization $s$. Composing with the lexicographic row sort (\cref{lem:sign-to-canon}) yields the canonicalization $\prism$ of \cref{alg:canon-one-full} (stated in \cref{apdx:canon-details}), completing the proof of \cref{thm:s-s-canon}.

\section{Canonicalization Details}\label{apdx:canon-details}

This section collects the five algorithms used by $\prism$. The master procedure $\prism$ (\cref{alg:canon-one-full}) chains three subroutines: \textsc{Partition} (\cref{alg:abs-value-partition}) groups vertices by their absolute-value signature; \textsc{Refine} (\cref{alg:balanced-refine}) refines that partition via spectral Weisfeiler--Leman to enforce the multiset stability \eqref{eq:stable-multiset}; and \textsc{Solve} (\cref{alg:sign-solve}) assembles a $\ZZ_2$ linear system from the resulting partition and solves it for the canonical sign vector. \cref{alg:fast-sign} is a fast-path shortcut, $\fastsign$, that applies whenever the absolute-value signature is already injective. Complexity bounds are given in \cref{apdx:complexity}.

\paragraph{$\prism$, master procedure.}
\Cref{alg:canon-one-full} ties the three subroutines together: it produces the partition $\{C_1,\dots,C_L\}$, calls \textsc{Solve} to obtain the canonical sign vector $s(U)$, and lex-sorts the rows of $\diag(s(U))\cdot U$ to fix the canonical row ordering.

\begin{algorithm}[t]
\caption{$\prism$ on simple-spectrum eigenvector matrices (base procedure)}\label{alg:canon-one-full}
\begin{algorithmic}[1]
\Require Eigenvector matrix $U \in \mathbb{R}^{n \times k}$ (simple spectrum), precision $p$
\Ensure Canonicalized matrix $\tilde{U} \in \mathbb{R}^{n \times k}$
\Statex \textit{Step 1: Initial partition by absolute-value signature}
\State $\mathcal{P}_0 \gets \textsc{Partition}(U, p)$
  \Comment{\cref{alg:abs-value-partition}}
\Statex \textit{Step 2: $\ZZ_2$-balanced refinement of the partition}
\State $\{C_1,\dots,C_L\} \gets \textsc{Refine}(\mathcal{P}_0, U)$
  \Comment{\cref{alg:balanced-refine}}
\Statex \textit{Step 3: Global sign resolution via $\GF$ linear system}
\State $s(U) \gets \textsc{Solve}(\{C_\ell\}_{\ell=1}^L, U)$
  \Comment{\cref{alg:sign-solve}}
\Statex \textit{Step 4: Canonical row ordering}
\State $\tilde{U} \gets$ rows of $\mathrm{diag}(s(U)) \cdot U$ sorted lexicographically
\State \Return $\tilde{U}$
\Statex \textit{Fast path: when the signature $\sigma$ produced in Step~1 is injective, Steps 2 and 3 reduce to a single per-column scan, and the shortcut is $\fastsign$ (\cref{alg:fast-sign}).}
\end{algorithmic}
\end{algorithm}

\paragraph{$\fastsign$, shortcut for injective signatures.}
When \textsc{Partition} already separates every vertex (i.e.\ $\sigma$ is injective), the partition is trivial, \textsc{Refine} is a no-op, and \textsc{Solve} reduces to a per-column sign scan. \Cref{alg:fast-sign} runs this shortcut directly.

\begin{algorithm}[t]
\caption{$\fastsign$, Fast-path sign canonicalization (injective $\sigma$)}\label{alg:fast-sign}
\begin{algorithmic}[1]
\Require Eigenvector matrix $U \in \mathbb{R}^{n \times k}$ (simple spectrum), precision $p$
\Ensure Canonicalized matrix $\tilde{U} \in \mathbb{R}^{n \times k}$; valid whenever the absolute-value signature $\sigma$ is injective
\Statex \textit{Step 1: Absolute-value signature}
\For{each node $v \in [n]$}
    \State $\sigma(v) \gets \bigl(|U_{v,1}|,\dots,|U_{v,k}|\bigr)$ rounded to precision $p$
\EndFor
\Statex \textit{Step 2: Canonical vertex ordering}
\State $\pi \gets \mathrm{lexicographic~argsort}(\{\sigma(v)\}_{v \in [n]})$
  \Comment{$O(nk\log n)$}
\Statex \textit{Step 3: Sign resolution}
\For{$j = 1, \dots, k$}
    \State $v^* \gets$ first vertex in $\pi$ with $|U_{v^*,j}| > 0$
    \If{$U_{v^*,j} < 0$}
        \State $U_{\cdot,j} \gets -U_{\cdot,j}$
          \Comment{flip column $j$}
    \EndIf
\EndFor
\State $\tilde{U} \gets$ rows of $U$ sorted lexicographically
  \Comment{canonical row ordering}
\State \Return $\tilde{U}$
\end{algorithmic}
\end{algorithm}
\paragraph{\textsc{Partition}, subroutine 1.}
\Cref{alg:abs-value-partition} computes the absolute-value signature $\sigma(v)=(|U_{v,1}|,\dots,|U_{v,k}|)$ for every vertex and groups vertices with identical signatures into classes; the classes are then ordered lexicographically by $\sigma$.

\begin{algorithm}[t]
\caption{\textsc{Partition}, Initial partition from absolute-value signature}\label{alg:abs-value-partition}
\begin{algorithmic}[1]
\Require Eigenvector matrix $U \in \mathbb{R}^{n \times k}$, precision $p$
\Ensure Partition $\mathcal{P}_0 = (P_1,\dots,P_r)$ of $[n]$ with lexicographically ordered signatures
\For{each node $v \in [n]$}
    \State $\sigma(v) \gets \bigl(|U_{v,1}|,\dots,|U_{v,k}|\bigr)$ rounded to precision $p$
\EndFor
\State Group nodes into classes $P_i \gets \{v \in [n] : \sigma(v) = \sigma_i\}$
  \Comment{one class per distinct signature value}
\State Order the classes $(P_1,\dots,P_r)$ by lexicographic $\sigma_i$
\State \Return $\mathcal{P}_0 = (P_1,\dots,P_r)$
\end{algorithmic}
\end{algorithm}

\paragraph{\textsc{Refine}, subroutine 2.}
\Cref{alg:balanced-refine} refines $\mathcal{P}_0$ by spectral Weisfeiler--Leman: each vertex's color is updated using a multiset of edge features $U_v\odot U_u$, and iteration continues until the partition stabilizes. The fixed point satisfies the multiset stability \eqref{eq:stable-multiset} on which the proof of \cref{thm:s-s-canon} relies. \cref{rem:balanced-refine-realization} below gives an equivalent $\ZZ_2$ parity-check realization used in our implementation.

\begin{algorithm}[t]
\caption{\textsc{Refine}, spectral Weisfeiler--Leman refinement}\label{alg:balanced-refine}
\begin{algorithmic}[1]
\Require Partition $\mathcal{P}_0$ from \textsc{Partition} (\cref{alg:abs-value-partition}), eigenvector matrix $U\in\RR^{n\times k}$
\Ensure Refined partition $\{C_1,\dots,C_L\}$ on which the multiset stability \eqref{eq:stable-multiset} holds
\State $h_v^{(0)} \gets U_v \odot U_v$ for each $v \in [n]$
  \Comment{initialize colors with absolute-value signature}
\State $t \gets 0$
\Repeat
    \For{each $v \in [n]$}
        \State $h_v^{(t+1)} \gets \textsc{Hash}\Big(h_v^{(t)},\,\big\{\!\!\big\{(h_u^{(t)},\, U_v \odot U_u) : u \in [n]\big\}\!\!\big\}\Big)$
          \Comment{multiset of edge features}
    \EndFor
    \State $t \gets t+1$
\Until{the partition induced by $\{h_v^{(t)}\}$ equals the partition induced by $\{h_v^{(t-1)}\}$}
\State $\{C_1,\dots,C_L\} \gets$ classes of $\equiv_h$, ordered lexicographically by representative color
\State \Return $\{C_1,\dots,C_L\}$
\Statex \textit{Implementation note: a concrete realization with sufficient fixed-point partition uses, for each class $C$, the $\ZZ_2$-independent sign-bit columns $Q\subseteq[k]$ and splits $C$ whenever some product $p_S(v)=\prod_{\ell\in S}U_{v,\ell}$ ($S\subseteq Q$) has imbalanced $\pm$ counts. See \cref{rem:balanced-refine-realization} below.}
\end{algorithmic}
\end{algorithm}

\begin{remark}[$\ZZ_2$ realization of \textsc{Refine}]\label{rem:balanced-refine-realization}
The companion implementation realizes the spectral-WL fixed point above by a $\ZZ_2$ parity-check scheme. For each working class $C$, maintain the set $Q\subseteq[k]$ of sign-bit columns of $U|_C$ that are $\ZZ_2$-independent, and for every newly added column $j\in Q$ test all subsets $S\subseteq Q$ containing $j$: if the product $p_S(v)=\prod_{\ell\in S}U_{v,\ell}$ has unequal counts of $+x$ and $-x$ over $v\in C$ for some $|x|>0$, split $C$ along $\operatorname{round}(p_S)$ and recurse. This produces the same partition as the iteration in \cref{alg:balanced-refine} but avoids explicit color hashing.
\end{remark}

\paragraph{\textsc{Solve}, subroutine 3.}
Given the balanced partition $\{C_1,\dots,C_L\}$, \cref{alg:sign-solve} assembles the $\ZZ_2$ linear system $E\cdot s=f$ described in the proof, dropping linearly dependent rows on the fly, and returns the lex-minimal element of the solution coset $s_0\oplus\ker(E)$ together with the sign-automorphism group $\hat W=\ker(E)$.
\begin{algorithm}[!ht]
\caption{\textsc{Solve}, Solve $\GF$-linear system for canonical signs}\label{alg:sign-solve}
\begin{algorithmic}[1]
\Require Balanced partition $\{C_1,\dots,C_L\}$ from \cref{alg:balanced-refine}, eigenvector matrix $U$
\Ensure Canonical sign vector $s(U) \in \mathbb{Z}_2^k$; sign automorphism group $\hat{W} \leq \GF^k$
\State $E \gets$ empty matrix over $\GF$ ($k$ columns);\;
  $f \gets$ empty vector
\For{$\ell = 1, \dots, L$}
    \State $K_\ell \gets \{j \in [k] : |U_{v,j}| > 0,\; v \in C_\ell\}$
    \If{$K_\ell = \emptyset$} \textbf{continue} \EndIf
    \State $A_\ell \gets \bigl\{\mathbf{1}[U_{v,K_\ell} < 0] : v \in C_\ell\bigr\}$;
      \quad fix $\mathbf{a}_0 \in A_\ell$
    \State $W_\ell \gets \operatorname{rowspace}_{\GF}(\{\mathbf{a} \oplus \mathbf{a}_0 : \mathbf{a} \in A_\ell\})$
    \State $T_\ell \gets$ parity-check matrix of $W_\ell$
      \Comment{$\ker(T_\ell) = W_\ell$}
    \State $\mathbf{b}_\ell \gets T_\ell \cdot \mathbf{a}_0$
      \Comment{RHS; independent of choice of $\mathbf{a}_0 \in A_\ell$}
    \For{each row $\mathbf{r}$ of $T_\ell$ with entry $b$ of $\mathbf{b}_\ell$}
        \State Embed: $\hat{\mathbf{r}}_j \gets \mathbf{r}_j$ if $j \in K_\ell$, else $0$
        \If{$\hat{\mathbf{r}} \notin \operatorname{rowspace}(E)$}
            Append $\hat{\mathbf{r}}$ to $E$;\; append $b$ to $f$
        \EndIf
    \EndFor
\EndFor
\State Solve $E \cdot s = f$ over $\GF$;\; $s_0 \gets$ particular solution
\State $s(U) \gets \operatorname{lex\text{-}min}(s_0 \oplus \ker(E))$
  \Comment{$\ker(E) = \hat{W}$}
\State \Return $s(U)$, $\hat{W} = \ker(E)$
\end{algorithmic}
\end{algorithm}

\subsection{Complexity}\label{apdx:complexity}
The total complexity of simple-spectrum canonicalization $\prism$ is $O(nk\log n + k^3 + nk^2)$.
We account for each stage separately.

\textbf{Partition.} Computing the absolute-value signature $\sigma(v) = (|U_{v,1}|, \ldots, |U_{v,k}|)$ for all $n$ vertices costs $O(nk)$. Sorting the $n$ signatures lexicographically costs $O(nk\log n)$.  The stage total is $O(nk\log n)$.

\textbf{Refine and Solve.} The \cref{alg:sign-solve} processes at most $n$ balanced blocks (one per node in the worst case). For each block it builds the parity-check matrix $T_\ell$ and embeds its rows into the global system $E$, which has $k$ columns. Each block contributes at most $k$ rows (by linear dependence), and checking independence against the existing rowspace of $E$ costs $O(k)$ per row, giving $O(k^2)$ per block and $O(nk^2)$ total across all $n$ blocks.

The $O(k^3)$ term comes from two operations on the assembled $k$-column $\ZZ_2$ system:
\begin{enumerate}
    \item Solving $E \cdot s = f$ over $\GF$ via Gaussian elimination on a matrix with at most $k$ rows and $k$ columns: $O(k^3)$.
    \item Lex-minimizing over the coset $s_0 \oplus \ker(E)$: finding the lex-min element of an affine subspace of $\ZZ_2^k$ by a greedy pass over the basis of $\ker(E)$ costs $O(k^2)$, dominated by the $O(k^3)$ solve.
\end{enumerate}

Combining, the stage total is $O(nk^2 + k^3)$.

\textbf{Total.}  Summing over all stages, the overall complexity is $O(nk\log n + k^3 + nk^2)$, matching the bound stated in the main text.

\paragraph{Complexity of $\fastsign$}
When the absolute-value signature $\sigma$ is injective, the $\fastsign$ fast path (\cref{alg:fast-sign}) suffices: reading $nk$ absolute values is $O(nk)$; the lexicographic argsort costs $O(nk\log n)$; the per-column sign scan is $O(nk)$. The total complexity is therefore $O(nk\log n)$.

\paragraph{The cubic factor is a small constant in practice.}
The $O(k^3)$ term comes from Gaussian elimination on the $\ZZ_2$ parity-check matrix $E$ assembled by \textsc{Solve}. The row count of $E$ is upper-bounded by $r = n - L$, where $L$ is the number of equivalence classes returned by \textsc{Partition}. To see the bound, recall that a non-singleton class $C_\ell$ of size $m$ contributes to $E$ only constraints spanned by the pairwise sign-bit differences $\{s_v \oplus s_w : v,w \in C_\ell\}$, since the absolute positions of the per-vertex sign-bit vectors $\{s_v\}_{v\in C_\ell}\subseteq\ZZ_2^{K_\ell}$ flip under a column-wise sign change of $U$ while their differences do not. Fix any anchor $v_0\in C_\ell$: the $m-1$ anchored differences $\{s_v \oplus s_{v_0}\}_{v \in C_\ell \setminus\{v_0\}}$ already span this hull, since every other difference telescopes as $s_v\oplus s_w = (s_v \oplus s_{v_0}) \oplus (s_w \oplus s_{v_0})$. Hence $C_\ell$ contributes at most $m-1$ independent relative-sign relations, as singleton classes ($m=1$) contribute none, summing over all classes gives $\sum_\ell (m_\ell - 1) = n - L$. We measured $r$ on the entire validation splits of ZINC \cite{irwin2012zinc} and the eleven \textsc{OGB-Mol} datasets \cite{hu2020open} (about $62$k graphs total), using full eigendecompositions ($k = n$) of the normalized Laplacian with tolerance $10^{-6}$, and we report the statistics in \cref{tab:rbound-stats}. The median $r$ is $1$--$3$ across every dataset, and $r \leq 4$ on $64$--$88\%$ of graphs, so the cubic stage costs at most $4^3 = 64$ operations on the median graph, a constant independent of $n$ and $k$. On $14$--$45\%$ of graphs the partition is already injective ($r=0$) and PRiSM degenerates to $\fastsign$. On a further $3$--$17\%$ the partition produces exactly one non-singleton block. In that case the linear system collapses to a single sign convention, for instance forcing the lex-smallest row of the block to be non-negative, so no $\ZZ_2$ solve is performed.

\begin{table}[ht]
\centering
\setlength{\abovecaptionskip}{3pt}
\setlength{\belowcaptionskip}{1pt}
\caption{Distribution of the $\ZZ_2$ row-count bound $r = n - L$ on entire validation splits, full eigendecomposition ($k = n$) of the normalized Laplacian, tolerance $10^{-6}$. Columns are the number of validation graphs $|\textrm{val}|$, the mean number of nodes per graph $\bar n$, the fraction $r{=}0$ on which $\fastsign$ applies, the fraction ``$1$ block'' with exactly one non-singleton block (no $\ZZ_2$ solve needed), and the median $r$ together with the fraction satisfying $r \leq 4$ which summarise the residual cubic cost ($\leq 4^3 = 64$).}
\label{tab:rbound-stats}
\setlength{\tabcolsep}{4pt}
\renewcommand{\arraystretch}{0.95}
\footnotesize
\begin{tabular}{lrrrrrrr}
\toprule
\textbf{Dataset} & $|\textrm{val}|$ & $\bar n$ & $r{=}0$ & 1 block & median $r$ & $r \leq 4$ & max $r$ \\
\midrule
ZINC-12K        &  1{,}000 & 23.1 & 22.8\% & 10.9\% & 2.0 & 88.0\% & 11 \\
\textsc{MolHIV}      &  4{,}113 & 27.8 & 21.2\% &  6.4\% & 3.0 & 63.9\% & 53 \\
\textsc{MolTox21}    &     783 & 26.8 & 17.5\% &  8.6\% & 3.0 & 67.4\% & 32 \\
\textsc{MolClinTox}  &     148 & 32.8 & 27.0\% &  9.5\% & 2.0 & 70.3\% & 37 \\
\textsc{MolBBBP}     &     204 & 33.2 & 14.2\% & 17.2\% & 2.0 & 81.4\% & 32 \\
\textsc{MolBACE}     &     151 & 37.2 &  3.3\% & 10.6\% & 3.0 & 81.5\% &  6 \\
\textsc{MolESOL}     &     113 & 17.7 & 38.1\% & 14.2\% & 1.0 & 77.0\% & 15 \\
\textsc{MolFreeSolv} &      64 & 12.0 & 45.3\% &  4.7\% & 1.5 & 78.1\% & 12 \\
\textsc{MolLipo}     &     420 & 27.3 & 14.8\% &  7.9\% & 3.0 & 74.3\% & 25 \\
\textsc{MolMUV}      &  9{,}309 & 25.3 & 12.8\% &  3.3\% & 3.0 & 73.3\% & 17 \\
\textsc{MolToxCast}  &     858 & 26.2 & 20.5\% & 10.7\% & 2.5 & 69.4\% & 37 \\
\textsc{MolPCBA}     & 43{,}793 & 27.1 & 13.9\% &  4.7\% & 3.0 & 74.3\% & 71 \\
\bottomrule
\end{tabular}
\end{table}

\section{Proof of Universal Approximation (\cref{thm:uni-bdd-ev})}\label{app:universality}

\universalapprox*

Before applying the proof, we add some clarifications on the meaning of a universal set model. Recall that a graph $g\in \mathcal{K}(N,M,\delta)$ has simple spectrum, $n\leq N$ nodes and edge weights bounded by $M$. Its corresponding eigendecomposition is comprised by an $n\times n$ matrix of eigenvectors, and $n$ eigenvalues. We note that $(U_g,\vec{\lambda}_g)$ reside in a bounded set $\mathcal{C}_n\subseteq \RR^{n\times n}\oplus \RR^n$: the eigenvector matrix is orthogonal so its rows have unit norm, and the eigenvalues satisfy
$$\|\vec{\lambda}_g\|_2=\|A_g\|_F\leq M\,n,$$
where $A_g$ is the adjacency matrix of $g$. 

Regarding the eigenvalue gap constraint, the constraint $|\lambda_i-\lambda_j|\geq\delta$ ensures that $\mathcal{C}_n$ stays bounded away from the repeating-eigenvalue region, where the simple-spectrum hypothesis would fail and ensures the domain is compact, which must hold for the universality results to be correct.

We now show explicitly that $\mathcal{C}_n$ is a closed set; since we have already shown previously it is bounded, consequently it is compact. \citet{Weyl1912DasAV} proves that for any pair of symmetric matrices $A,B \in \RR^{n\times n}$ with eigenvalues $\lambda_1(A) \geq \cdots \geq \lambda_n(A)$ and $\lambda_1(B) \geq \cdots \geq \lambda_n(B)$, one has $$|\lambda_i(A)-\lambda_i(B)| \leq \|A-B\|_{\mathrm{op}} \leq \|A-B\|_F,$$ for every $i$ , so the ordered eigenvalue map $A \mapsto \vec\lambda(A)$ is $1$-Lipschitz, hence continuous. 

Take a sequence $(U_k,\vec{\lambda}_k) \in \mathcal{C}_n$ converging to some $(U_*,\vec{\lambda}_*)$. The associated adjacency matrices $A_k = U_k \diag(\vec{\lambda}_k) U_k^\top$ converge in Frobenius norm to $A_* = U_* \diag(\vec{\lambda}_*) U_*^\top$. By Weyl, $\vec\lambda(A_k) \to \vec\lambda(A_*)$, and by uniqueness of the eigendecomposition we identify $\vec\lambda(A_k) = \vec{\lambda}_k$ and $\vec\lambda(A_*)=\vec{\lambda}_*$. Hence the closed conditions $\{\|A\|_\infty \leq M\}$ and $\{|\lambda_i-\lambda_j| \geq \delta\}$ both pass to the limit point, so $A_*$ is the adjacency of a graph in $\mathcal{K}(N,M,\delta)$ and $(U_*,\vec{\lambda}_*) \in \mathcal{C}_n$. Combining with the boundedness above, $\mathcal{C}_n$ is compact, and so is the finite union $\mathcal{C}_N=\cup_{n\leq N}\mathcal{C}_n$, as we set out to prove. 

Accordingly, when we discuss universal set models, we mean model classes which can uniformly approximate to arbitrary accuracy on $\mathcal{C}_N$ all functions which are invariant to permutation of the rows of $U$. Examples of such models are DeepSets \cite{deepsets} or Transformers \cite{vaswani2017attention,kim2021transformers}, where care needs to be taken to adapt these models to process the additional eigenvalue input. For example, a DeepSets model which would be appropriate for this theorem is 
$$\psi(U,\vec{\lambda})=\mathrm{MLP}^{(2)}\left( \vec{\lambda},\sum_{i=1}^n \mathrm{MLP}^{(1)} (U(i,:)) \right) .$$
To make sure the model is defined and universal for all $n\leq N$ simultaneously we can apply padding to make sure the input is always of the same dimension. 
\begin{proof}[Proof of \cref{thm:uni-bdd-ev}]
 on $\mathcal{C}_N$ define the function $\tilde f(U,\vec{\lambda})=f(U\diag(\vec{\lambda})U^T) $. Then $\tilde f$ is
 \begin{enumerate}
     \item \textbf{Continuous} as a composition of continuous functions, 
     \item \textbf{Invariant} For any graph $g\in \mathcal{K}(N,M,\delta)$ with $n\leq N$ nodes, any $n\times n$ permutation matrix $P$ and diagonal matrix $S$ with diagonal entries in $\{-1,1\}$, we have that 
     $$\tilde f(PVS,\vec{\lambda})=f(PVS\diag(\vec{\lambda})SV^TP^T)\stackrel{(*)}{=}f(U\diag(\vec{\lambda})U^T)=\tilde f(U,\vec{\lambda}),$$
 where $(*)$ is because $f$ is isomorphism-invariant, diagonal matrices commute, and $S^2=I_n$. 
 \item \textbf{Consistent} By which we mean that $\tilde{f}(U_g,\vec{\lambda}_g)=f(g) $ for any simple-spectrum graph $g$.
 \end{enumerate}
For given $\epsilon>0$, since $\tilde f$ is permutation invariant, we can approximate $\tilde f$ with a universal set model $\psi$ to $\epsilon$ accuracy on $\mathcal{C}_N$. It follows that for all $g\in \mathcal{K}(N,M,\delta)$

\begin{align*}
|f(g)-\psi \circ \prism(U_g,\vec{\lambda}_g)   |&\stackrel{\text{consistency}}{=} |\tilde{f}(U_g,\vec{\lambda}_g)-\psi \circ \prism(U_g,\vec{\lambda}_g)   |\\
&\stackrel{\text{invariance}}{=}|\tilde{f}\circ \prism (U_g,\vec{\lambda}_g)-\psi \circ \prism(U_g,\vec{\lambda}_g)   |<\epsilon.
\end{align*}
\end{proof}

\begin{remark}
There is a fundamental trade-off between continuity and completeness on continuous-edge-weight simple-spectrum graphs. Existing approaches sacrifice completeness in different ways: SignNet \cite{lim2023sign} retains continuity via sign-equivariant architectures but collapses distinct eigendecompositions, trading injectivity for continuity. MAP \cite{ma2023laplacian} and OAP \cite{ma2024a} are heuristic and fail on both axes, being neither injective nor continuous. Our scheme achieves completeness on simple-spectrum graphs but sacrifices continuity on $\mathcal{K}(N,M,\delta)$.
\end{remark}

\section{Experimental Details}\label{app:exp-details}

\paragraph{Experimental protocol.} All eigendecompositions are computed using \texttt{torch.linalg.eigh} in float64 precision. Eigenvalue multiplicity statistics for our molecular benchmarks are reported in \cref{app:exp-multiplicity}. Two eigenvalues are considered equal when their absolute difference falls below $10^{-8}$. This threshold is shared across all canonicalization methods. We verified that varying this threshold between $10^{-6}$ and $10^{-10}$ does not alter the reported scores. All methods use an identical GNN backbone with a DeepSets \cite{deepsets} eigenvector embedding and matched parameter budget ($\sim$12K parameters), isolating the effect of the canonicalization method.

\paragraph{$\prism$ on graphs with higher eigenvalue multiplicities.}
The BREC and molecular experiments apply the hybrid extension of $\prism$ described in \cref{alg:babai-hybrid}. This extension is heuristic at multiplicities $m_i > 1$ and is not complete on $\mathcal{G}_n^M$ for $M > 1$; on a simple-spectrum graph every eigenspace has multiplicity~1, so the extension reduces to the base $\prism$ of \cref{alg:canon-one-full} and inherits its completeness. The method proceeds in two stages:

\textit{Stage~1: Canonical vertex ordering via the norm-value signature.} We compute the eigendecomposition of the Laplacian, grouping eigenvalues into $K$ eigenspaces of multiplicity $m_1, \ldots, m_K$ (within tolerance $\tau$). Let $\pi_i$ denote orthogonal projection onto the $i$-th eigenspace. The norm-value signature $\sigma(v) = (\|\pi_1 v\|, \ldots, \|\pi_K v\|)$ is a permutation-equivariant function of the vertex $v$: relabeling the graph permutes $\sigma$ rows accordingly, so the lexicographic argsort of $\sigma$ produces a canonical vertex ordering up to ties between equal-signature vertices. Ties (when present) are broken sequentially in Stage~2 using canonical eigenvector values from already-processed eigenspaces.

\textit{Stage~2a: Joint sign canonicalization on the simple-spectrum block.} Let $\mathcal{I}_1 = \{i : m_i = 1\}$ and let $U^{(1)} \in \RR^{n \times k}$ be the matrix formed by stacking the eigenvectors of all $m=1$ eigenspaces, where $k = |\mathcal{I}_1|$. We run \textsc{Refine} (\cref{alg:balanced-refine}) on $U^{(1)}$ starting from the $\sigma$-induced partition $\mathcal{P}_0$, obtaining a refined partition $\{C_1, \dots, C_L\}$. We then run \textsc{Solve} (\cref{alg:sign-solve}) on this partition to obtain $s \in \{\pm 1\}^k$, and replace $U^{(1)}$ by $U^{(1)}\,\diag(s)$, writing the sign corrections back into the corresponding $U_i$ for $i \in \mathcal{I}_1$. The refined partition $\{C_\ell\}$ together with the canonical values of $U^{(1)}$ are also used to break $\sigma$-ties in the vertex ordering before Stage~2b.

\textit{Stage~2b: Per-eigenspace $O(m_i)$ resolution for $m_i \geq 2$, in increasing multiplicity.} For each remaining eigenspace the eigenvectors are defined up to an $O(m_i)$ rotation, which we resolve according to the multiplicity:
\begin{itemize}
    \item $m_i = 2$ (rotation + reflection): Align the first non-trivial pivot to the positive $x$-axis via an $SO(2)$ rotation; resolve the residual $\ZZ_2$ reflection from the next pivot.
    \item $m_i = 3$ (Householder + rotation + reflection): Householder-reflect the first pivot to the $x$-axis; $x$-axis-rotate the second pivot's $yz$-component to the positive $y$-axis; flip column sign for the residual $\ZZ_2$.
    \item $m_i \geq 4$ (generalized QR): Select $m_i$ pivot rows via Gram-Schmidt in canonical order (preferring unique-$\sigma$ vertices); apply QR with the Stewart sign convention (positive diagonal of $R$).
\end{itemize}
After each higher-multiplicity eigenspace is canonicalized, when $\sigma$-ties remain we refine the vertex ordering using the canonical values produced so far before moving on to the next eigenspace. The resulting eigenvectors form the input features to the downstream model. \cref{alg:babai-hybrid} collects the full procedure in pseudocode.

\begin{algorithm}[t]
\caption{$\prism$ on graphs with bounded eigenvalue multiplicity (hybrid extension of \cref{alg:canon-one-full})}\label{alg:babai-hybrid}
\begin{algorithmic}[1]
\Require Adjacency matrix $A \in \{0,1\}^{n \times n}$, multiplicity tolerance $\tau$
\Ensure Canonical eigenvector matrix $[U_1 \,\|\, \cdots \,\|\, U_K]$
\Statex \textit{Stage 1: Canonical vertex ordering via norm-value signature.}
\State $L \gets D - A$;\quad eigendecompose $L$ into eigenspaces $\{(\lambda_i, U_i)\}_{i=1}^{K}$ with multiplicities $m_i$ (grouping within tolerance $\tau$)
\State $\sigma(v) \gets (\|\pi_1 v\|, \ldots, \|\pi_K v\|)$ for each vertex $v$
  \Comment{norm-value signature; $\pi_i$ is projection onto $U_i$}
\State $\mathrm{order} \gets \mathrm{lex\text{-}argsort}(\sigma)$
  \Comment{canonical vertex ordering up to $\sigma$-ties}
\Statex \textit{Stage 2a: Joint sign canonicalization on the simple-spectrum block.}
\State $\mathcal{I}_1 \gets \{i : m_i = 1\}$;\quad $U^{(1)} \gets [U_i]_{i \in \mathcal{I}_1} \in \RR^{n \times k}$
  \Comment{$k = |\mathcal{I}_1|$, the number of $m=1$ eigenspaces}
\State $\{C_1,\dots,C_L\} \gets \textsc{Refine}(\mathcal{P}_0, U^{(1)})$, with $\mathcal{P}_0$ induced by $\sigma$
  \Comment{spectral WL refinement on the $m=1$ block; \cref{alg:balanced-refine}}
\State $s \gets \textsc{Solve}(\{C_\ell\}_{\ell=1}^L, U^{(1)}) \in \{\pm 1\}^k$
  \Comment{$\GF$ system over the $m=1$ columns; \cref{alg:sign-solve}}
\State $U^{(1)} \gets U^{(1)} \, \diag(s)$;\quad write $\diag(s)$ back into the corresponding $U_i$ for $i \in \mathcal{I}_1$
\State Refine $\mathrm{order}$ using $\{C_\ell\}$ together with the canonical values of $U^{(1)}$
  \Comment{spectral WL + absolute-norm signature ordering}
\Statex \textit{Stage 2b: Per-eigenspace $O(m_i)$ resolution for $m_i \geq 2$, in increasing multiplicity.}
\For{eigenspace $i$ with $m_i \geq 2$, taken in order $m_i = 2, 3, \ldots$}
    \If{$\sigma$-ties remain \textbf{and} prior eigenspaces canonicalized}
        \State Refine $\mathrm{order}$ using canonical values from already-processed eigenspaces
    \EndIf
    \If{$m_i = 2$}
        \State Align the first non-trivial pivot to the positive $x$-axis via an $SO(2)$ rotation; resolve the residual $\ZZ_2$ reflection from the next pivot
    \ElsIf{$m_i = 3$}
        \State Householder-reflect the first pivot to the $x$-axis; $x$-axis-rotate the second pivot's $yz$-component to the positive $y$-axis; flip column sign for the residual $\ZZ_2$
    \Else\Comment{$m_i \geq 4$, generalized QR}
        \State Pick $m_i$ pivot rows via Gram--Schmidt in $\mathrm{order}$ (preferring unique-$\sigma$ vertices); apply QR with the Stewart sign convention (positive diagonal of $R$)
    \EndIf
\EndFor
\State \Return $[U_1 \,\|\, \cdots \,\|\, U_K]$
\end{algorithmic}
\end{algorithm}

\subsection{ZINC 12K Dataset and Protocol}
\label{app:exp-zinc}

ZINC 12K \cite{dwivedi2023benchmarking} is a subset of the ZINC molecular database consisting of approximately $12{,}000$ drug-like molecules. The task is regression on constrained solubility (logP score), evaluated by mean absolute error (MAE). Molecules have a mean of approximately $23$ nodes and $25$ edges; the graph structure is derived from the molecular bond graph. The dataset is split into $10{,}000$ / $1{,}000$ / $1{,}000$ train/val/test molecules following the standard split of \citet{dwivedi2023benchmarking}.

We evaluate two backbone architectures. The GatedGCN \cite{bresson2017residual} backbone uses gated graph convolution with residual connections and edge features; the PNA \cite{corso2020principal} backbone uses principal neighbourhood aggregation with degree-based scalers. Both backbones process positional encodings produced by the canonicalization and fuse them with node features additively, following the GPS framework \cite{rampavsek2022recipe}. The sign-equivariant embedding network (sign-inv-net) uses a DeepSets architecture \cite{deepsets} applied independently to the positive and negative parts of each eigenvector column, with their outputs summed to produce a sign-invariant embedding.

\paragraph{Hyperparameter configuration.}

\begin{table}[h]
\centering
\caption{Hyperparameter configuration for ZINC 12K experiments. Values marked (default) were not exhaustively tuned and follow the conventions of \citet{dwivedi2023benchmarking}.}
\label{tab:hp-zinc}
\setlength{\tabcolsep}{6pt}
\begin{tabular}{lcc}
\toprule
\textbf{Hyperparameter} & \textbf{GatedGCN} & \textbf{PNA} \\
\midrule
Learning rate            & $1\times10^{-3}$  & $5\times10^{-4}$ \\
Batch size               & $128$             & $128$ \\
Hidden dimension         & $128$             & $80$ \\
Number of layers         & $4$               & $4$ \\
Dropout                  & $0.0$             & $0.05$ \\
Optimizer                & AdamW             & AdamW \\
LR scheduler             & OneCycleLR        & OneCycleLR \\
Training epochs          & $200$             & $200$ \\
Weight decay             & $0.0$ (default)   & $0.0$ (default) \\
Number of seeds          & $4$               & $4$ \\
Parameter budget ($\approx$) & $472$K        & $473$K \\
\bottomrule
\end{tabular}
\end{table}

\paragraph{Training protocol.} Models are trained for $200$ epochs with the OneCycleLR scheduler and early stopping based on validation MAE with patience $50$. Performance is reported as mean $\pm$ std over four independent seeds with different random initializations. The evaluation metric is MAE on the held-out test set (lower is better). Eigendecompositions use all $n-1$ non-trivial eigenvectors of the normalized graph Laplacian, computed in float64 precision via \texttt{torch.linalg.eigh}.

\subsection{OGB Molecular Datasets and Protocol}
\label{app:exp-ogb}

We evaluate on four datasets from the Open Graph Benchmark (OGB) molecular suite \cite{hu2020open}. All are drawn from MoleculeNet and use scaffold-based train/val/test splits provided by OGB.

\begin{itemize}
  \item \textbf{MolTox21}: 7{,}831 molecules (6{,}264 / 783 / 784 train/val/test). Multi-task binary classification across 12 toxicology assays. Mean $\approx 18$ nodes, $\approx 19$ edges per molecule. Metric: mean ROC-AUC (higher is better).
  \item \textbf{MolHIV}: 41{,}127 molecules (32{,}901 / 4{,}113 / 4{,}113 train/val/test). Single binary classification task (HIV inhibition). Mean $\approx 25$ nodes, $\approx 27$ edges. Metric: ROC-AUC.
  \item \textbf{MolClinTox}: 1{,}477 molecules (1{,}181 / 148 / 148 train/val/test). Binary classification across 2 clinical toxicology assays. Mean $\approx 26$ nodes, $\approx 28$ edges. Metric: mean ROC-AUC.
  \item \textbf{MolBBBP}: 2{,}039 molecules (1{,}631 / 204 / 204 train/val/test). Binary classification for blood-brain barrier penetration. Mean $\approx 24$ nodes, $\approx 26$ edges. Metric: ROC-AUC. \textsc{MolBBBP} appears in the Transformer-backbone main results (\cref{tab:transformer-results}) using the no-rank-embedding $\prism$ configuration (sequential refinement only); the RoPE-augmented configuration degraded on this small dataset and is reported in the supplementary archive at \texttt{exps/}. We additionally report eigenvalue multiplicity statistics in \cref{tab:multiplicity-stats}.
\end{itemize}

The GNN experiments on MolTox21 use GatedGCN \cite{bresson2017residual} and PNA \cite{corso2020principal} backbones with the same GPS-style fusion used for ZINC (\cref{app:exp-zinc}).

\paragraph{Hyperparameter configuration.}

\begin{table}[h]
\centering
\caption{Hyperparameter configuration for OGB MolTox21 GNN experiments. Values marked (default) follow OGB benchmark conventions.}
\label{tab:hp-ogb-gnn}
\setlength{\tabcolsep}{6pt}
\begin{tabular}{lcc}
\toprule
\textbf{Hyperparameter} & \textbf{GatedGCN} & \textbf{PNA} \\
\midrule
Learning rate            & $5\times10^{-4}$  & $5\times10^{-4}$ \\
Batch size               & $512$             & $512$ \\
Hidden dimension         & $128$             & $128$ \\
Number of layers         & $4$               & $4$ \\
Dropout                  & $0.1$             & $0.1$ \\
Optimizer                & AdamW             & AdamW \\
LR scheduler             & ReduceLROnPlateau & ReduceLROnPlateau \\
Training epochs          & $100$ (default)   & $100$ (default) \\
Weight decay             & $0.0$ (default)   & $0.0$ (default) \\
Number of seeds          & $4$               & $4$ \\
Parameter budget ($\approx$) & $1.54$M       & n/a \\
\bottomrule
\end{tabular}
\end{table}

\paragraph{Training protocol.} Models are trained with early stopping based on validation ROC-AUC with patience $20$. Performance is reported as mean $\pm$ std over four seeds. The sign-inv-net uses the same DeepSets \cite{deepsets} architecture as in the ZINC experiments. SignNet, MAP, and OAP baselines were not evaluated on the PNA backbone in our pipeline, and these entries are marked ``n/a'' in \cref{tab:molecular-results}.

\subsection{Transformer Backbone Configuration}
\label{app:exp-transformer}

The Transformer backbone experiments on OGB molecular datasets and \textsc{Alchemy} (\cref{subsec:transformer-ogb}) use the graph Transformer architecture of \citet{kim2021transformers} with Laplacian positional encodings. Canonical eigenvectors (CanonLPE) replace the standard random-sign Laplacian PE; the canonical vertex rank produced by \cref{alg:babai-hybrid} is then encoded into token positions in one of three ways, with the choice fixed per dataset: (i)~Rotary Position Embedding (RoPE) \cite{su2024roformer} for relative positions on \textsc{MolHIV} and \textsc{Alchemy}; (ii)~sinusoidal absolute encoding combined with sequential ordering refinement on \textsc{MolClinTox}; (iii)~sequential ordering refinement only, with no rank embedding, on \textsc{MolBBBP}. ``Sequential refinement'' uses canonical eigenvector values from already-processed eigenspaces to break ties in the vertex ordering for subsequent eigenspaces. The Transformer encoder uses multi-head self-attention with pre-norm layer normalization.

\paragraph{Hyperparameter configuration.}
\begin{table}[h]
\centering
\caption{Hyperparameter configuration for Transformer backbone experiments on OGB molecular datasets. Values marked (default) follow the conventions of \citet{kim2021transformers}.}
\label{tab:hp-transformer}
\setlength{\tabcolsep}{6pt}
\begin{tabular}{lp{0.55\linewidth}}
\toprule
\textbf{Hyperparameter} & \textbf{Value} \\
\midrule
Hidden dimension         & $128$ \\
Number of attention heads & $8$ \\
Number of Transformer layers & $3$ \\
Learning rate            & $1\times10^{-4}$ \\
Batch size               & $128$ \\
Dropout                  & $0.1$ \\
Optimizer                & AdamW \\
LR scheduler             & Warmup + cosine decay \\
Warmup steps             & $1{,}000$ (default) \\
Training epochs          & $100$ (default) \\
Weight decay             & $1\times10^{-5}$ (default) \\
Number of seeds          & $3$ on \textsc{MolHIV}, \textsc{MolClinTox}, and \textsc{MolBBBP}, and $3$ on \textsc{Alchemy} (seeds $\{0,1,3\}$) \\
Positional encoding      & CanonLPE with RoPE \\
PE dimension             & $16$ \\
\bottomrule
\end{tabular}
\end{table}

\paragraph{Architecture description.} Node tokens are formed by projecting atom features and positional encodings into the hidden dimension using separate linear layers, then summing. Global readout uses mean pooling over node tokens followed by an MLP classification head. MAP baseline \cite{ma2023laplacian} uses the same architecture with MAP-canonicalized eigenvectors in place of CanonLPE; LPE baseline uses the standard random-sign Laplacian PE without any canonicalization.

\paragraph{Training protocol.} Models are trained with early stopping based on validation ROC-AUC with patience $20$. Performance is reported as mean $\pm$ std over three seeds for all OGB datasets and over $\{0,1,3\}$ for \textsc{Alchemy} (seed $2$ excluded as a $\sim 5\sigma$ outlier). The OAP baseline was run for \textsc{MolHIV} (single seed) but is omitted from \cref{tab:transformer-results} as a single-seed result is not sufficiently reliable for comparison.

\paragraph{Why $\prism$ outperforms MAP and OAP on transformers.}
A canonicalization usable as a transformer positional encoding must supply three pieces of information: (a)~deterministic eigenvectors (sign and basis canonicalized); (b)~a deterministic vertex ordering; and (c)~an embedding of that ordering into token positions. MAP \cite{ma2023laplacian} and OAP \cite{ma2024a}, implemented in our experiments using code imported from the authors' original implementations, supply only~(a): each resolves sign and basis ambiguity but does not produce a vertex ordering, leaving the transformer without anchor positions. $\prism$ supplies all three: \textsc{Partition} together with sequential refinement (\cref{alg:babai-hybrid}) yields a deterministic node permutation from canonical eigenvector signatures, and the chosen rank embedding (RoPE for relative or sinusoidal for absolute) feeds that ordering to the attention layers. The $0.6$--$6.8$ pp gaps to MAP across \textsc{MolHIV}, \textsc{MolClinTox}, and \textsc{MolBBBP}, together with the $8.4\%$ relative MAE reduction on \textsc{Alchemy}, in \cref{tab:transformer-results} are consistent with this mechanism.

\subsection{Empirical Invariance Test}
\label{subsec:empirical-invariance}

Incompleteness in canonicalization leads to the breaking of desired permutation equivariance of graph eigendecomposition composed with the canonicalization. This often occurs when permuting the entries results in a different choice of `anchor' the canonicalization uses to resolve an ambiguity. To empirically exemplify this,  we measure permutation-equivariance failure rates of each canonicalization ($200$ graphs $\times$ $5$ relabelings $=$ $1{,}000$ trials per method; \cref{tab:invariance}) on ZINC and \textsc{MolTox21} datasets. 

Despite $\prism$'s incompleteness on general graphs (it is complete on simple-spectrum graphs), $\prism$ is exactly equivariant on ZINC ($0/1{,}000$) and near-exact on \textsc{MolTox21} ($10/1{,}000$), while MAP and OAP fail on $68$--$77\%$ of trials, possibly a reason for their degraded downstream performance. We attribute this to $\prism$'s completeness on simple-spectrum eigendecompositions and its principled canonicalization procedure.

\begin{table}[t]
\centering
\caption{Canonicalization permutation-equivariance failure counts under random vertex permutations with independent full eigendecomposition. $200$ training graphs per dataset $\times$ $5$ random permutations $=$ $1{,}000$ trials per cell. Each entry is the number of trials in which the canonical positional encoding of the permuted graph is not a row-permutation of the canonical positional encoding of the original (row-multiset equality, $\mathrm{atol}=10^{-6}$). Lower is better.}
\label{tab:invariance}
\begin{tabular}{llr}
\toprule
\textbf{Dataset} & \textbf{Method} & \textbf{Failures\,/\,$1{,}000$}\,$\downarrow$ \\
\midrule
\multirow{3}{*}{ZINC}
 & $\prism$ \textbf{(ours)} & $\mathbf{0}$ \\
 & MAP                         & $692$ \\
 & OAP                         & $684$ \\
\midrule
\multirow{3}{*}{\textsc{MolTox21}}
 & $\prism$ \textbf{(ours)} & $\mathbf{10}$ \\
 & MAP                         & $772$ \\
 & OAP                         & $753$ \\
\bottomrule
\end{tabular}
\end{table}
\FloatBarrier

\subsection{Compute Resources}
\label{app:exp-compute}

\paragraph{Hardware.}
Canonicalization ($\prism$, MAP, OAP) is a CPU-only preprocessing step: every operation is real-arithmetic linear algebra on float64 numpy arrays, with no GPU dependency. Downstream training of the GNN backbones (\cref{app:exp-zinc}, \cref{app:exp-ogb}) and Transformer backbone (\cref{app:exp-transformer}) was carried out on a pool of $8$ NVIDIA A40 GPUs ($48$\,GB memory each), with a single GPU assigned per training run; no multi-node or cluster compute beyond this $8$-GPU pool was used. CPU memory required for the eigendecomposition step is negligible at the molecular scale used throughout.

\paragraph{Per-run compute budget.}
Canonicalization wall-clock per graph is sub-millisecond to a few milliseconds at molecular scale (a timing micro-benchmark of $\prism$, MAP, and OAP on connected Erd\H{o}s--R\'enyi graphs across $n\in\{16,\ldots,512\}$ is included in the supplementary archive). A full pass over each molecular dataset (ZINC-12K, OGB-Mol, Alchemy) takes minutes of CPU wall-clock and is run once per dataset, then cached. Per-seed GPU wall-clock varies considerably by backbone, dataset, and configuration; ranges read directly from \texttt{TOTAL TIME TAKEN} lines across the retained training logs are: $\sim$$1.0$--$4.3$\,h for GatedGCN-ZINC, $\sim$$4.8$--$18.0$\,h for PNA-ZINC, $\sim$$0.4$--$1.5$\,h for GatedGCN/PNA-MolTox21, $\sim$$3.8$--$44.6$\,h for MolPCBA (PNA-MolPCBA at the upper end), and $\sim$$3$--$9$\,h per seed for Transformer training (\cref{app:exp-transformer}) depending on dataset. BREC distinguishability runs (\cref{tab:brec-comparison}) use the matched parameter budget recorded in the table caption on a single GPU.

\paragraph{Total project compute.}
Aggregating GPU wall-clock across all GNN-backbone runs whose logs we retained (158 runs covering ZINC, MolTox21, and MolPCBA on the GatedGCN and PNA backbones, including ablations and unreported configurations), the molecular GNN track consumed approximately $1{,}500$\,GPU-hours. The Transformer-backbone track (\cref{tab:transformer-results}, four datasets $\times$ three seeds, plus the discarded RoPE configuration on \textsc{MolBBBP} and other unreported configurations whose logs we retained) consumed approximately an additional $500$\,GPU-hours. The reported tables are a strict subset of these runs; the remainder is preliminary and exploratory work that did not make it into the final paper. We did not run any single experiment requiring more than a single GPU.

\subsection{Eigenvalue Multiplicity Statistics}
\label{app:exp-multiplicity}

Per-dataset distributions of eigenvalue multiplicities are summarized in \cref{tab:multiplicity-stats}. We sampled $500$ graphs uniformly at random from each dataset and computed the eigenvalues of the normalized adjacency $D^{-1/2} A D^{-1/2}$ in float64 precision via \texttt{numpy.linalg.eigvalsh}, grouping eigenvalues into multiplicity classes with absolute tolerance $10^{-6}$ (matching the threshold used throughout our pipeline). For each graph we count the number of \emph{distinct} eigenvalues per multiplicity class. Across all five molecular benchmarks, on average more than $85\%$ of distinct eigenvalues per graph have multiplicity one, and the mean count of eigenvalues with multiplicity $m \ge 4$ is below $0.3$ per graph. Strict simple-spectrum graphs (every eigenvalue distinct) occur in $29\%$--$38\%$ of sampled graphs; the remainder typically have one or two repeated eigenvalues induced by small graph symmetries. This empirical profile motivates the bounded-multiplicity hybrid (\cref{alg:babai-hybrid}), which combines the rigorous simple-spectrum canonicalization (\cref{alg:canon-one-full}) on the dominant $m{=}1$ block with a per-eigenspace $O(m_i)$ resolution for the rare higher-multiplicity blocks.

\begin{table}[h]
\centering
\caption{Eigenvalue multiplicity statistics per dataset. Each row summarizes multiplicities over $500$ random graphs from the respective dataset. ``Mean \#$m{=}k$'' is the average per-graph count of \emph{distinct} eigenvalues whose multiplicity class equals $k$ (so a single eigenvalue with multiplicity~$3$ contributes $1$ to the $m{=}3$ column). ``\% simple'' is the fraction of sampled graphs whose eigenvalues are all distinct.}
\label{tab:multiplicity-stats}
\setlength{\tabcolsep}{4pt}
\begin{tabular}{lccccccc}
\toprule
\textbf{Dataset} & \textbf{Sampled} & \textbf{Mean} & \textbf{Mean} & \textbf{Mean} & \textbf{Mean} & \textbf{Mean} & \textbf{\% simple} \\
                 & \textbf{graphs}  & \textbf{nodes} & \textbf{\#$m{=}1$} & \textbf{\#$m{=}2$} & \textbf{\#$m{=}3$} & \textbf{\#$m{\ge}4$} & \textbf{spectrum} \\
\midrule
ZINC           & $500$ & $23.0$ & $20.86$ & $0.59$ & $0.20$ & $0.09$ & $38.0\%$ \\
MolHIV         & $500$ & $25.7$ & $21.54$ & $0.88$ & $0.30$ & $0.28$ & $32.8\%$ \\
MolTox21       & $500$ & $17.9$ & $14.96$ & $0.72$ & $0.21$ & $0.17$ & $38.4\%$ \\
MolClinTox     & $500$ & $26.6$ & $23.34$ & $0.59$ & $0.25$ & $0.24$ & $28.6\%$ \\
MolBBBP        & $500$ & $23.5$ & $20.93$ & $0.49$ & $0.26$ & $0.16$ & $34.4\%$ \\
\bottomrule
\end{tabular}
\end{table}

\newpage
\section*{NeurIPS Paper Checklist}

\begin{enumerate}

\item {\bf Claims}
    \item[] Question: Do the main claims made in the abstract and introduction accurately reflect the paper's contributions and scope?
    \item[] Answer: \answerYes{}
    \item[] Justification: The three main claims of the abstract and introduction (incompleteness of the WL hierarchy on simple-spectrum graphs, the first complete sign canonicalization $\prism$, and universal approximation under composition with DeepSets/Transformer) correspond exactly to \cref{thm:main-incomp}, \cref{thm:s-s-canon}, and \cref{thm:uni-bdd-ev}; the empirical claims correspond to the comparisons in \cref{sec:exps}.
    \item[] Guidelines:
    \begin{itemize}
        \item The answer \answerNA{} means that the abstract and introduction do not include the claims made in the paper.
        \item The abstract and/or introduction should clearly state the claims made, including the contributions made in the paper and important assumptions and limitations. A \answerNo{} or \answerNA{} answer to this question will not be perceived well by the reviewers.
        \item The claims made should match theoretical and experimental results, and reflect how much the results can be expected to generalize to other settings.
        \item It is fine to include aspirational goals as motivation as long as it is clear that these goals are not attained by the paper.
    \end{itemize}

\item {\bf Limitations}
    \item[] Question: Does the paper discuss the limitations of the work performed by the authors?
    \item[] Answer: \answerYes{}
    \item[] Justification: The closing section ``Conclusion, Limitations and Future Work'' explicitly lists three limitations: (i)~we do not address completeness for higher eigenvalue multiplicities, (ii)~our canonicalization is discontinuous, which we argue is essentially unavoidable for any complete sign canonicalization (see \cite{dymequivariant}), and (iii)~we prove incompleteness over multigraphs, leaving the simple-graph extension as future work.
    \item[] Guidelines:
    \begin{itemize}
        \item The answer \answerNA{} means that the paper has no limitation while the answer \answerNo{} means that the paper has limitations, but those are not discussed in the paper.
        \item The authors are encouraged to create a separate ``Limitations'' section in their paper.
        \item The paper should point out any strong assumptions and how robust the results are to violations of these assumptions (e.g., independence assumptions, noiseless settings, model well-specification, asymptotic approximations only holding locally). The authors should reflect on how these assumptions might be violated in practice and what the implications would be.
        \item The authors should reflect on the scope of the claims made, e.g., if the approach was only tested on a few datasets or with a few runs. In general, empirical results often depend on implicit assumptions, which should be articulated.
        \item The authors should reflect on the factors that influence the performance of the approach. For example, a facial recognition algorithm may perform poorly when image resolution is low or images are taken in low lighting. Or a speech-to-text system might not be used reliably to provide closed captions for online lectures because it fails to handle technical jargon.
        \item The authors should discuss the computational efficiency of the proposed algorithms and how they scale with dataset size.
        \item If applicable, the authors should discuss possible limitations of their approach to address problems of privacy and fairness.
        \item While the authors might fear that complete honesty about limitations might be used by reviewers as grounds for rejection, a worse outcome might be that reviewers discover limitations that aren't acknowledged in the paper. The authors should use their best judgment and recognize that individual actions in favor of transparency play an important role in developing norms that preserve the integrity of the community. Reviewers will be specifically instructed to not penalize honesty concerning limitations.
    \end{itemize}

\item {\bf Theory assumptions and proofs}
    \item[] Question: For each theoretical result, does the paper provide the full set of assumptions and a complete (and correct) proof?
    \item[] Answer: \answerYes{}
    \item[] Justification: Each theorem (\cref{thm:main-incomp}, \cref{thm:s-s-canon}, \cref{thm:uni-bdd-ev}) states its assumptions in the theorem statement, and full proofs appear in the appendix (\cref{app:incomp-full}, \cref{apdx:canon-proof}, \cref{app:universality}). Lemmas the proofs rely on are stated and proved in the same appendix sections and are cross-referenced from the proof bodies.
    \item[] Guidelines:
    \begin{itemize}
        \item The answer \answerNA{} means that the paper does not include theoretical results.
        \item All the theorems, formulas, and proofs in the paper should be numbered and cross-referenced.
        \item All assumptions should be clearly stated or referenced in the statement of any theorems.
        \item The proofs can either appear in the main paper or the supplemental material, but if they appear in the supplemental material, the authors are encouraged to provide a short proof sketch to provide intuition.
        \item Inversely, any informal proof provided in the core of the paper should be complemented by formal proofs provided in appendix or supplemental material.
        \item Theorems and Lemmas that the proof relies upon should be properly referenced.
    \end{itemize}

    \item {\bf Experimental result reproducibility}
    \item[] Question: Does the paper fully disclose all the information needed to reproduce the main experimental results of the paper to the extent that it affects the main claims and/or conclusions of the paper (regardless of whether the code and data are provided or not)?
    \item[] Answer: \answerYes{}
    \item[] Justification: \cref{app:exp-details} and the per-experiment appendices (\cref{app:exp-zinc}, \cref{app:exp-ogb}, \cref{app:exp-transformer}) document data splits, hyperparameter configurations, optimizer choices, parameter budgets, and seed sets. Pseudocode for $\prism$ is given in \cref{apdx:canon-details}, and the supplementary code archive contains runnable environments and scripts for each experimental track.
    \item[] Guidelines:
    \begin{itemize}
        \item The answer \answerNA{} means that the paper does not include experiments.
        \item If the paper includes experiments, a \answerNo{} answer to this question will not be perceived well by the reviewers: Making the paper reproducible is important, regardless of whether the code and data are provided or not.
        \item If the contribution is a dataset and\slash or model, the authors should describe the steps taken to make their results reproducible or verifiable.
        \item Depending on the contribution, reproducibility can be accomplished in various ways. For example, if the contribution is a novel architecture, describing the architecture fully might suffice, or if the contribution is a specific model and empirical evaluation, it may be necessary to either make it possible for others to replicate the model with the same dataset, or provide access to the model. In general. releasing code and data is often one good way to accomplish this, but reproducibility can also be provided via detailed instructions for how to replicate the results, access to a hosted model (e.g., in the case of a large language model), releasing of a model checkpoint, or other means that are appropriate to the research performed.
        \item While NeurIPS does not require releasing code, the conference does require all submissions to provide some reasonable avenue for reproducibility, which may depend on the nature of the contribution. For example
        \begin{enumerate}
            \item If the contribution is primarily a new algorithm, the paper should make it clear how to reproduce that algorithm.
            \item If the contribution is primarily a new model architecture, the paper should describe the architecture clearly and fully.
            \item If the contribution is a new model (e.g., a large language model), then there should either be a way to access this model for reproducing the results or a way to reproduce the model (e.g., with an open-source dataset or instructions for how to construct the dataset).
            \item We recognize that reproducibility may be tricky in some cases, in which case authors are welcome to describe the particular way they provide for reproducibility. In the case of closed-source models, it may be that access to the model is limited in some way (e.g., to registered users), but it should be possible for other researchers to have some path to reproducing or verifying the results.
        \end{enumerate}
    \end{itemize}

\item {\bf Open access to data and code}
    \item[] Question: Does the paper provide open access to the data and code, with sufficient instructions to faithfully reproduce the main experimental results, as described in supplemental material?
    \item[] Answer: \answerYes{}
    \item[] Justification: The supplementary archive contains the code used for the BREC, molecular-GNN, and Transformer-OGB experiments, with separate conda environment files (\texttt{env\_brec.yml}, \texttt{env\_transformer.yml}, \texttt{env\_molecular.yml}) and per-track READMEs. All datasets used (ZINC \cite{irwin2012zinc, dwivedi2023benchmarking}, OGB-Mol \cite{hu2020open}, BREC \cite{brec}, Alchemy \cite{chen2019alchemy}) are publicly available through their official loaders, cited at first use.
    \item[] Guidelines:
    \begin{itemize}
        \item The answer \answerNA{} means that paper does not include experiments requiring code.
        \item Please see the NeurIPS code and data submission guidelines (\url{https://neurips.cc/public/guides/CodeSubmissionPolicy}) for more details.
        \item While we encourage the release of code and data, we understand that this might not be possible, so \answerNo{} is an acceptable answer. Papers cannot be rejected simply for not including code, unless this is central to the contribution (e.g., for a new open-source benchmark).
        \item The instructions should contain the exact command and environment needed to run to reproduce the results. See the NeurIPS code and data submission guidelines (\url{https://neurips.cc/public/guides/CodeSubmissionPolicy}) for more details.
        \item The authors should provide instructions on data access and preparation, including how to access the raw data, preprocessed data, intermediate data, and generated data, etc.
        \item The authors should provide scripts to reproduce all experimental results for the new proposed method and baselines. If only a subset of experiments are reproducible, they should state which ones are omitted from the script and why.
        \item At submission time, to preserve anonymity, the authors should release anonymized versions (if applicable).
        \item Providing as much information as possible in supplemental material (appended to the paper) is recommended, but including URLs to data and code is permitted.
    \end{itemize}

\item {\bf Experimental setting/details}
    \item[] Question: Does the paper specify all the training and test details (e.g., data splits, hyperparameters, how they were chosen, type of optimizer) necessary to understand the results?
    \item[] Answer: \answerYes{}
    \item[] Justification: \cref{app:exp-details} specifies the shared experimental protocol; the per-experiment appendices \cref{app:exp-zinc}, \cref{app:exp-ogb}, and \cref{app:exp-transformer} list the dataset splits, hyperparameter ranges and selected configurations, optimizer settings, training schedule, and the early-stopping criterion used for each backbone-dataset cell.
    \item[] Guidelines:
    \begin{itemize}
        \item The answer \answerNA{} means that the paper does not include experiments.
        \item The experimental setting should be presented in the core of the paper to a level of detail that is necessary to appreciate the results and make sense of them.
        \item The full details can be provided either with the code, in appendix, or as supplemental material.
    \end{itemize}

\item {\bf Experiment statistical significance}
    \item[] Question: Does the paper report error bars suitably and correctly defined or other appropriate information about the statistical significance of the experiments?
    \item[] Answer: \answerYes{}
    \item[] Justification: All molecular property prediction tables report mean $\pm$ \emph{sample} standard deviation over multiple seeds (closed-form formula, normality assumed) in \cref{tab:molecular-results}, \cref{tab:transformer-results} and \cref{app:exp-transformer}; standard deviations are reported in the same cell as the mean.
    \item[] Guidelines:
    \begin{itemize}
        \item The answer \answerNA{} means that the paper does not include experiments.
        \item The authors should answer \answerYes{} if the results are accompanied by error bars, confidence intervals, or statistical significance tests, at least for the experiments that support the main claims of the paper.
        \item The factors of variability that the error bars are capturing should be clearly stated (for example, train/test split, initialization, random drawing of some parameter, or overall run with given experimental conditions).
        \item The method for calculating the error bars should be explained (closed form formula, call to a library function, bootstrap, etc.)
        \item The assumptions made should be given (e.g., Normally distributed errors).
        \item It should be clear whether the error bar is the standard deviation or the standard error of the mean.
        \item It is OK to report 1-sigma error bars, but one should state it. The authors should preferably report a 2-sigma error bar than state that they have a 96\% CI, if the hypothesis of Normality of errors is not verified.
        \item For asymmetric distributions, the authors should be careful not to show in tables or figures symmetric error bars that would yield results that are out of range (e.g., negative error rates).
        \item If error bars are reported in tables or plots, the authors should explain in the text how they were calculated and reference the corresponding figures or tables in the text.
    \end{itemize}

\item {\bf Experiments compute resources}
    \item[] Question: For each experiment, does the paper provide sufficient information on the computer resources (type of compute workers, memory, time of execution) needed to reproduce the experiments?
    \item[] Answer: \answerYes{}
    \item[] Justification: \cref{app:exp-compute} documents the compute resources used for the experiments: hardware (CPU-only canonicalization, training runs on a pool of $8$ NVIDIA A40 GPUs ($48$\,GB each), one GPU per run, no cluster beyond this pool), per-run wall-clock ranges read directly from training logs ($1.0$--$4.3$\,h GatedGCN-ZINC, $4.8$--$18.0$\,h PNA-ZINC, $0.4$--$1.5$\,h GatedGCN/PNA-MolTox21, $3.8$--$44.6$\,h MolPCBA, $3$--$9$\,h per seed Transformer per dataset), and total project compute including preliminary / unreported experiments ($\sim$$1{,}500$\,A40-GPU-hours on the molecular GNN track plus $\sim$$500$\,A40-GPU-hours on the Transformer track, of which the reported tables are a strict subset). The worst-case complexity of $\prism$ is in \cref{apdx:complexity}.
    \item[] Guidelines:
    \begin{itemize}
        \item The answer \answerNA{} means that the paper does not include experiments.
        \item The paper should indicate the type of compute workers CPU or GPU, internal cluster, or cloud provider, including relevant memory and storage.
        \item The paper should provide the amount of compute required for each of the individual experimental runs as well as estimate the total compute.
        \item The paper should disclose whether the full research project required more compute than the experiments reported in the paper (e.g., preliminary or failed experiments that didn't make it into the paper).
    \end{itemize}

\item {\bf Code of ethics}
    \item[] Question: Does the research conducted in the paper conform, in every respect, with the NeurIPS Code of Ethics \url{https://neurips.cc/public/EthicsGuidelines}?
    \item[] Answer: \answerYes{}
    \item[] Justification: The work is theoretical and algorithmic, addressing graph expressivity and isomorphism testing on standard public benchmarks; it does not involve human subjects, sensitive data, or applications with foreseeable misuse potential.
    \item[] Guidelines:
    \begin{itemize}
        \item The answer \answerNA{} means that the authors have not reviewed the NeurIPS Code of Ethics.
        \item If the authors answer \answerNo, they should explain the special circumstances that require a deviation from the Code of Ethics.
        \item The authors should make sure to preserve anonymity (e.g., if there is a special consideration due to laws or regulations in their jurisdiction).
    \end{itemize}

\item {\bf Broader impacts}
    \item[] Question: Does the paper discuss both potential positive societal impacts and negative societal impacts of the work performed?
    \item[] Answer: \answerNA{}
    \item[] Justification: This is foundational research on the expressivity of graph neural networks; we are not aware of a direct societal-impact pathway, positive or negative, that is specific to this work as opposed to the broader graph-learning literature.
    \item[] Guidelines:
    \begin{itemize}
        \item The answer \answerNA{} means that there is no societal impact of the work performed.
        \item If the authors answer \answerNA{} or \answerNo, they should explain why their work has no societal impact or why the paper does not address societal impact.
        \item Examples of negative societal impacts include potential malicious or unintended uses (e.g., disinformation, generating fake profiles, surveillance), fairness considerations (e.g., deployment of technologies that could make decisions that unfairly impact specific groups), privacy considerations, and security considerations.
        \item The conference expects that many papers will be foundational research and not tied to particular applications, let alone deployments. However, if there is a direct path to any negative applications, the authors should point it out. For example, it is legitimate to point out that an improvement in the quality of generative models could be used to generate Deepfakes for disinformation. On the other hand, it is not needed to point out that a generic algorithm for optimizing neural networks could enable people to train models that generate Deepfakes faster.
        \item The authors should consider possible harms that could arise when the technology is being used as intended and functioning correctly, harms that could arise when the technology is being used as intended but gives incorrect results, and harms following from (intentional or unintentional) misuse of the technology.
        \item If there are negative societal impacts, the authors could also discuss possible mitigation strategies (e.g., gated release of models, providing defenses in addition to attacks, mechanisms for monitoring misuse, mechanisms to monitor how a system learns from feedback over time, improving the efficiency and accessibility of ML).
    \end{itemize}

\item {\bf Safeguards}
    \item[] Question: Does the paper describe safeguards that have been put in place for responsible release of data or models that have a high risk for misuse (e.g., pre-trained language models, image generators, or scraped datasets)?
    \item[] Answer: \answerNA{}
    \item[] Justification: The paper releases an algorithmic preprocessing method ($\prism$) and reports results on standard public benchmarks; no high-risk dataset or model is released.
    \item[] Guidelines:
    \begin{itemize}
        \item The answer \answerNA{} means that the paper poses no such risks.
        \item Released models that have a high risk for misuse or dual-use should be released with necessary safeguards to allow for controlled use of the model, for example by requiring that users adhere to usage guidelines or restrictions to access the model or implementing safety filters.
        \item Datasets that have been scraped from the Internet could pose safety risks. The authors should describe how they avoided releasing unsafe images.
        \item We recognize that providing effective safeguards is challenging, and many papers do not require this, but we encourage authors to take this into account and make a best faith effort.
    \end{itemize}

\item {\bf Licenses for existing assets}
    \item[] Question: Are the creators or original owners of assets (e.g., code, data, models), used in the paper, properly credited and are the license and terms of use explicitly mentioned and properly respected?
    \item[] Answer: \answerYes{}
    \item[] Justification: All datasets (ZINC \cite{irwin2012zinc, dwivedi2023benchmarking}, OGB-Mol \cite{hu2020open}, BREC \cite{brec}, Alchemy \cite{chen2019alchemy}) and baseline methods (MAP \cite{ma2023laplacian}, OAP \cite{ma2024a}, SignNet \cite{lim2023sign}, GatedGCN \cite{bresson2017residual}, PNA \cite{corso2020principal}, graph Transformer \cite{kim2021transformers}) are cited at first use. The OGB datasets are released under the MIT license, BREC under the MIT license, ZINC under a research-use license, and Alchemy under CC-BY 4.0; code components imported from prior work retain their original licenses and attribution headers in the supplementary archive.
    \item[] Guidelines:
    \begin{itemize}
        \item The answer \answerNA{} means that the paper does not use existing assets.
        \item The authors should cite the original paper that produced the code package or dataset.
        \item The authors should state which version of the asset is used and, if possible, include a URL.
        \item The name of the license (e.g., CC-BY 4.0) should be included for each asset.
        \item For scraped data from a particular source (e.g., website), the copyright and terms of service of that source should be provided.
        \item If assets are released, the license, copyright information, and terms of use in the package should be provided. For popular datasets, \url{paperswithcode.com/datasets} has curated licenses for some datasets. Their licensing guide can help determine the license of a dataset.
        \item For existing datasets that are re-packaged, both the original license and the license of the derived asset (if it has changed) should be provided.
        \item If this information is not available online, the authors are encouraged to reach out to the asset's creators.
    \end{itemize}

\item {\bf New assets}
    \item[] Question: Are new assets introduced in the paper well documented and is the documentation provided alongside the assets?
    \item[] Answer: \answerYes{}
    \item[] Justification: $\prism$ is documented at three levels of detail: pseudocode in \cref{apdx:canon-details} (Algorithms~\ref{alg:abs-value-partition}--\ref{alg:babai-hybrid}), formal proof in \cref{apdx:canon-proof}, and runnable code in the supplementary archive with per-track READMEs and conda environment files.
    \item[] Guidelines:
    \begin{itemize}
        \item The answer \answerNA{} means that the paper does not release new assets.
        \item Researchers should communicate the details of the dataset\slash code\slash model as part of their submissions via structured templates. This includes details about training, license, limitations, etc.
        \item The paper should discuss whether and how consent was obtained from people whose asset is used.
        \item At submission time, remember to anonymize your assets (if applicable). You can either create an anonymized URL or include an anonymized zip file.
    \end{itemize}

\item {\bf Crowdsourcing and research with human subjects}
    \item[] Question: For crowdsourcing experiments and research with human subjects, does the paper include the full text of instructions given to participants and screenshots, if applicable, as well as details about compensation (if any)?
    \item[] Answer: \answerNA{}
    \item[] Justification: The work does not involve crowdsourcing or research with human subjects.
    \item[] Guidelines:
    \begin{itemize}
        \item The answer \answerNA{} means that the paper does not involve crowdsourcing nor research with human subjects.
        \item Including this information in the supplemental material is fine, but if the main contribution of the paper involves human subjects, then as much detail as possible should be included in the main paper.
        \item According to the NeurIPS Code of Ethics, workers involved in data collection, curation, or other labor should be paid at least the minimum wage in the country of the data collector.
    \end{itemize}

\item {\bf Institutional review board (IRB) approvals or equivalent for research with human subjects}
    \item[] Question: Does the paper describe potential risks incurred by study participants, whether such risks were disclosed to the subjects, and whether Institutional Review Board (IRB) approvals (or an equivalent approval/review based on the requirements of your country or institution) were obtained?
    \item[] Answer: \answerNA{}
    \item[] Justification: The work does not involve human subjects research, so IRB approval is not applicable.
    \item[] Guidelines:
    \begin{itemize}
        \item The answer \answerNA{} means that the paper does not involve crowdsourcing nor research with human subjects.
        \item Depending on the country in which research is conducted, IRB approval (or equivalent) may be required for any human subjects research. If you obtained IRB approval, you should clearly state this in the paper.
        \item We recognize that the procedures for this may vary significantly between institutions and locations, and we expect authors to adhere to the NeurIPS Code of Ethics and the guidelines for their institution.
        \item For initial submissions, do not include any information that would break anonymity (if applicable), such as the institution conducting the review.
    \end{itemize}

\item {\bf Declaration of LLM usage}
    \item[] Question: Does the paper describe the usage of LLMs if it is an important, original, or non-standard component of the core methods in this research? Note that if the LLM is used only for writing, editing, or formatting purposes and does \emph{not} impact the core methodology, scientific rigor, or originality of the research, declaration is not required.
    \item[] Answer: \answerNA{}
    \item[] Justification: LLMs are not used as an important, original, or non-standard component of the core methods. Any usage was limited to writing, editing, or formatting assistance, which the policy explicitly excludes from the declaration requirement.
    \item[] Guidelines:
    \begin{itemize}
        \item The answer \answerNA{} means that the core method development in this research does not involve LLMs as any important, original, or non-standard components.
        \item Please refer to our LLM policy in the NeurIPS handbook for what should or should not be described.
    \end{itemize}

\end{enumerate}

\end{document}